\def\eqref#1{equation~\ref{#1}}
\def\1{\bm{1}}
\DeclareMathAlphabet{\mathsfit}{\encodingdefault}{\sfdefault}{m}{sl}
\SetMathAlphabet{\mathsfit}{bold}{\encodingdefault}{\sfdefault}{bx}{n}
\newcommand{\E}{\mathbb{E}}
\newcommand{\R}{\mathbb{R}}
\newcommand{\squeeze}{\textstyle}
\theoremstyle{plain}
\newtheorem{theorem}{Theorem}[section]
\newtheorem{lemma}[theorem]{Lemma}
\newtheorem{corollary}[theorem]{Corollary}
\theoremstyle{definition}
\theoremstyle{remark}
\newtheorem*{rep@theorem}{\rep@title}
\newcommand{\newreptheorem}[2]{%
\newenvironment{rep#1}[1]{%
 \def\rep@title{#2 \ref{##1}}%
 \begin{rep@theorem}}%
 {\end{rep@theorem}}}
\newcommand{\tg}{\widetilde{g}}
\newcommand{\tih}{\widetilde{h}}
\newcommand{\Hide}[1]{{}}
\renewcommand{\paragraph}[1]{ \noindent \textbf{#1}}
\begin{document}

\twocolumn[
\runningtitle{Communication-Efficient Federated Learning With Data and Client Heterogeneity} 

\aistatstitle{Communication-Efficient Federated Learning \\ With Data and Client Heterogeneity}

\aistatsauthor{ Hossein Zakerinia\textsuperscript{1} \And Shayan Talaei\textsuperscript{2} \And Giorgi Nadiradze \textsuperscript{3} \And Dan Alistarh\textsuperscript{1} }

\runningauthor{ Hossein Zakerinia, Shayan Talaei, Giorgi Nadiradze, Dan Alistarh}

\aistatsaddress{ \textsuperscript{1} Institute of Science and Technology Austria (ISTA) \\
 \textsuperscript{2} Stanford University\\
  \textsuperscript{3} Aptos Labs} ]

\begin{abstract}
  Federated Learning (FL) enables large-scale distributed training of machine learning models, while still allowing individual nodes to maintain data locally. 
    However, executing FL at scale comes with inherent practical challenges: 
    1) heterogeneity of the local node data distributions, 
    2) heterogeneity of node computational speeds (asynchrony), 
    but also 3) constraints in the amount of communication between the clients and the server.   
    In this work, we present the first variant of the classic federated averaging (FedAvg) algorithm 
    which, at the same time, supports data heterogeneity, partial client asynchrony, and communication compression. 
    Our algorithm comes with a novel, rigorous analysis showing that, in spite of these system relaxations, 
    it can provide similar convergence to FedAvg in interesting parameter regimes. 
    Experimental results in the rigorous LEAF benchmark on setups of up to $300$ nodes show that our algorithm ensures fast  convergence for standard federated tasks, improving upon prior quantized and asynchronous approaches. 
\end{abstract}

\section{INTRODUCTION}

In Federated learning (FL)~\citep{konevcny2016federated, mcmahan2017communication}, multiple clients, orchestrated by a central authority, cooperate to jointly optimize a machine learning model given their local data. 
The basic FL algorithm is \emph{federated averaging (FedAvg)}~\citep{mcmahan2017communication}, in which a central authority periodically communicates a model to all clients; clients optimize this model locally, and send back the resulting updates to the server, which incorporates them, proceeding to the next iteration. This approach is effective in practice~\citep{li2020federated}, and motivates a rich line of research analyzing its convergence~\citep{stich2018local, haddadpour2019convergence}, as well as improved variants~\citep{reddi2020adaptive, karimireddy2020scaffold, li2021canita}. 

Despite its popularity, scaling federated learning runs into a number of fundamental challenges~\citep{kairouz2021advances}. 
One such obstacle is \emph{data heterogeneity}: the fact that the clients' local data distributions may be different, 
which can lead to difficulties in optimization~\cite{fednova, karimireddy2020scaffold}. 
A second barrier is \emph{node heterogeneity}: as practical deployments contain large node counts, 
which may execute \emph{at different speeds}, it may be infeasible for a central server to orchestrate rounds across participants~\citep{bonawitz2019towards, wu2020safa, FedBuff}. 
The third barrier is the \emph{communication cost} of the parameter updates~\citep{kairouz2021advances}, which can overwhelm communication-limited clients~\citep{jhunjhunwala2021adaptive,li2021canita,wang2022communication}. 

In a scalable FL system, all these three barriers need to be mitigated: 
for instance, communication-reduction may not be effective if the server has to \emph{synchronously} wait for all of the clients to complete a communication round. 
Yet, supporting all these system relaxations jointly is extremely challenging: it is known~\citep{fednova} that one cannot allow both \emph{data heterogeneity} and \emph{client asynchrony} in full generality without impacting the objective; 
at the same time, none of the existing communication-efficient methods support asynchrony~\citep{jhunjhunwala2021adaptive,li2021canita,wang2022communication}. 
Thus, it is interesting to ask to what extent communication-compression, asynchrony, and heterogenous data \emph{can be jointly supported} in FL.

\paragraph{Contribution.} We address this question by proposing an algorithm for \textbf{Qu}antized \textbf{A}synchronous \textbf{F}ederated \textbf{L}earning called QuAFL, which is an extension of FedAvg supporting heterogeneous data, communication compression, and partial asynchrony. 
We provide a rigorous theoretical analysis of its convergence, showing that it asymptotically matches FedAvg in interesting parameter regimes, and experiments showing that it can also lead to practical gains. 
 
\paragraph{Overview.} 
At a high level, QuAFL follows the structure of FedAvg: in each ``logical round,'' the server samples $s$ clients uniformly at random, and sends them a (compressed) copy of its current model. 
As soon as a client receives the server's message, it performs two steps: first, it replies to the server with a (compressed) copy of its local progress, obtained via optimization steps on its local data, since its last server interaction.
Second, the client adopts the server's model and will proceed to perform up to $K \geq 1$ local optimization steps on it in the future, at its own local speed. 
At the end of the round, the server collects the clients' messages and integrates them into its parameter estimate. 

One key difference from FedAvg is that, in QuAFL, the entire process is \emph{partially-asynchronous}: 
clients perform local steps each at their own speed, independently of the server's round structure, on their local version of the parameters. 
Thus, when sampled by the server, a client may still be in the middle of performing its $K$ local steps since its last interaction, or may not yet have completed any local steps at all! 
In QuAFL, the clients' progress may be partial---as a client may not have completed its $K$ local steps---and is always computed on a stale version of the parameter, previously sent by the server. 
The second key difference is that, in QuAFL, all client-server communication is compressed using a fast customized quantizer.

There are two analytical challenges in this setting: the first is in showing that the optimization process can still converge in this highly-decoupled setting, in which clients proceed at different speeds, can be interrupted asynchronously by the server, and sometimes do not make any progress at all. 
The second challenge is to interface asynchrony with communication compression: as detailed later, using standard quantizers~\citep{alistarh2016qsgd, karimireddy2019error} induces error proportional to the second-moment gradient bound, which leads to both poor practical performance, and major difficulties in the analysis.   

Our analysis circumvents these obstacles, and shows that QuAFL can provide surprisingly strong convergence guarantees, which match those of FedAvg in certain parameter regimes. 
We achieve this via a new potential argument, which roughly shows that, under standard assumptions, the discrepancy between the client and server models is always bounded, and by leveraging an instance of position-aware quantization~\citep{davies2021new}, which has the property that the compression error only depends on the \emph{distance} between the models at the server and the clients.  
Our analysis approach controls the ``noise'' due to model inconsistency precisely, 
ensuring that local models are close enough to allow correct encoding and decoding using the customized positional quantizer. 

We validate our algorithm experimentally in the rigorous LEAF benchmark~\citep{caldas2018leaf}, on a range of standard tasks. 
We show that QuAFL can compress updates by more than $3\times$ without significant loss of convergence, 
and can even withstand a large fraction of ``slow'' clients submitting infrequent or even no updates, also in non-i.i.d. data settings.  
Moreover, in settings where client computation speeds are heterogenous, QuAFL provides end-to-end speedup in terms of iteration times, since the server can progress without waiting for all clients to complete their local computation, and is also competitive with asynchronous FL approaches such as FedBuff~\citep{FedBuff}. 

\paragraph{Related Work.} There has been significant work on communication-compression for FedAvg~\citep{philippenko2020bidirectional, reisizadeh2020fedpaq, jin2020stochastic, haddadpour2021federated}.  
However, virtually all prior work considers \emph{synchronous} iterations.  
~\cite{reisizadeh2020fedpaq} introduced a variant of FedAvg which supporting standard compressors, and provides convergence bounds under the assumption of i.i.d. client data.~\cite{jin2020stochastic} examined the signSGD quantizer~\citep{seide2014sgd1bit} for FedAvg, providing convergence guarantees; however, the rates are polynomial in the 
\emph{model dimension} $d$, rendering them less practically meaningful.~\cite{haddadpour2021federated} proposed a family of algorithms with communication-compression; yet, to prove convergence in the challenging heterogeneous-data setting, they require very strong technical assumptions on quantized gradients~\citep[Assumption 5]{haddadpour2021federated}.~\cite{chen2021communication} also considered update compression, but under convex losses, coupled with a strong second-moment bound assumption on the gradients. 
Finally,~\cite{jhunjhunwala2021adaptive} adapt the degree of compression during the execution, proving convergence only under i.i.d. data sampling. 
 \emph{In sum, all prior work on compression for FedAvg requires at least one non-standard assumption. By contrast, our analysis works for non-convex losses, non-i.i.d. data, without second-moment gradient bounds. In addition, we support partial client asynchrony.} 

A complementary approach has been to investigate FL optimizers with faster convergence~\citep{mishchenko2019distributed, karimireddy2020scaffold},  or adaptive optimizers~\citep{reddi2020adaptive, tong2020effective}. 
These approaches can be compatible with communication-compression~\citep{gorbunov2021marina, li2021canita, wang2022communication}. 
Specifically, for non-convex losses, MARINA~\citep{gorbunov2021marina} offers theoretical guarantees both in terms of convergence and bits transmitted. However, MARINA is synchronous; moreover, it periodically computes full gradients and transmits uncompressed model updates, and requires complex synchronization and variance-reduction to compensate for quantization noise. 
DASHA~\citep{tyurin2022dasha} proposed a family of theoretical methods which extend MARINA with Momentum Variance Reduction (MVR)~\citep{cutkosky2019momentum}, partially relaxing the coupling between the server and the workers. 
\emph{By contrast to this work, we focus on obtaining a practical algorithm in a highly-decoupled model, with competitive convergence relative to vanilla FedAvg: we  always transmit compressed, low-precision messages, and consider  asynchronous communication and client progress.}

FedBuff~\citep{FedBuff} is a state-of-the-art \emph{practical} approach for asynchronous FL, 
where nodes aggregate their updates asynchronously in a shared buffer; 
once the buffer is full, the server updates the global model and communicates it. 
\emph{Our convergence bounds are competitive to FedBuff, but in a more general setting, as we do not assume a gradient bound.}  
Experimentally, QuAFL achieves better performance relative to FedBuff in the non-i.i.d. case: 
the intuitive reason is that, in this case, slower clients will consistently contribute less frequently to the buffer, 
meaning that convergence could be ``skewed'' towards fast clients. 
Parallel work by~\cite{koloskova2022sharper} and~\cite{mishchenko2022asynchronous} provide sharp convergence bounds for asynchronous SGD in a related but different model, with arbitrary worst-case delays. 
Specifically, they prove convergence rates that are similar to ours in the case of a single client sampled at a time. 
By contrast, our work considers a \emph{probabilistic model} on the delays, similar to~\cite{cannelli2020asynchronous}. 
The two models are incomparable: we allow the worst-case delay to be unbounded, but assume that each client $i$ proceeds at an expected speed $H_i$. 
In addition, in our algorithm, the clients can be interrupted by the server during their local computation, which leads to further difficulties in the analysis, and practical improvements in terms of waiting times.  
FedNova \citep{fednova} works in a \emph{synchronous} model, but it allows each node to perform a different number of local steps at each round. One main difference between our work and FedNova is that in the asynchronous setting, the $(n-s)$ clients which are not participating in a round can still perform local updates at their own speed, while in the synchronous scenario of FedNova, these $(n-s)$ clients will be idle. 
We show that asynchrony can result in faster wall-clock time convergence. Similarly, FedNova requires that, at each round, all participating nodes perform at least one local step, which results in all nodes having to wait for the slowest client; we allow nodes to perform zero steps at a round. Additionally, the bounded dissimilarity assumption in FedNova is stronger than the standard literature definition, which we use in our analysis.

\section{THE ALGORITHM}
\subsection{System Overview}
\label{sec:model}

\paragraph{Optimization Setting.} 
We assume a distributed system with one coordinator and $n$ workers, jointly minimizing a $d$-dimensional,   differentiable  function $f: \R^d \rightarrow \R$.  
We consider empirical risk minimization (ERM), where data samples are located at the $n$ nodes.
Each agent $i$ has a local function $f_i$ associated to its own data partition, i.e   $\forall x \in \mathbb{R}^d$: $f(x)=\sum_{i=1}^n {f_i(x)}/{n}.$
The goal is to converge on a model $x^*$ which minimizes the empirical loss. 
Clients run a distributed variant of SGD, coordinated by the central node. 
Each client $i$ is able to obtain  \emph{unbiased stochastic gradients} 
$\tg_i$ of its own local function $f_i$, i.e. $\E [\tg_i(x)] = \nabla f_i(x)$, sampling i.i.d. from its local distribution.  

\paragraph{System Model.} 
Our algorithm will follow the general pattern of FedAvg, in that the server periodically polls a subset of clients, sending them its model. 
However, the interaction pattern is \emph{(partially) asynchronous}: when contacted, clients immediately communicate their local progress since the last interaction, even though it may be partial, and afterwards proceed to take local steps on the new model communicated by the server, until their next interaction. 
Moreover, the clients themselves may progress at \emph{heterogeneous speeds}: 
the number of local steps taken by client $i$ between server interactions $t$ and $t + 1$ is a random variable $\mathcal{H}_i$, 
taking values in $\{0,1,2,\ldots, K\}$, where $K$ is an upper bound on how many steps a client can take in isolation.  
 \emph{Our only assumption is that, for each client $i$ and every server interaction, the \emph{expected value} of $\mathcal{H}_i$, denoted by $H_i$, exists and is $> 0$.}  
 That is, on average, each client has a fixed speed, and makes non-zero progress. 
 \emph{However, clients may progress at different speeds, and the individual step distributions $\mathcal{H}_i$ can be completely different.} 
\emph{We emphasize that $\mathcal{H}_i$ \emph{can be $0$}, meaning that $i$ has taken no steps between  interactions.}

\begin{algorithm*}[!ht]
\caption{Pseudocode for the QuAFL Algorithm at Server and Clients.}
\label{algo:quafl}
\footnotesize
\% Initial models $X_0=X^1=X^2=...=X^n = 0^d$, number of local steps $K$\\
\% Encoding ($Enc(A)$) and decoding ($Dec(B, Enc(A))$) functions, with common parametrization.\\ 
\% For each client $i$, the expected local steps  between two consecutive server interactions is $H_i$. \\ 
\% For each client $i$, we define weights $\eta_i = \frac{H_{\min}}{H_i}$ where $H_{\min}$ is the minimum speed among $H_i$.\\
\% \textbf{At the Server}:
\begin{algorithmic}[1]
\For{$t=0$ \textbf{to} $T-1$} {\color{blue} \qquad \qquad  \qquad \qquad \qquad \space \% Each round takes a fixed amount of time.}
    \State {Server chooses $s$ clients uniformly at random, let $S$ be the resulting set}.
    \ForAll{clients $i \in S$}  
        \State Server sends $Enc(X_t)$ to the client $i$.
        \State Server receives $Enc(Y^i)$ from client $i$ {\color{blue} \quad \% $Y^i$ is the client's model with the progress since the last interaction.}
        \State $Q(Y^i) \gets Dec(X_t, Enc(Y^i))$ {\color{blue} \qquad \qquad  \quad \% Decodes quantized client messages relative to  $X_t$} 
    \EndFor
    \State $X_{t+1} = \frac{1}{s+1} X_t + \frac{1}{s+1} \sum_{i \in S} Q(Y^i)$
\EndFor
\end{algorithmic}

\% \textbf{At Client} $i$:\\
\% Upon (asynchronous) contact from the server run \Call{InteractWithServer}{} \\
\% \textbf{Local variables:} \\
    \%  $X^i$ stores the base client model, following the last server interaction. Initially $0^d$.\\ 
    \% $\tih_i$ accumulates local gradient steps since last server interaction, initially $0^d$.  

\begin{algorithmic}[1]
\footnotesize
\Function{InteractWithServer}{}
    \State ${MSG}_i
    \gets Enc(X^i - \eta \eta_i \tih_i)$ {\color{blue}  \qquad \qquad  \qquad \quad  \% Client $i$ compresses its local progress since last contacted.}
    \State Client sends $MSG_i$ to the server.
    \State Client receives $Enc(X_t)$ from the server, where $t$ is the current server time.
    \State $Q(X_t) \gets Dec(X^i, Enc(X_t))$ {\color{blue}  \qquad \qquad  \qquad \space \% Client decodes the message \emph{using its own model} as reference point.}
    \State $X^i = \frac{1}{s+1} Q(X_t) + \frac{s}{s+1} (X^i - \eta \eta_i \tih_i)$ {\color{blue} \qquad \quad \space \% The client then updates its local model}
    \State $ \Call{LocalUpdates}{X^i, K}$   {\color{blue} \qquad \qquad  \qquad \qquad \space \% Finally, The client goes back to compute new local updates.}
    \State \Call{Wait}{~}
\EndFunction
\end{algorithmic}
\begin{algorithmic}[1]
\footnotesize
\Function{LocalUpdates}{$X^i$, $K$}
    \State $\tih_i = 0$ {\color{blue}  \qquad \qquad   \qquad \qquad\qquad \qquad  \qquad \qquad\% local gradient accumulator}
    \For{$q=0$ to $K-1$}
            \State $\tih_i^q = \tg_i(X^i - \eta \sum_{\ell=0}^{q-1}\tih_i^{\ell}) $ {\color{blue} \qquad \qquad\qquad \space \space \% compute the $q$th local gradient}
            \State $\tih_i =  \tih_i+ \tih_i^q$ {\color{blue} \qquad \qquad\qquad \qquad\qquad \qquad\% add it to the accumulator}
    \EndFor
\EndFunction

\end{algorithmic}
\end{algorithm*}

\subsection{Algorithm Description}
\label{sec:algo-description}

\paragraph{Overview.} 
The pseudocode for QuAFL is given in Algorithm~\ref{algo:quafl}.  
From the server's perspective, the execution is similar to FedAvg: 
we execute logical ``rounds,'' where in each round $t$ the server polls a subset of $s$ workers, sending them a compressed version of its model $Enc(X_t)$. 
However, the server \emph{does not wait for workers to perform local steps over $X_t$ in this round}: 
instead, it immediately receives each worker's local progress $Enc(Y^i)$ \emph{since worker $i$'s last server interaction}. 
(Thus, the server will observe progress on $X_t$ from clients only on their \emph{next} interaction.) 
The received progress is de-quantized, and integrated into the server's local model \emph{via weighted averaging}. 

From the other perspective, a contacted worker $i$ could be either idle when polled by the server, having completed its $K$ steps since the last server interaction, 
or still in the process of performing local steps. In either case, the worker \emph{immediately} quantizes its possibly-partial local progress $Y^i$ since the last server contact, and sends  it in quantized form $Enc(Y^i)$  to the server. 
The worker then decodes the server's quantized message $Q(X_t)$, and updates its own local model $X^i$ correspondingly via weighted averaging.  
The worker then starts performing $K$ local steps on top of the updated local parameters, until its next server interaction. 
QuAFL relaxes the FedAvg pattern as follows.

\paragraph{Non-blocking Communication.} 
A key limitation of standard FedAvg is that the server has to wait for all contacted workers 
to compute their $K$ updates and transmit them each round, before moving on. 
In QuAFL, communication is \emph{non-blocking}: the contacted worker node $i$ \emph{immediately returns (a quantized version of) its local progress} since the last interaction to the server, without performing any computation. 
We emphasize that this progress may be incomplete, or even zero for some clients, and that it is computed with respect to the server's previously-communicated model, not the one just received. 
Conversely, the server does not wait for clients to take steps on its current model $X_t$; it will observe these updates in future interactions. 
This significantly reduces the server's waiting times, and allows QuAFL to ``pipeline'' several communication rounds over a fixed wall-clock time,
at the cost of supporting inconsistent client models in the analysis and in practice. 

\paragraph{Model Averaging.} 
This parameter inconsistency is handled via the server- and client-side averaging mechanism, which differs from the standard FedAvg iteration. 
Specifically, in each round, the server model ``weight'' is $\frac{1}{s+1}$, and it gets averaged with $s$ clients. 
Each of them, plus the server itself, gets a $\frac{1}{s+1}$ ``fraction'' of the server model. Thus, the server's model is evenly distributed among $s+1$ participants; in turn, the server receives a $\frac{1}{s+1}$ fraction of the local models of each client node it interacts with.  Crucially, this average does not change, but the local models \emph{move closer to the mean}. 
This idea will be reflected in our analysis, which works by first tracking convergence \emph{at the mean of client models}, since the change in average only depends on the stochastic gradient updates at the round, and on the prior mean. We show that mean convergence  implies  convergence at the server.

\paragraph{Partial Client Asynchrony.} 
As discussed in Section~\ref{sec:model}, clients progress at different speeds. Specifically, each client $i$ is assumed to take $H_i$ steps between two server interactions, 
in expectation. 
We support this in QuAFL as follows: the only shared information, maintained by the server, is $H_{\min}$, the lowest ``speed'' of any participating client. 
To address the difference in average speeds, each client will ``dampen'' its transmitted progress before transmitting it to the server, by a factor of $\eta_i = H_{\min} / H_i$.   
We will show that this is sufficient to maintain balance in the optimization objective, without losing any client privacy, 
as the server does not need to be aware of client speeds.

\paragraph{Fully-Quantized Communication.} 
For quantization, we employ a customized version of the lattice-based quantizer of~\cite{davies2021new}; its parametrization is described formally in Section~\ref{sec:analysis}. 
Quantization works via an encoding function $Enc(A)$, which maps $A$ to its quantized representation. 
To ``read'' message $Enc(A)$, a node calls the symmetric $Dec(B, Enc(A))$ function, 
which allows ``decoding'' of $Enc(A)$ with respect to a reference $B$, returning output $Q(A)$. 
As evident from the pseudocode, server-client communication in QuAFL is always quantized, as opposed to  prior methods~\citep{gorbunov2021marina}, 
where the server still transmits full-precision updates. 

\paragraph{The Issue with Standard Quantizers.} We emphasize that compressing via standard quantizers~\citep{alistarh2016qsgd}, 
induces error proportional to the model's norm, which is in principle unbounded. 
Thus, applying direct quantization to existing methods is not theoretically-justified, and experimentally it does not always lead to good results (see Section~\ref{sec:experiments}). 
Prior work in decentralized optimization~\cite{lu2020moniqua, nadiradze2021asynchronous} 
addressed this by only transmitting \emph{model updates}. 
This requires both additional memory at the client, or an unrealistic second-moment gradient bound. 
Our approach avoids both issues: it requires no extra memory, 
and makes no extra assumptions.

\section{CONVERGENCE ANALYSIS}
\label{sec:analysis}
\subsection{Analytical Assumptions}
We begin by stating the assumptions we make in the theoretical analysis of our algorithm. 
We assume the following for the global loss function $f$, the client losses $f_i$, and their stochastic gradients $\tg_i$:
\begin{enumerate}[itemsep=0.5pt, topsep=0.5pt, partopsep=-1pt, leftmargin=2em]
\item \textbf{Uniform Lower Bound:} 
    There exists $f_* \in \mathbb{R}$ such that $f(x) \geq f_*$ for all $x \in \mathbb{R}^d$. 

\item \textbf{Smooth Gradients}: 
    For any client $i$, the  gradient $\nabla f_i(x)$ is $L$-Lipschitz continuous for some $L>0$, i.e. for all $x, y\in \mathbb{R}^d$:
    $    \label{eqn:lipschitz_assumption_f_i}
        \|\nabla f_i(x) - \nabla f_i(y)\| \leq L\|x-y\|. 
    $
    
    \item \textbf{Bounded Variance}: 
    For any client $i$, the variance of the stochastic gradients is bounded by some $\sigma^2>0$, i.e. for all $x \in \mathbb{R}^d$: 
    $   \label{eqn:variancebound_i}
        \E \, \Big \|\tg_i\left(x\right)-\nabla f_i\left(x\right)\Big\|^2 \le {\sigma}^2.$
    
    \item \textbf{Bounded Dissimilarity}: There exist  constants $G^2\ge0$ and $B^2 \ge 1$, s.t. $ \forall x \in \mathbb{R}^d$: 
    \begin{equation*} 
        \squeeze
        \label{eqn:varsigmabound}
        \sum_{i=1}^n \frac{\|\nabla f_i\left(x\right)\|^2}{n} \le G^2 + B^2 \|\nabla f\left(x\right) \|^2.
    \end{equation*}
\end{enumerate}

 The first three conditions are  universal in distributed non-convex stochastic optimization, whereas the fourth encodes the fact that there must be a bound on the amount of divergence between the local distributions at the nodes in order to allow for joint optimization~\citep{karimireddy2020scaffold, jin2020stochastic, gorbunov2021marina}. 

In addition, we make the following assumption on the local progress performed by each node: 
\begin{enumerate}[itemsep=0.5pt, topsep=0.5pt, partopsep=-1pt]
\setcounter{enumi}{4}
\item \textbf{Probabilistic Progress:} 
For each client $i$, the expected number of local steps taken since the last interaction when contacted by the server is $0 < H_i \leq K$. 
\end{enumerate}

Clearly, a condition of this type is \emph{necessary}: if a client makes zero progress on average, we cannot converge to a consistent objective in the heterogeneous setting. 

\paragraph{Quantization Procedure.} 
Please recall the semantics of our quantization procedure, as described in Section~\ref{sec:algo-description}.
In this context, the quantizer has the following guarantees~\citep{davies2021new} (Lemma 23):
\begin{lemma}{(Lattice Quantization)}
\label{lem:quant}
Fix parameters $R$ and $\gamma > 0$. There exists a  quantization procedure defined by an encoding function $Enc_{R,\gamma} : \mathbb{R}^d \rightarrow {\{0,1\}}^*$ and a decoding function $Dec_{R,\gamma}=\mathbb{R}^d \times {\{0,1\}}^* \rightarrow \mathbb{R}^d$ such that, for any vector $x \in \mathbb{R}^d$ which we are trying to quantize, and any vector $y$ which is used by decoding, which we call the \emph{decoding key}, if $\|x-y\| \le R^{R^d}\gamma$ then
with probability at least $1-\log \log (\frac{\|x-y\|}{\gamma})O(R^{-d})$, the function $Q_{R, \gamma}(x)=Dec_{R, \gamma}(y, Enc_{R, \gamma}(x))$ has the following properties:
\begin{enumerate}[topsep=0.5pt, itemsep=-2pt, leftmargin=2em]
    \item (Unbiased decoding)  \, $\E[Q_{R, \gamma}(x)]=\E[Dec_{R, \gamma}(y, Enc_{R, \gamma}(x))]=x$;
    \item (Error bound) \, $\|Q_{R, \gamma}(x)-x\| \le (R^2+7)\gamma$;
    \item (Communication bound) \, $O\left(d \log(\frac{R}{\gamma}\|x-y\|\right))$ bits are needed to send $Enc_{R, \gamma}(x)$.
\end{enumerate}
\end{lemma}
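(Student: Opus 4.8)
Strictly speaking, the cleanest route is to observe that this is Lemma~23 of~\cite{davies2021new} verbatim, so the ``proof'' is a citation once one checks that the parametrization by $R$ and $\gamma$ matches theirs. If instead I wanted a self-contained argument, I would realize all three properties through the following lattice-quantization template, and the plan is as follows.

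\textbf{Construction.} I would quantize the displacement $x-y$ rather than $x$ itself, exploiting that $\|x-y\|\le R^{R^d}\gamma$ by hypothesis. Draw a uniformly random rotation $U\in O(d)$ shared between encoder and decoder, so that the coordinates of $U(x-y)$ are, with high probability, spread out to magnitude $O(\|x-y\|/\sqrt d)$ each. Quantize $U(x-y)$ with a fine grid of resolution $\approx\gamma/\sqrt d$ using \emph{subtractive dithering} coordinatewise, but transmit each coordinate only \emph{modulo} a coarse grid whose cell is a factor $R$ wider than the typical coordinate range; the decoder, knowing $Uy$, unwraps the transmitted residue to the correct fine-grid point. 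To bring the resolution down to $\gamma$ starting from the enormous a priori range $R^{R^d}\gamma$, I would iterate this refinement $O(\log\log(\|x-y\|/\gamma))$ times, each round quantizing the residual of the previous one.

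\textbf{The three properties.} Conditioned on the ``good'' event $\mathcal G$ that every unwrapping step recovers the correct fine-grid point, subtractive dithering makes each coordinate estimate, hence the vector estimate, unbiased, which is~(1); and the per-coordinate error is at most the fine cell width $O(\gamma/\sqrt d)$, so $\|Q_{R,\gamma}(x)-x\|^2\le\sum_{j=1}^{d}O(\gamma^2/d)=O(\gamma^2)$, and chasing the constants through the rotation tail bound and the geometric series of refinement errors yields the stated dimension-free bound $(R^2+7)\gamma$, which is~(2). One round costs $O(\log(R\|x-y\|/\gamma))$ bits per coordinate, hence $O(d\log(R\|x-y\|/\gamma))$ in total, and only $O(\log\log(\cdot))$ rounds are needed, which disappears into the $O(\cdot)$; this gives~(3).

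\textbf{Failure probability and the main obstacle.} The event $\mathcal G$ fails only if, in some round, a rotated coordinate escapes its allotted $R$-wide window; for a uniformly random rotation the probability of this, sharpened to the $d$-dimensional event and with the window replaced by a good lattice cell rather than a box, is $O(R^{-d})$ per round, so a union bound over the $O(\log\log(\|x-y\|/\gamma))$ rounds gives the advertised success probability $1-\log\log(\|x-y\|/\gamma)\cdot O(R^{-d})$. I expect the genuinely hard part — and the reason this result is quoted rather than reproved in the paper — to be achieving simultaneously (a) a \emph{dimension-free} error of $(R^2+7)\gamma$, (b) communication that is only $O(d\log(\cdot))$, and (c) a failure probability as small as $O(R^{-d})$: any two of these are routine, but the three together are exactly what forces the dithered, randomly rotated, recursively refined lattice construction of~\cite{davies2021new}, whose delicate constant-chasing I would not attempt to redo.
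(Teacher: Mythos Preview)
Your proposal is correct and matches the paper exactly: the paper does not prove this lemma at all but simply restates Lemma~23 of~\cite{davies2021new} verbatim, so the ``proof'' is indeed a citation. Your supplementary sketch of a self-contained argument is extra material the paper does not provide, but it is a reasonable outline of the construction in~\cite{davies2021new}.
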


\subsection{Main Results}

Our proof strategy will be to show that the clients' local models stay close to the server's. 
This coupling is used both to show that the models converge jointly, but also that we can successfully apply the quantizer. 
Let $\mu_t={(X_t + \sum_{i=1}^n {X^i})}/{(n+1)}$ be the mean over all the node models in the system at a given $t$. Our main result shows the following: 

\begin{theorem} \label{thm:quantized}
Let $H_i > 0$ be client $i$'s speed, and $H_{\min}$ be the minimum client speed. 
Assume the total number of server rounds is $T \ge \Omega(n^3)$, the learning rate $\eta=\frac{n+1}{H_{\min}\sqrt{sT}}$, and the quantization parameters $R$, and $\gamma$ satisfy that 
$\gamma^2=\frac{\eta^2}{(R^2+7)^2}\left((\sum_{i=1}^{n}\frac{H_i^2}{n H^2})\sigma^2 + 2KG^2 + \frac{f(\mu_0) - f_*}{L}\right)$, and $R=2+T^{\frac{3}{d}}$. 
Then, we have that  Algorithm \ref{algo:quafl} converges at the following rate: 
\begin{align*}
\squeeze
  \frac{1}{T} \sum_{t=0}^{T-1} &\E\|\nabla f(\mu_t)\|^2 \le \frac{4(f(\mu_0)-f_*)}{\sqrt{sT}}
     \\& + \frac{36KL(\sum_{i=1}^{n} \frac{\sigma^2}{n H_i^2} + \frac{2KG^2}{{H_{\min}}^2})}{\sqrt{sT}}
      \\& 
    + O\left(\frac{n^3K^2L^2 ((\sum_{i=1}^{n}\frac{{H_{\min}}^2}{n H_i^2})\sigma^2 + 2KG^2)}{{H_{\min}}^3T}\right).
\end{align*}
The algorithm uses $O\left(sT(d\log{n} + \log T\right))$ expected communication bits.
\end{theorem}

\paragraph{Discussion.} 
The result shows non-trivial trade-offs between the convergence speed of the algorithm, the variance of the local distributions (given by $\sigma$ and $G$), the sampling set size $s$, 
and individual client speeds $H_i$. We now examine it across relevant parameter regimes. 

Let us first consider the case where the clients are \emph{homogenous}, i.e. $H = H_i = H_{\min}, \forall i$.
If assume that the maximum number of local steps $K$ is constant, then we also get constant $H$ since $K \geq H$.
Then, our bound is asymptotically-optimal. 
Specifically, the first two upper bound terms achieve the ``optimal'' speedup $\sqrt{sT}$ with respect to the sampling parameter $s$ and the number of iterations $T$ for the non-convex case. The  third term contains similar ``nuisance factors'' as the second term, with the addition of the $n^3$ factor, which is directly due to asynchrony. 
Crucially, this larger third term is divided by $T$, as opposed to $\sqrt T$; since $T$ is our asymptotic parameter, the whole third term is commonly assumed to be negligible~\citep{lu2020moniqua}. 
Thus, in this case essentially the entire overhead of asynchrony and quantization is ``offloaded'' onto the third term, which becomes negligible as $T$ grows. 

For equal speeds $H_i = H$ and \emph{non-constant} local steps $K$, it is reasonable to assume that $H = \Theta(K)$: on average, each client $i$ will have completed a significant fraction of its local steps on the old version of the model $X^i$ when being contacted. (Otherwise, the sampling frequency of the server is too high, and the server should simply decrease it.) 
If $H = \Theta(K)$, our algorithm also gets linear speedup with respect to the parameter $H$ in terms of reducing the impact of the variance $\sigma^2$, in the second term. 
These dependencies match some of the best known bounds for standard FedAvg, in a similar learning rate regime: without asynchrony and quantization, we asymptotically recover the FedAvg bounds~\citep{karimireddy2020scaffold}. 

Finally, take the interesting case of \emph{heterogeneous client speeds} $H_i$. 
The first term is not affected, 
and the third term remains negligible as long as the ``slowdown'' of the slowest node is not asymptotic in the total steps $T$. 
In the second term, there is a remarkable difference between the \emph{variance terms} $\sigma^2 / H_i^2$,  
which decrease proportionally to the client speeds, and the \emph{discrepancy term} $2KG^2$, which only gets decreased proportionally to the square of $H_{\min}$, 
the speed of the lowest node. This is inherent: since the local data distributions are \emph{heterogenous}, 
the system cannot make progress in the discrepancy term without contributions from the slowest node.   
As such, our results are compatible with the trade-offs between asynchrony and data heterogeneity in FedNova~\citep{fednova}.

To simplify the discussion, the Theorem statement considers a parametrization of the quantizer under which the impact of quantization noise is subsumed into the third upper bound term. 
For a more fine-grained upper bound result, please see Appendix~\ref{appendix:full-convergence}.

\paragraph{Convergence at the Server.}
We can obtain a similar bound for convergence \emph{of the server's model}, as opposed to convergence of the mean of local models. 
 \begin{corollary} \label{cor:quant}
Assume that the total number of steps is $T \ge \Omega(n^4)$, whereas all the other parameter values are identical to Theorem~\ref{thm:quantized}. 
Then, with probability at least $1-O(\frac{1}{T})$, the server's model $X_t$ converges asymptotically at the same rate as the model average. 
\end{corollary}

This bound is very similar to our main result, except for the larger dependency between the parameters $T$ and $n$. 
Intuitively, this is required in the analysis due to the additional ``mixing time'' required for the server to converge to a similar bound to the mean $\mu_t$. 
However, we do not observe such a requirement in practical experiments, and this bound is realistic given that the optimization process is usually executed over a large number of iterations. 

Thus, the discussion of convergence interesting parameter regimes remains the same as for convergence of the average. 
In sum, QuAFL can match some of the best known rates for FedAvg. We find this surprising, since our algorithm executes in a highly-decoupled environment, in which communication is non-blocking and compressed, and clients are partially asynchronous.

\subsection{Overview of the Analysis} 

We provide an overview of the proofs, outlining the main intermediate results. (The full analysis is given in the Appendix.) 
The first step is bounding the deviation between the local models and their mean. For this, we define the following potential function:  $\Phi_t = \|X_t - \mu_t\|^2 + \sum_{i=1}^{n} \|X^i - \mu_t\|^2$. We can show that this potential has the following supermartingale-type property: 
\begin{lemma} \label{lem:PhiBoundPerStepAsyncT}
For any time step $t$ we have:
 \begin{align*} %
 \squeeze
\E[\Phi_{t+1}] & \le (1 - \frac{1}{4n})\E[\Phi_t]
\\& +8s\eta^2\sum_{i = 1}^n \eta_i^2 \E\|\tih_i\|^2 + 16n({R}^2+7)^2\gamma^2. 
\end{align*}
\end{lemma}
The intuition behind this result is that potential $\Phi_t$ will stay well-concentrated around its mean, except for influences from the variance due to local steps (second term) or quantization (third term). 
With this in place, the next lemma allows us to track the evolution of the average of the local models, with respect to local step and quantization variance: 
\begin{lemma} \label{lem:mudifferenceT}
For any step $t$,
$\E\|\mu_{t+1}-\mu_t\|^2 \le 
 \frac{2\eta^2}{(n+1)^2} \E  \|\sum_{i \in S} \eta_i \tih_{i}  \|^2  + \frac{2({R}^2+7)^2\gamma^2}{(n+1)^2}.$ 
\end{lemma}
In both cases, the upper bound depends on the second moment of the nodes' local progress ($\sum_{i} \eta_i^2
\E\|\tih_i\|^2$ or $\|\sum_{i \in S} \eta_i \tih_{i}  \|^2$). This is due to the fact that the server contacts $s$ clients, which are chosen uniformly at random. Then, our main technical lemma  uses properties 
(\ref{eqn:lipschitz_assumption_f_i}), (\ref{eqn:variancebound_i}) and (\ref{eqn:varsigmabound}), to concentrate 
these quantities around the true gradient $\E \| \nabla f(\mu_t) \|^2$, where the expectation is taken over the algorithm's randomness. 
 \begin{lemma} \label{lem:sumofselectedstochasticG}
For any step $t$
\begin{align*}
\squeeze
 &\E  \|\sum_{i \in S} \eta_i\tih_{i}  \|^2 \le \frac{16s^2K^2L^2}{n} \E[\Phi_t] \\&+ 18sK((\frac{1}{n}\sum_{i=1}^{n} \eta_i^2)\sigma^2 + 2KG^2)  +64s^2K^2B^2 \E\|\nabla f(\mu_t) \|^2  
 \end{align*}
 \end{lemma}
We can use  Lemmas \ref{lem:mudifferenceT} and \ref{lem:sumofselectedstochasticG} to bound $\E\|\mu_{t+1}-\mu_t\|^2$. Similarly to Lemma \ref{lem:sumofselectedstochasticG}, we can get an upper bound for $\sum_{i} \eta_i^2
\E\|\tih_i\|^2$, and by combining it with Lemma \ref{lem:PhiBoundPerStepAsyncT} we get the following upper bound on the potential with respect to $\E \| \nabla f(\mu_t) \|^2$. 
\begin{lemma} \label{lem:PhiBoundGlobalmain}
We have that:
\begin{align*} 
\squeeze
&\sum_{t=0}^T \E[\Phi_t]
\le O(Tn^2({R}^2+7)^2\gamma^2 +K \sum_{t=0}^{T-1} \E\|\nabla f(\mu_t)\|^2)) \\&+
n^2sK\eta^2(T(\frac{\sum_{i=1}^{n} \eta_i^2\sigma^2}{n}+2KG^2).
\end{align*}
\end{lemma}
\vspace{-1mm}
Next, using the $L$-smoothness of the function $f$ (\ref{eqn:lipschitz_assumption_f_i}), we can show that
\begin{align} \label{L-smoothness}
\squeeze
\begin{split}
\E[f(\mu_{t+1})] &\le \E[f(\mu_t)]+\E\langle\nabla f(\mu_t) , \mu_{t+1}-\mu_t\rangle
     \\& + \frac{L}{2} \E\|\mu_{t+1}-\mu_t\|^2.
\end{split}
\end{align} 
\paragraph{Weighting for heterogeneous clients.} Using (\ref{L-smoothness}), and given that $
\E[\mu_{t+1}-\mu_t]=-\frac{\eta}{n+1}\sum_{i \in S} \eta_i \tih_i(X_t^i)$, we observe that the sum 
$\sum_{i=1}^n \E \langle \nabla f(\mu_t), \mu_{t+1}-\mu_t \rangle$ can be concentrated around 
$\E\| \nabla f(\mu_t) \|^2$, in similar fashion as in Lemma \ref{lem:sumofselectedstochasticG}. While bounding this quantity we need to control  $\sum_{i=1}^n \eta_i H_i (- \E\langle\nabla f(\mu_t),\nabla f_i(\mu_t) \rangle)$. If data were homogeneous, for each $i$ we had $\langle\nabla f(\mu_t),\nabla f_i(\mu_t) \rangle = \|\nabla f(\mu_t)\|^2$ and we would not need weighting. However, in the heterogeneous setting, the $f_i(\mu_t)$s are different and each of the inner products can have arbitrary sign. Thus, we choose $\eta_i = \frac{H_{\min}}{H_i}$ such that all $\eta_i H_i$s are equal, and the full term is  $-nH_{\min}\|\nabla f(\mu_t)\|^2$.

\paragraph{Final argument.} Then we can place this upper bound and the upper bound for $\E\|\mu_{t+1}-\mu_t\|^2 $ in (\ref{L-smoothness}), summing over all $T$ steps, and use Lemma \ref{lem:PhiBoundGlobalmain} to cancel out the terms containing the potential $\sum_{t=0}^T \E[\Phi_t]$ based on $\sum_{t=0}^{T-1} \E\|\nabla f(\mu_t)\|^2$ , and modulo some careful wrangling, we obtain the convergence bound in Theorem \ref{thm:convergence}. Setting the quantization parameters as stated, we get the claimed convergence rate. We discuss the impact of the quantization parameters in detail in the Appendix.

\begin{figure*}[t]
\centering
    \begin{minipage}[c]{0.43\textwidth}
        \centering
        \includegraphics[width=\textwidth, clip]{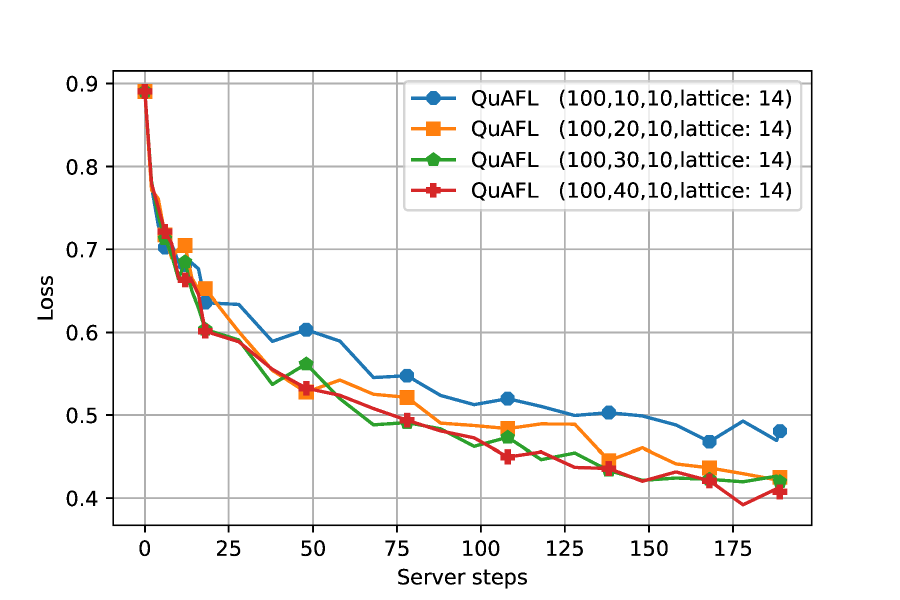}
        \vspace{-2em}
        \caption{Impact of the number of peers $s\in \{10, 20, 30, 40\}$ on convergence, for $n = 100$ clients, $14$-bit quantization, on CelebA, using non-i.i.d data.}
        \label{fig:s-impact}
    \end{minipage}~~
    \begin{minipage}[c]{0.43\textwidth}
    \centering
    \includegraphics[width=\textwidth, clip]{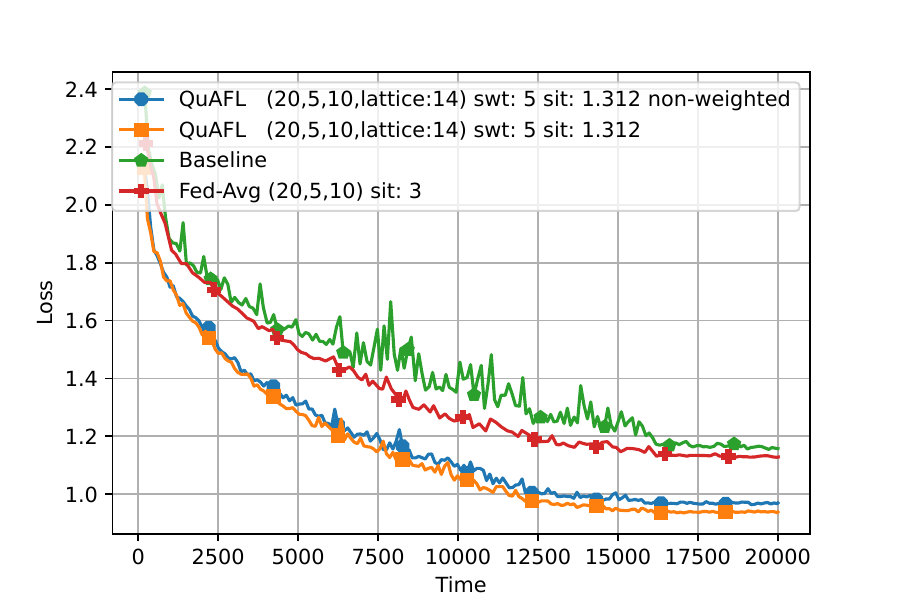}
    \vspace{-2em}
    \caption{Convergence comparison relative to simulated time between QuAFL and FedAvg for ResNet20/CIFAR10.}
    \label{fig:time}
    \end{minipage}
\vspace{-2em}

\end{figure*}

\begin{figure*}[t]
\centering
    \begin{minipage}[c]{0.43\textwidth}
        \centering
        \includegraphics[width=\textwidth, clip]{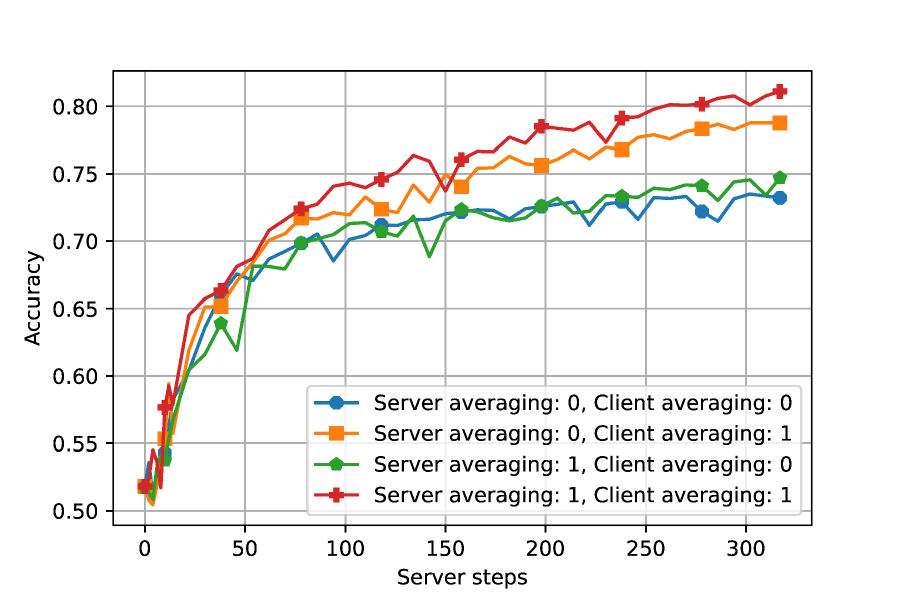}
        \caption{ The impact of averaging variants vs. validation accuracy on ResNet/CelebA, non-i.i.d data.}
        \label{fig:averaging}
    \end{minipage}~~
            \begin{minipage}[c]{0.43\textwidth}
        \centering
        \includegraphics[width=\textwidth, clip]{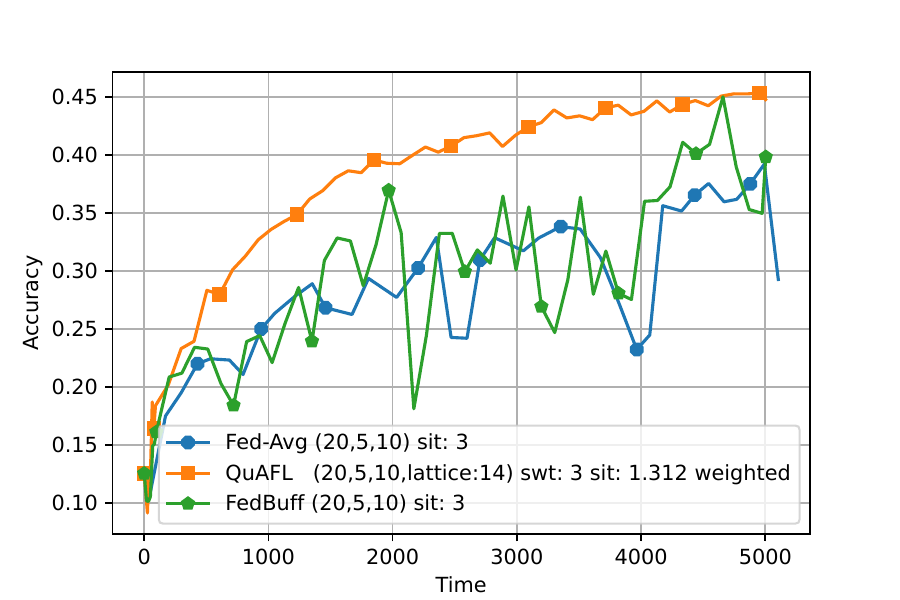}
        \vspace{-2em}
        \caption{ResNet20/CIFAR10 experiment where Fast and Slow clients have non-i.i.d. data from different classes.}
        \label{fig:non-iid}
    \end{minipage}~~

\end{figure*}
\begin{figure*}[t]
\centering
\centering
        \begin{minipage}[c]{0.43\textwidth}
        \centering
        \includegraphics[width=\textwidth,  clip]{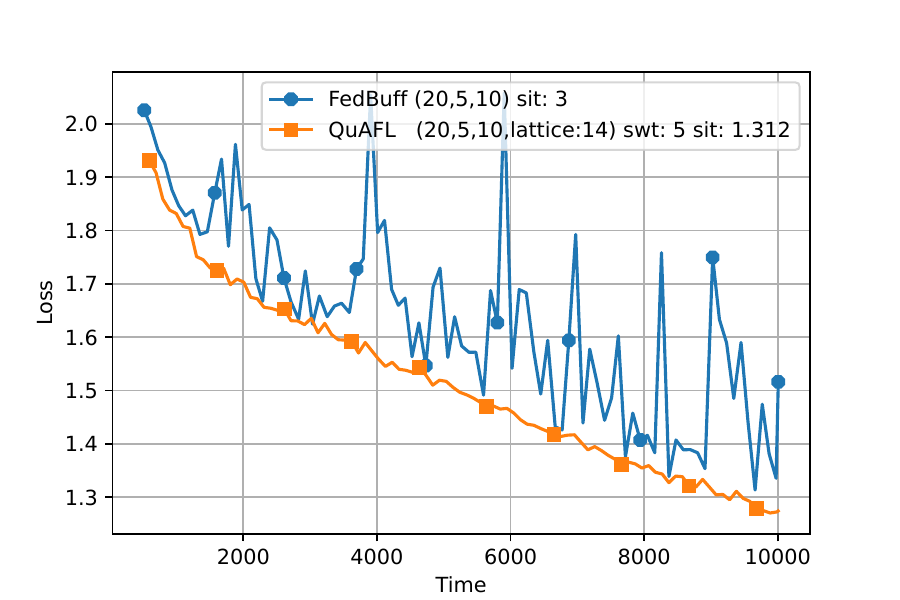}
        \caption{Experiment in which the average time per client per local step is uniformly random between 2 and 9, showing superior performance relative to FedBuff, which becomes unstable when both data and client speeds are heterogeneous (ResNet20/CIFAR10). }
        \label{fig:fedbuff_different_speeds}
    \end{minipage}~~
\centering
\begin{minipage}[c]{0.43\textwidth}
        \centering
        \includegraphics[width=\textwidth,  clip]{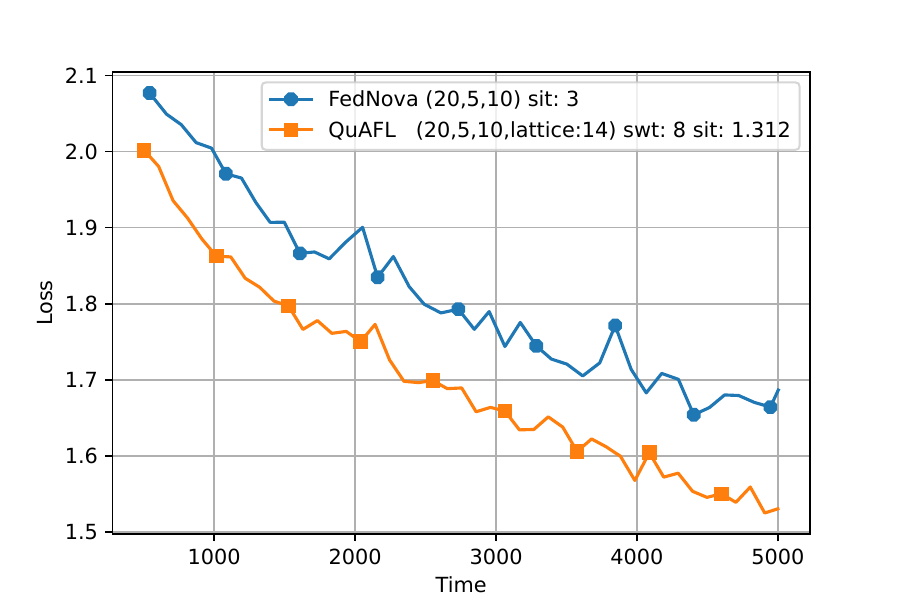}
        \caption{Comparison with FedNova in the same setup as Figure \ref{fig:fedbuff_different_speeds}. FedNova can handle different client speeds, but is synchronous, therefore there are many idle clients during training, leading to slower convergence vs time.}
        \label{fig:fednova_different_speeds}
    \end{minipage}

\end{figure*}

\vspace{-1em}
\section{EXPERIMENTAL RESULTS}
\vspace{-1em}
\label{sec:experiments}

\paragraph{Experimental Setup.} 
We implement QuAFL in Pytorch train neural networks for image classification tasks, specifically  residual CNNs~\citep{he2016deep} on the MNIST~\citep{lecun2010mnisthandwrittendigit}, Fashion MNIST~\citep{xiao2017fashion}, CIFAR-10~\citep{krizhevsky2009learning} and CelebA~\citep{liu2015faceattributes} datasets, in the rigorous benchmarking setup of LEAF~\citep{caldas2018leaf}. 
Experiments on MNIST, FMNIST and CIFAR use a fixed random split of the training set among the nodes, while the CelebA experiments are executed in a pure non-i.i.d. setting, in which the samples are split across classes, so that each client receives a non-overlapping subset of classes. 
Full experimental details are presented in the Appendix. 
We omit error bars for readability, as the variance between experiments is low and does not impact our conclusions.

 \paragraph{Goals and Metrics.} 
Experiments are described by $(n, s, K, b)$, where $b$ is the number of bits used. 
In addition, $swt$ is the server waiting time between two consecutive calls, and the server interaction time, $sit$, as the amount of time that server needs to send and receive necessary data. 
We assume  a \emph{server} and $n$ \emph{clients}, of which $s$ are chosen randomly to be sampled in a round. 
The training data is distributed among clients so that each has access to a fixed $1/n$ partition. 
We track the accuracy of the server's model on an unseen validation dataset. 
We measure \emph{loss} and \emph{accuracy} of the model with respect to \emph{simulation time}.%
     We update the both client and server models following QuAFL, and then increase the server time by $sit$. The server then waits for another interval of \emph{server waiting time (swt)} to make its next call. Unless otherwise stated, all communication is quantized using the lattice quantizer of~\cite{davies2021new}, which is simply implemented via a random rotation followed by direct quantization/dequantization.

We  run two timing experiments: 
\emph{uniform} experiments assume all clients take the same amount of time for a gradient step; 
\emph{non-uniform} timing experiments  differentiate between \emph{fast} or \emph{slow} clients. Specifically, the length of each client step is taken to be a random variable $X \sim exp(\lambda)$, where $\lambda$ is $1/2$ for \emph{fast clients} and $1/8$ for \emph{slow} clients; the expected runtime $\E(X)$ would be $2$ and $8$, respectively. In each experiment, we assumed 30\% of clients to be \emph{slow}. Unless otherwise noted, we employ the \emph{unweighted} version of QuAFL, meaning that clients set their weighting parameter to $\eta_i = 1$ in the algorithm. 
All code is available at \url{https://github.com/ShayanTalaei/QuAFL}. %

\paragraph{Results.}
Figure~\ref{fig:s-impact} examines 
the impact of the number of sampled peers $s$ when training ResNet18 on the (non-i.i.d) CelebA dataset, where  30\% of clients are slow. 
Observe that convergence speed clearly follows the ordering of the number of peers $s$, confirming our analysis. Interestingly, timings in this experiment lead to a 27\% probability that a slow client \emph{will not have taken any steps} when interacting with the server, i.e. \emph{its progress $Y_i$ is zero}. 
This shows that QuAFL is robust to such slow clients, although their proportion impacts convergence. 

In Figure~\ref{fig:time} we examine the convergence of FedAvg and QuAFL in \emph{simulated execution time}, in a system with $20$ clients, out of which $25\%$ are slow. (The \emph{Baseline} is a single slow node that performs an optimization step per round.) Here, it is evident that non-blocking communication in QuAFL leads to faster convergence in terms of wall-clock time. The figure also shows better convergence for the \emph{weighted} version of QuAFL, where agents set $\eta_i$ according to speed. 

In Figure~\ref{fig:averaging}, we examine the impact of different types of averaging on the convergence of the basic QuAFL pattern, on the non-i.i.d. CelebA dataset, with $n = 100$ clients. We clearly observe that the variant where averaging is applied \emph{both at the server and at the client performs the best}. 

In Figure~\ref{fig:non-iid}, we compare QuAFL with FedBuff, a SOTA \emph{asynchronous} FL protocol~\citep{FedBuff} and FedAvg in a non-i.i.d. experiment, in which the clients' data is created in a heterogeneous way. The experiment consists of 20 clients, of which 5 of them are slow ones, and the rest are fast ones. The task is  CIFAR10 classification, in which slow clients have access to three specific classes and the rest have the other classes. FedAvg has to wait for the slow clients to perform all of their local steps, and therefore in the same amount of time as others, it performs a small number of global steps. FedBuff is asynchronous and does not have to wait for slow clients, however, the fast clients contribute more often to the optimization, and the algorithm observes the seven classes from fast clients compared more frequently, relative to the three classes from slow clients. QuAFL on the other hand is asynchronous and balances the imbalance between different clients. As you can see in the figure, QuAFL outperforms both models in this comparison.

Figures~\ref{fig:fedbuff_different_speeds} and~\ref{fig:fednova_different_speeds} provide comparisons with FedBuff and FedNova, in a setup where both client speeds and client data are heterogenous. 
QuAFL provides faster convergence vs. time relative to FedBuff since it can handle heterogenous client speeds, whereas in FedBuff convergence is biased towards faster clients, leading to loss spikes; FedNova does not have this problem, but is \emph{synchronous} and thus incurs slow-downs due to delays required for all clients to synchronize at every round. 
We present additional results in the Appendix, specifically higher node counts (up to $300$), full-convergence experiments, as well as other tasks. 

\vspace*{-1mm}
\vspace{-0.5em}
\section{CONCLUSION AND LIMITATIONS}
\vspace{-0.5em}
We have provided the first variant of FedAvg which incorporates both asynchronous and compressed communication, and have shown that this algorithm can still provide good convergence guarantees in a setting with heterogeneous data and client speeds. 
Our analysis is extensible to more complex federated optimizers, such as gradient tracking, e.g.~\citep{haddadpour2021federated}, controlled averaging ~\citep{karimireddy2020scaffold}, heterogeneous clients~\citep{diao2020heterofl} or variance-reduced variants~\citep{gorbunov2021marina}. 
In future work, we plan to investigate the limitation of our analysis in terms of the relationship between time $T$ and the number of nodes $n$, applications to additional federated optimizers, as well as larger-scale deployments.

\bibliographystyle{apalike}
\bibliography{references.bib}

\section*{Checklist}

 \begin{enumerate}

 \item For all models and algorithms presented, check if you include:
 \begin{enumerate}
   \item A clear description of the mathematical setting, assumptions, algorithm, and/or model. [Yes]
   \item An analysis of the properties and complexity (time, space, sample size) of any algorithm. [Yes]
   \item (Optional) Anonymized source code, with specification of all dependencies, including external libraries. [Yes]
 \end{enumerate}

 \item For any theoretical claim, check if you include:
 \begin{enumerate}
   \item Statements of the full set of assumptions of all theoretical results. [Yes]
   \item Complete proofs of all theoretical results. [Yes]
   \item Clear explanations of any assumptions. [Yes]     
 \end{enumerate}

 \item For all figures and tables that present empirical results, check if you include:
 \begin{enumerate}
   \item The code, data, and instructions needed to reproduce the main experimental results (either in the supplemental material or as a URL). [Yes]
   \item All the training details (e.g., data splits, hyperparameters, how they were chosen). [Yes]
         \item A clear definition of the specific measure or statistics and error bars (e.g., with respect to the random seed after running experiments multiple times). [No]
         \item A description of the computing infrastructure used. (e.g., type of GPUs, internal cluster, or cloud provider). [No]
 \end{enumerate}

 \item If you are using existing assets (e.g., code, data, models) or curating/releasing new assets, check if you include:
 \begin{enumerate}
   \item Citations of the creator If your work uses existing assets. [Yes]
   \item The license information of the assets, if applicable. [Not Applicable]
   \item New assets either in the supplemental material or as a URL, if applicable. [Yes]
   \item Information about consent from data providers/curators. [Not Applicable]
   \item Discussion of sensible content if applicable, e.g., personally identifiable information or offensive content. [Not Applicable]
 \end{enumerate}

 \item If you used crowdsourcing or conducted research with human subjects, check if you include:
 \begin{enumerate}
   \item The full text of instructions given to participants and screenshots. [Not Applicable]
   \item Descriptions of potential participant risks, with links to Institutional Review Board (IRB) approvals if applicable. [Not Applicable]
   \item The estimated hourly wage paid to participants and the total amount spent on participant compensation. [Not Applicable]
 \end{enumerate}

 \end{enumerate}

\appendix

\onecolumn
\aistatstitle{Communication-Efficient Federated Learning  With Data and Client Heterogeneity: \\
Supplementary Materials}

\section{Experimental setup}
In this section, we describe our experimental setup in detail. We begin by defining the hyper-parameters which control the behavior of QuAFL and FedAvg. Then, we proceed by carefully describing the way in which we simulated each of the algorithms. Finally, we detail the datasets, tasks, and models used for our experiments. The experiments in this section are done with the non-weighted version of the QuAFL.

\subsection{Hyper-parameters}

We first define our hyper-parameters; in the later sections, we will examine their impact on algorithm behavior through ablation studies. 
     \begin{description}
     \item[$n$:] Number of the clients.
     \item[$s$:] Number of clients interacting with the server at each step.
     \item[$K$:] \emph{In QuAFL}, this is the maximum number of allowed local steps by each client between two server calls. \emph{In FedAvg}, this is the number of local steps performed by each client upon each server call.
     \item[$b$:] Number of bits used to send a coordinate after quantization. 
     \item[$swt$:] Server waiting time, i.e.\ the amount of time that server waits between two consecutive calls.
     \item[$sit$:] Server interaction time, i.e.\ the amount of time that server needs to send and receive necessary data (excluding computation time).
     \end{description}
\subsection{Simulation}

We attempt to simulate a realistic FL deployment scenario, as follows.  We assume  a \emph{server} and $n$ \emph{clients}, each of which initially has a model copy. The training dataset is distributed among the clients so that each of them has access to $1/n$ of the training data. We track the performance of each algorithm by evaluating the server's model, on an unseen validation dataset. We measure \emph{loss} and \emph{accuracy} of the model with respect to \emph{simulation time}, \emph{server steps}, and \emph{total local steps performed by clients}. These setups so far were common between QuAFL and FedAvg. In the following, we are going to describe their specifications and differences.
     \begin{description}
     \item[QuAFL:] Upon each server call, the server chooses $s$ clients uniformly at random. It then sends its model to those clients and asks for their current local models. (Recall that clients send their model immediately to the server.) 
     Each of the clients will have taken a maximum of $K$ local steps by the time it is contacted by the server. The server then replaces its model with a carefully-computed average over the received models and its current model. This process increases time on the server by the \emph{server interaction time (sit)}. The server then waits for another interval of \emph{server waiting time (swt)} to make its next call. The $s$ receiving clients replace their model with the weighted average between their current model and the received server's model.
     Since each client performs local steps from its last interaction time until the current server time, nodes are effectively executing asynchronously. Moreover, note that communication is compressed, as all the models get encoded in their source and decoded in their destination.  
     \begin{description}
     \item[Quantization:] To have a lightweight but efficient communication between clients and the server, we use the well-known lattice quantization \citep{davies2021new}. Using this method, we send \emph{b} instead of 32 bits for each scalar dimension. Informally, each 32-bit number maps to one of the $2^{\emph{b}}$ quantized levels and can be sent using \emph{b} bits only. The encoded number can then be decoded to a sufficiently close number at the destination, following the quantization protocol.  
     \end{description}
     
     \item[FedAvg:] In the beginning of each round, server chooses $s$ clients randomly, and sends its current model to them. Each of those clients receives the model, uncompressed, and performs exactly $K$ local steps using this model as the starting point, and then sends back the resulting model to the server. The server then computes the average of the received models and adopts it as its model. By this synchronous structure, in each round, the server must wait for the \emph{slowest} client to complete its local steps plus an extra \emph{sit} for the communication time. After completing each round, the server starts the next call immediately, that means $\emph{swt}=0$ in FedAvg.   
     \end{description}

\paragraph{Timing Experiments.} We  differentiate between two types of timing experiments. \emph{Uniform timing} experiments, presented in the paper body, assume all clients take the same amount of time for a gradient step. 
However, in real-world setups, different devices may require different amounts of time to perform a single local step. This is one of the main disadvantages of synchronous federated optimization algorithms. To demonstrate how this fact affects the experiments, in our \emph{Non-uniform} timing experiments we differentiate clients to be either \emph{fast} or \emph{slow}. The length of each local step can be characterized as a memoryless time event. Therefore, the length of each local step can be defined by a random variable $X \sim exponential(\lambda)$. The parameter $\lambda$ is $1/2$ for \emph{fast clients} and $1/8$ for \emph{slow} clients; the expected runtime $\E(X)$ would be $2$ and $8$, respectively. In each timing experiment, we assumed only one fourth of clients to be \emph{slow}. 

\subsection{Datasets and Models}
 We used Pytorch  to manage the training process in our algorithm. We have trained neural networks for image classification tasks on three well-known datasets, \textbf{MNIST}, \textbf{Fashion MNIST}, and \textbf{CIFAR-10}. For all the datasets, we used the default train/test split of the dataset for our training/validation dataset. In the following, we describe the model architecture and the training hyper-parameters used to train on each of these datasets.  

\begin{description}
     \item[\textbf{MNIST}:] We used SGD optimizer with constant $lr = 0.5$ in all the training process. We used a two-layer MLP architecture with (784,32,10) nodes in its layers respectively. We used batch size 128 in each client's SGD step. 
     
     \item[\textbf{Fashion MNIST}:] Although this dataset has the same sample size and number of classes as \textbf{MNIST}, obtaining competitive performance on it requires a more complicated architecture. Therefore, we used a CNN model to train on this model and demonstrated the performance of our algorithm in a non-convex task. To optimize the models, we used Adam optimizer with constant $lr = 0.001$ and batch size 100.
     
     \item[\textbf{CIFAR-10}:] To load this dataset, we used data augmentation and normalization. For this task, we trained ResNet20 models. Moreover, the SGD optimizer with constant $lr = 0.03$ is used to in the training process. The batch size 64/200 is used for training/validation.
     
\end{description}

\subsection{Results on Fashion MNIST (FMNIST)} 

We begin by validating our earlier results, presented in the paper body, for the slightly more complex FMNIST dataset, and on a convolutional model. 

\begin{figure}
\centering
    \begin{minipage}[c]{0.43\textwidth}
        \centering
        \includegraphics[width=\textwidth, clip]{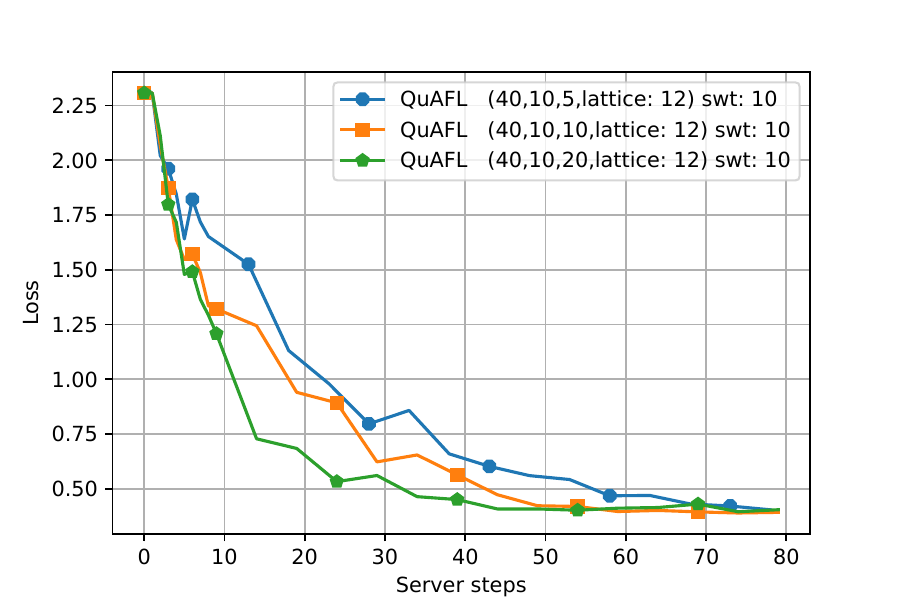}
        \vspace{-2em}
        \caption{Impact of the maximum number of local steps $K \in \{ 5, 10, 20 \}$ on the QuAFL algorithm / Fashion MNIST.}
        \label{fig:k-impact-fmnist}
    \end{minipage}~~
    \begin{minipage}[c]{0.43\textwidth}
        \centering
        \includegraphics[width=\textwidth, clip]{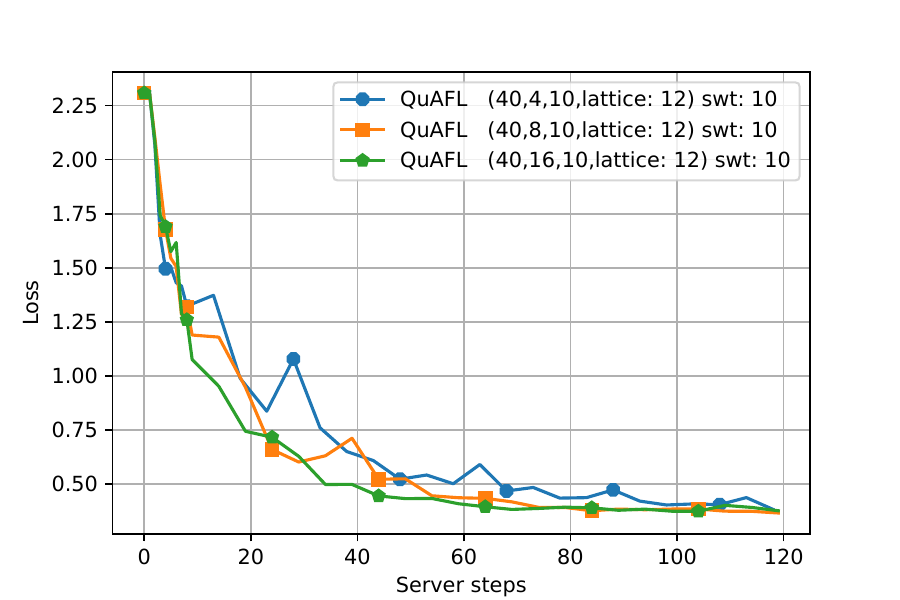}
        \vspace{-2em}
        \caption{Impact of the number of interacting peers $s \in \{4, 8, 16\}$ on the convergence of the algorithm.}
        \label{fig:s-impact-fmnist}
    \end{minipage}
    \begin{minipage}[c]{0.43\textwidth}
        \centering
        \includegraphics[width=\textwidth, clip]{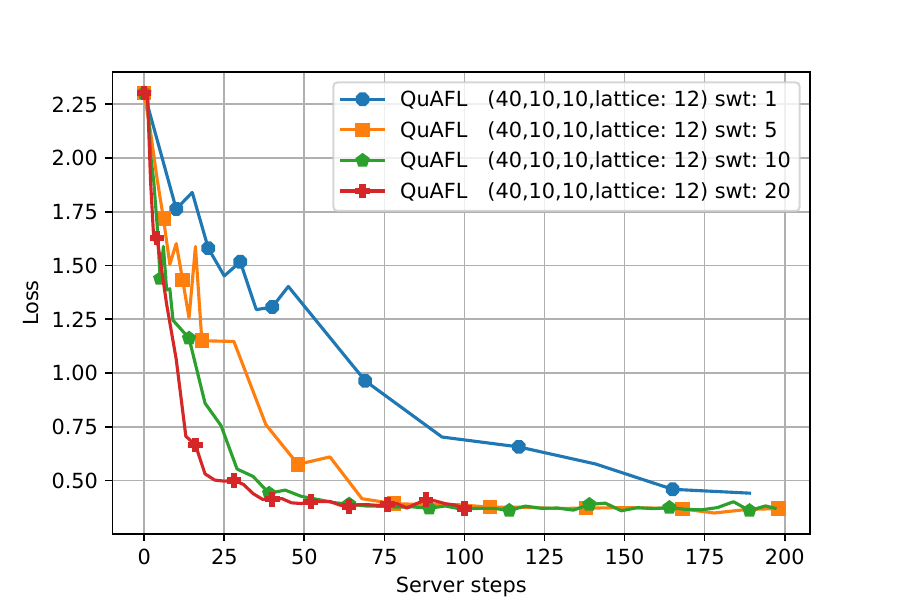}
        \vspace{-2em}
        \caption{Impact of the server contact frequency (controlled via server timeout \emph{swt}) on the convergence of the algorithm.}
        \label{fig:swt-impact-fmnist-rounds}
    \end{minipage}~~
    \begin{minipage}[c]{0.43\textwidth}
    \centering
    \includegraphics[width=\textwidth, clip]{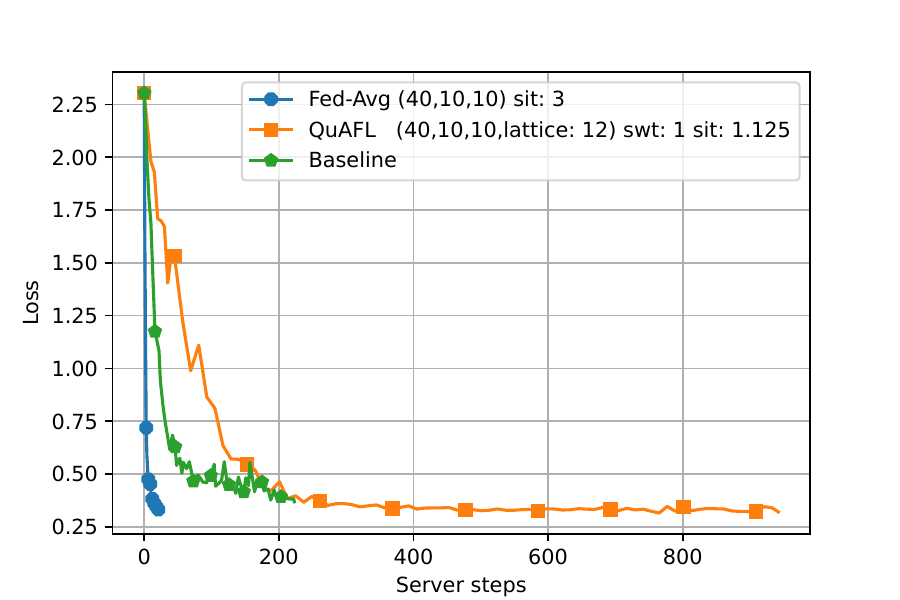}
    \vspace{-2em}
    \caption{Convergence comparison relative to total number of rounds, between QuAFL, FedAvg, and the sequential baseline.}
    \label{fig:fedavg}
\end{minipage}
\end{figure}

In Figures~\ref{fig:k-impact-fmnist} and~\ref{fig:s-impact-fmnist} we examine the impact of the parameters $K$ and $s$, respectively, on the total number of interaction rounds at the server, to reach a certain training loss. As expected, we notice that higher $K$ and $s$ improve the 
convergence behavior of the algorithm. 
In Figures~\ref{fig:swt-impact-fmnist-rounds} we examine the impact of the server waiting time on the convergence of the algorithm relative to the number of server rounds. Again, we notice that a higher server waiting time improves convergence, as it allows the server to take advantage of additional local steps performed at the clients, as predicted by our analysis. (Higher $swt$ means higher average number of steps completed $H$.)

Next, we examine the convergence, again in terms of number of optimization ``rounds'' at the server, between the sequential Baseline, FedAvg, and QuAFL. As expected, the Baseline is faster to converge than FedAvg, which in turn is faster than QuAFL in this measure. Specifically, the difference between QuAFL and the other algorithms comes because of the fact that, in our algorithm, nodes operate on \emph{old} variants of the model at every step, which slows down convergence. 
Next, we examine convergence in terms of actual time, in the heterogeneous setting in which 25\% of the clients are slow.

\begin{figure}
\centering
    \begin{minipage}[c]{0.43\textwidth}
        \centering
        \includegraphics[width=\textwidth, clip]{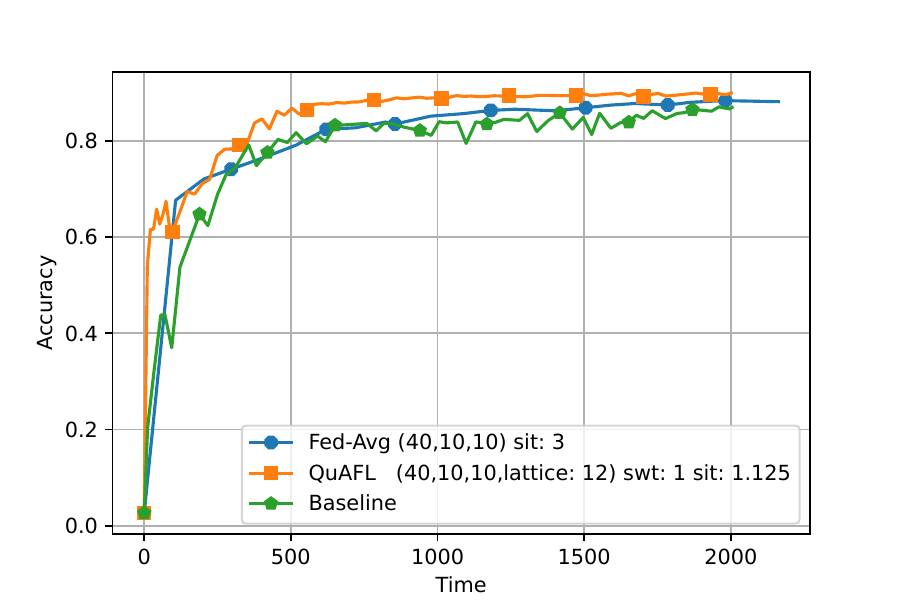}
        \vspace{-2em}
        \caption{Time vs. accuracy for various algorithm variants, on Fashion MNIST.}
        \label{fig:accuracy-vs-time-fmnist}
    \end{minipage}~~
    \begin{minipage}[c]{0.43\textwidth}
    \centering
    \includegraphics[width=\textwidth, clip]{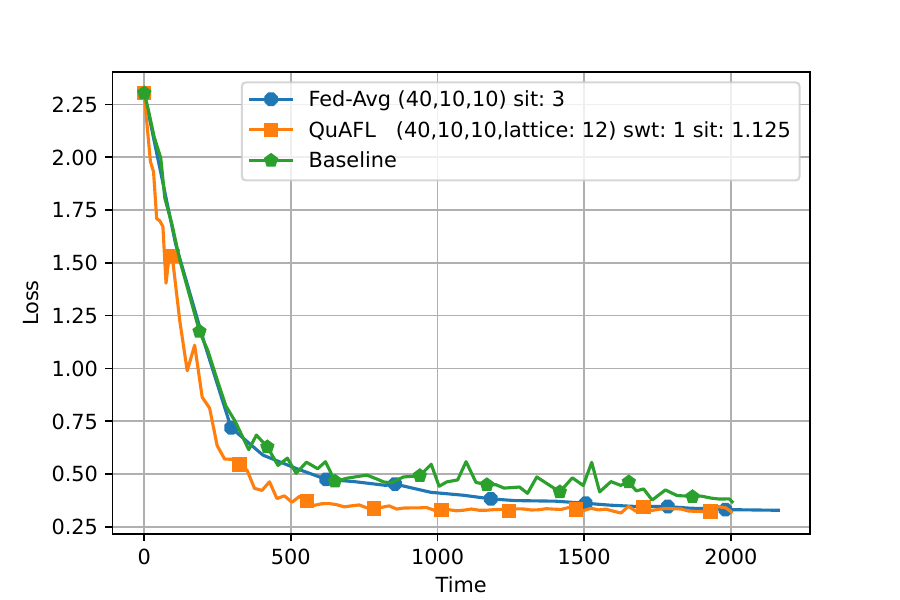}
    \vspace{-2em}
    \caption{Timing vs. loss for various algorithm variants, on Fashion MNIST.}
    \label{fig:loss-vs-time-fmnist}
\end{minipage}
\end{figure}

\begin{figure}
\centering
    \begin{minipage}[c]{0.43\textwidth}
        \centering
        \includegraphics[width=\textwidth, clip]{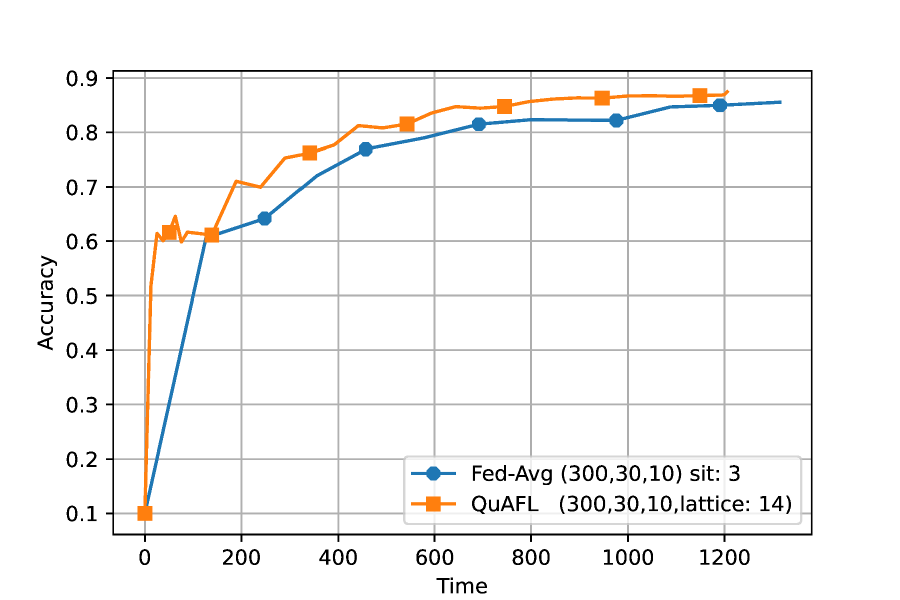}
        \vspace{-2em}
        \caption{Time vs. accuracy for n=300 clients, s=30 peers on Fashion MNIST.}
        \label{fig:300_nodes_accuracy-vs-time-fmnist}
    \end{minipage}~~
    \begin{minipage}[c]{0.43\textwidth}
    \centering
    \includegraphics[width=\textwidth, clip]{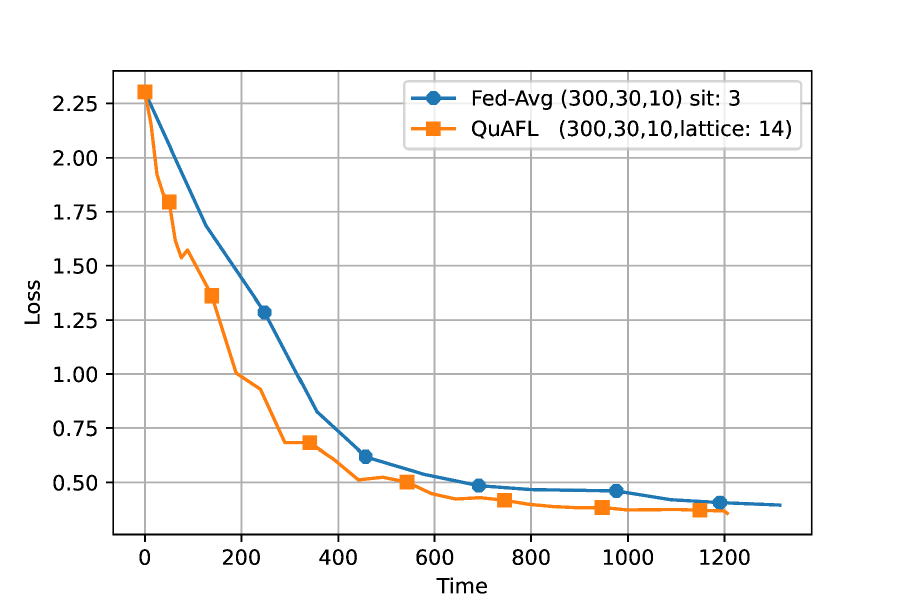}
    \vspace{-2em}
    \caption{Timing vs. loss for n=300 clients, s=30 peers on Fashion MNIST.}
    \label{fig:300_nodes_loss-vs-time-fmnist}
    \end{minipage}\\
        \begin{minipage}[c]{0.43\textwidth}
    \centering
    \includegraphics[width=\textwidth, clip]{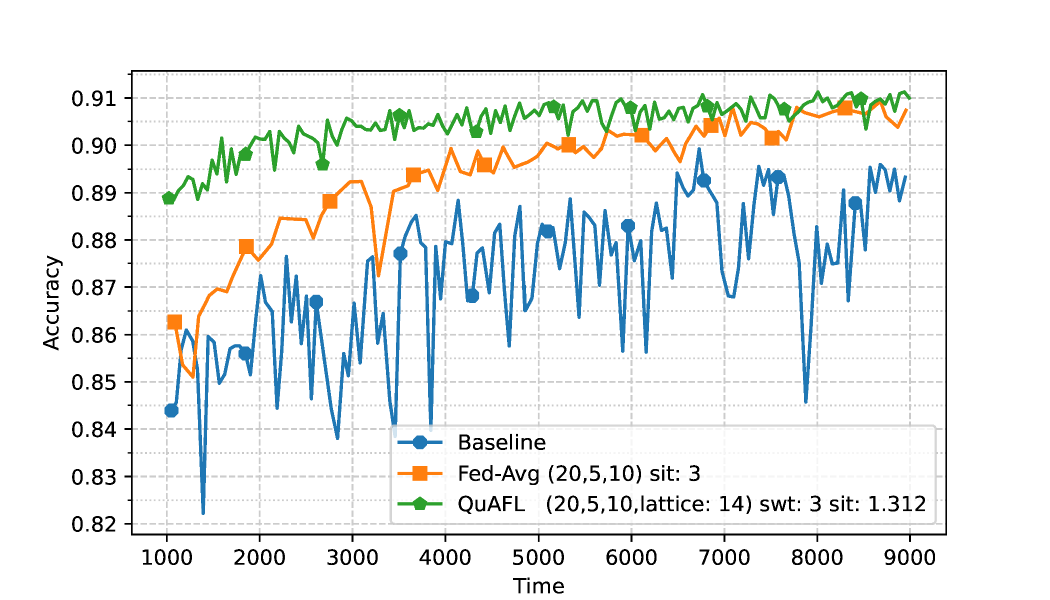}
    \vspace{-2em}
    \caption{Full convergence result for $n = 20$ clients and $s = 5$ on F-MNIST. All methods eventually reach the sequential $\sim 91\%$ top-1 accuracy on this task, but QuAFL is the fastest to do so in terms of wall-clock time.}
    \label{fig:full_convergence}
\end{minipage}~~
\end{figure}

In Figure~\ref{fig:accuracy-vs-time-fmnist}, we observe the validation accuracy ensured by various algorithms relative to the simulated execution time, whereas in Figure~\ref{fig:loss-vs-time-fmnist} we observe the training loss versus the same metric. 
(We assume that, in \emph{Baseline}, a single node acts as both the client and the server, and that this node is slow, i.e. has higher per-step times.)
To further support the robustness of our algorithm in regimes with large number of clients, we conducted an experiment with $n=300$ clients and $s=30$ peers interacting with the server at each step. The validation accuracy and loss versus time regarding the mentioned experiment plotted in Figure~\ref{fig:300_nodes_accuracy-vs-time-fmnist} and Figure~\ref{fig:300_nodes_loss-vs-time-fmnist} respectively.
We observe that, importantly, if \emph{time} is taken into account rather than the number of server rounds, QuAFL can provide notable speedups in these metrics.  This is specifically because of its asynchronous communication patters, which allow it to complete rounds faster, without having to always wait for the slow nodes to complete their local computation. While this behaviour is simulated, we believe that this reflects the algorithm's practical potential. 
Finally, Figure~\ref{fig:full_convergence} shows that all methods can reach the maximum accuracy for this dataset/model combination (for the SGD baseline, this occurs later), although QuAFL is the fastest to do so in terms of wall-clock time.

\subsection{Results on CIFAR-10}

We now present results for a standard image classification task on the CIFAR-10 dataset, using a ResNet20 model~\citep{he2016deep}. 

\begin{figure}
\centering
    \begin{minipage}[c]{0.43\textwidth}
        \centering
        \includegraphics[width=\textwidth, clip]{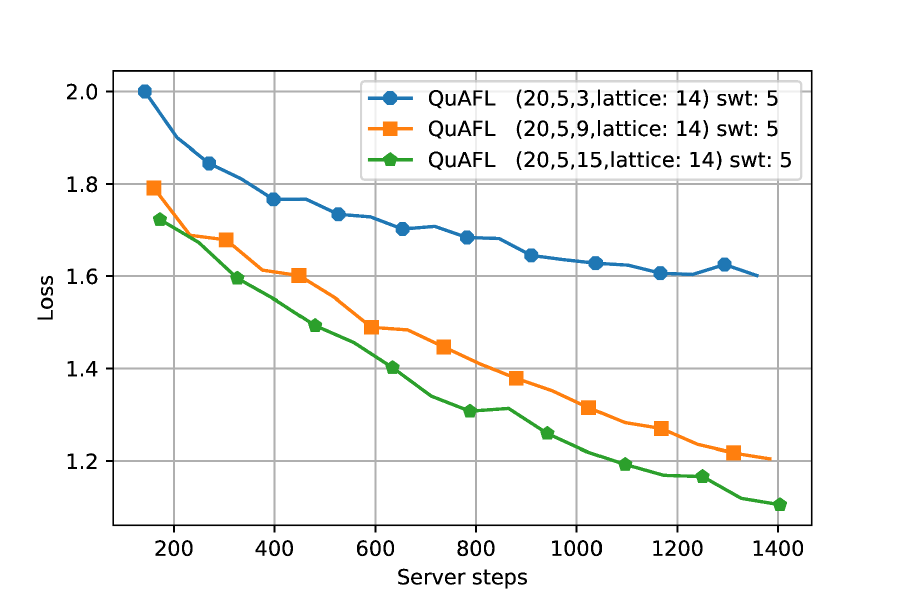}
        \vspace{-2em}
        \caption{Impact of maximum local steps $K \in \{ 3, 9, 15 \}$ on the QuAFL algorithm, on ResNet20/CIFAR-10.}
        \label{fig:k-impact-cifar10}
    \end{minipage}~~
    \begin{minipage}[c]{0.43\textwidth}
        \centering
        \includegraphics[width=\textwidth, clip]{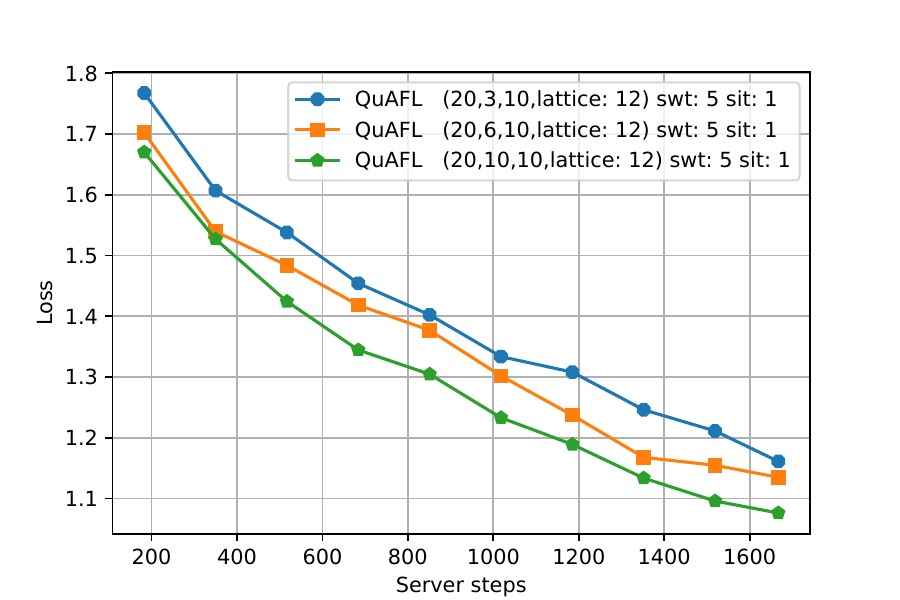}
        \vspace{-2em}
        \caption{Impact of the number of interacting peers $s \in \{3, 6, 10\}$ on the convergence of the algorithm.}
        \label{fig:s-impact-cifar10}
    \end{minipage}
    \begin{minipage}[c]{0.43\textwidth}
        \centering
        \includegraphics[width=\textwidth, clip]{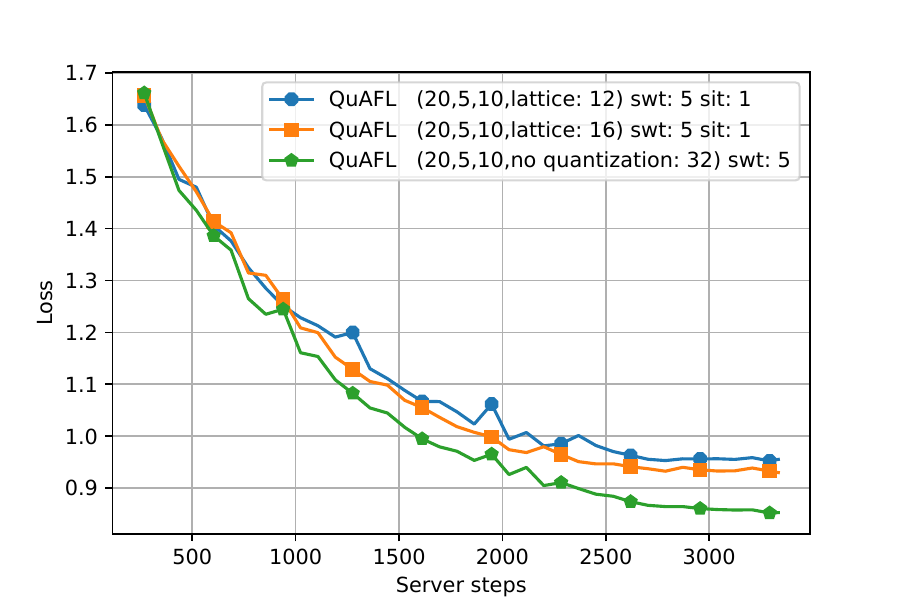}
        \vspace{-2em}
        \caption{Impact of the number of bits for quantization $b \in \{12, 16, 32\}$ on the convergence of the algorithm.}
        \label{fig:b-impact-cifar10}
    \end{minipage}~~
    \begin{minipage}[c]{0.43\textwidth}
        \centering
        \includegraphics[width=\textwidth, clip]{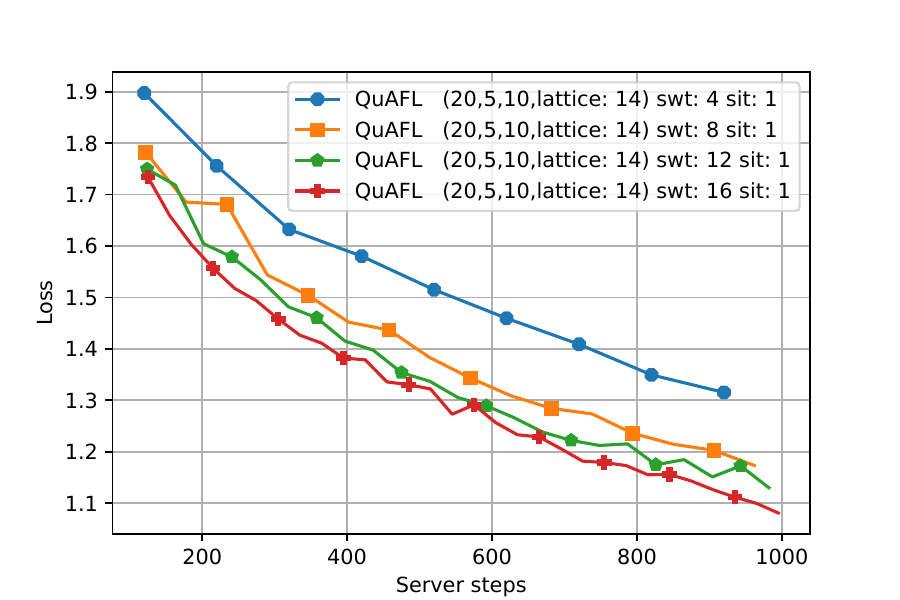}
        \vspace{-2em}
        \caption{Impact of the server contact frequency (controlled via server timeout \emph{swt}) on the convergence of the algorithm.}
        \label{fig:swt-impact-cifar10}
\end{minipage}
\end{figure}
Figures~\ref{fig:k-impact-cifar10} and \ref{fig:s-impact-cifar10} show the decrease in training loss versus the number of server steps (or rounds) for different values of $K$ and $s$ respectively. As our theory suggests, increasing $K$ and $s$  leads to an improvement in the convergence rate of the system. Figure~\ref{fig:b-impact-cifar10} demonstrates the impact of the number of quantization bits $b$, on the convergence behaviour of the algorithm. According to the definition of $b$, increasing the number of quantization bits improves the communication accuracy. Thus, as it can be seen in the graph, higher values of $b$ enhance the convergence relative to the number of server steps.       
Finally, Figure~\ref{fig:swt-impact-cifar10} shows the impact of the server interaction frequency, again controlled via the timeout parameter \emph{swt}, on the algorithm's convergence. It is apparent that a very high interaction frequency can slow the algorithm down, by not allowing it to take advantage of the clients' local steps.

\begin{figure}
\centering
    \begin{minipage}[c]{0.43\textwidth}
        \centering
        \includegraphics[width=\textwidth, clip]{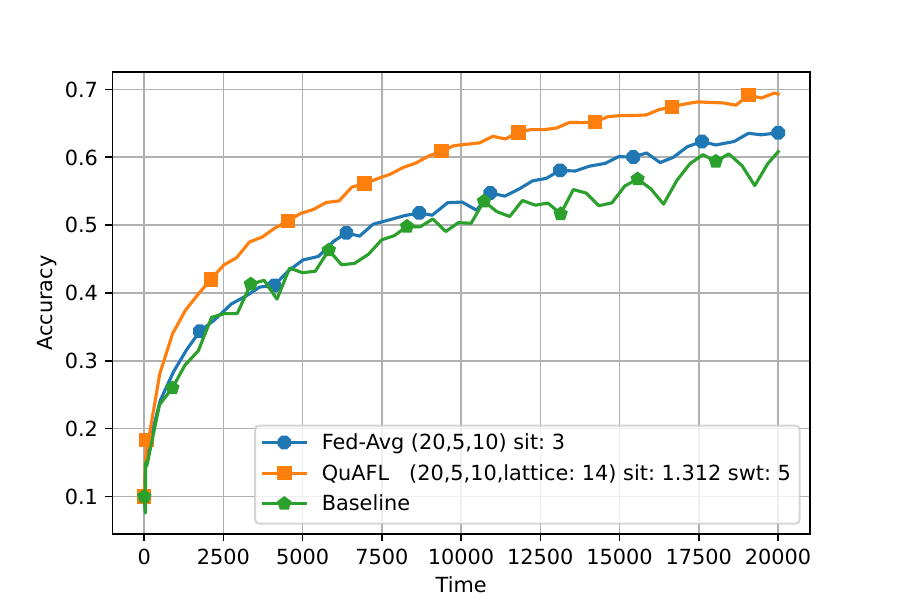}
        \vspace{-2em}
        \caption{Time vs. validation accuracy for various algorithm variants.}
        \label{fig:accuracy-vs-time-cifar10}
    \end{minipage}~~
    \begin{minipage}[c]{0.43\textwidth}
    \centering
    \includegraphics[width=\textwidth, clip]{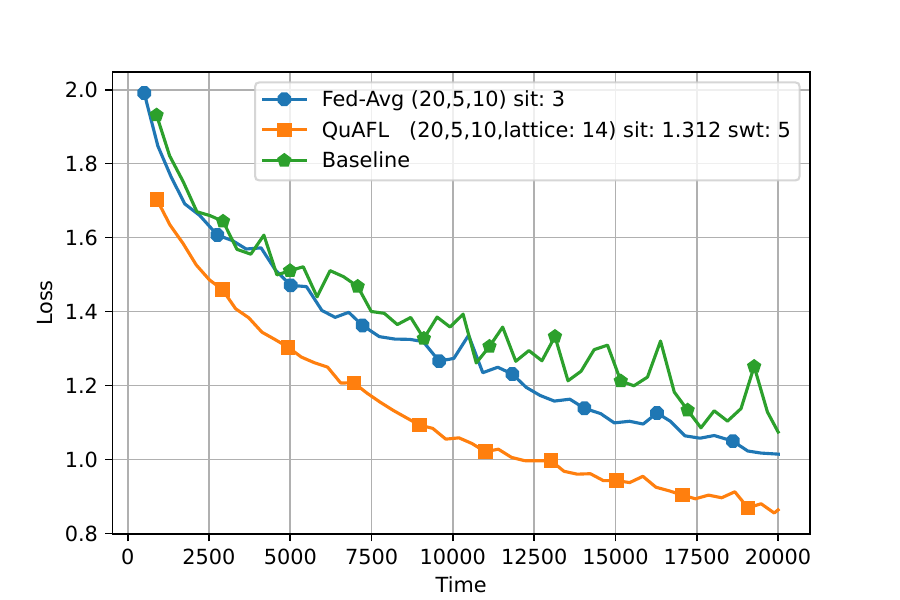}
    \vspace{-2em}
    \caption{Timing versus validation loss for various algorithm variants.}
    \label{fig:loss-vs-time-cifar10}
\end{minipage}
\\
    \begin{minipage}[c]{0.43\textwidth}
        \centering
        \includegraphics[width=\textwidth, clip]{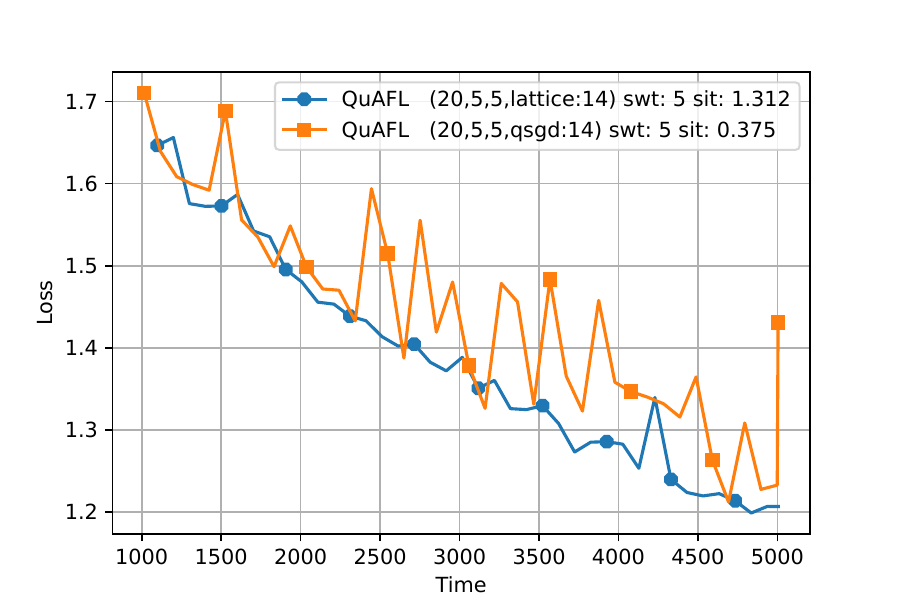}
        \vspace{-2em}
        \caption{Loss comparison of QuAFL with lattice quantization vs QSGD}
        \label{fig:qsgd_loss}
    \end{minipage}~~
    \begin{minipage}[c]{0.43\textwidth}
    \centering
    \includegraphics[width=\textwidth, clip]{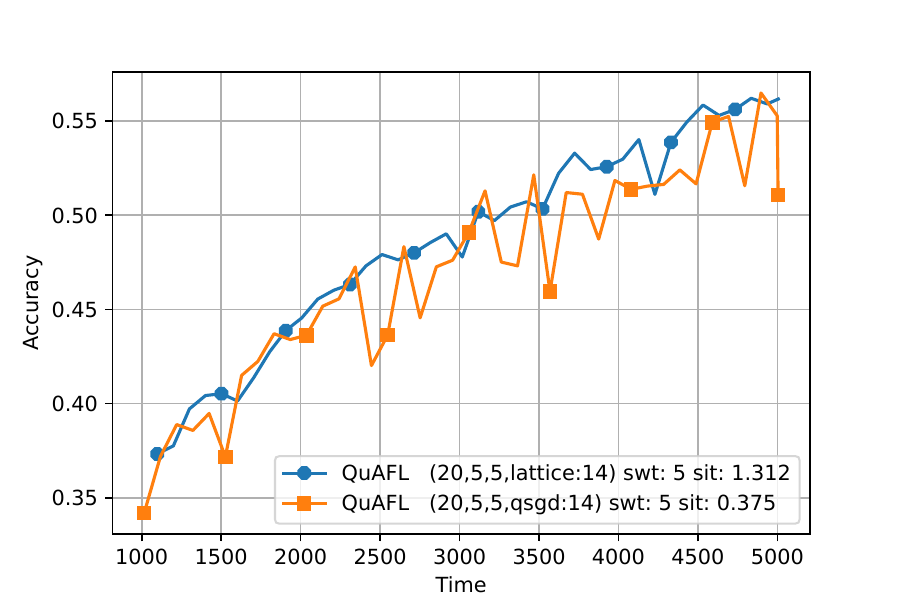}
    \vspace{-2em}
    \caption{Accuracy comparison of QuAFL with lattice quantization vs QSGD}
    \label{fig:qsgd_accuracy}
\end{minipage}
\\
    \begin{minipage}[c]{0.43\textwidth}
        \centering
        \includegraphics[width=\textwidth, clip]{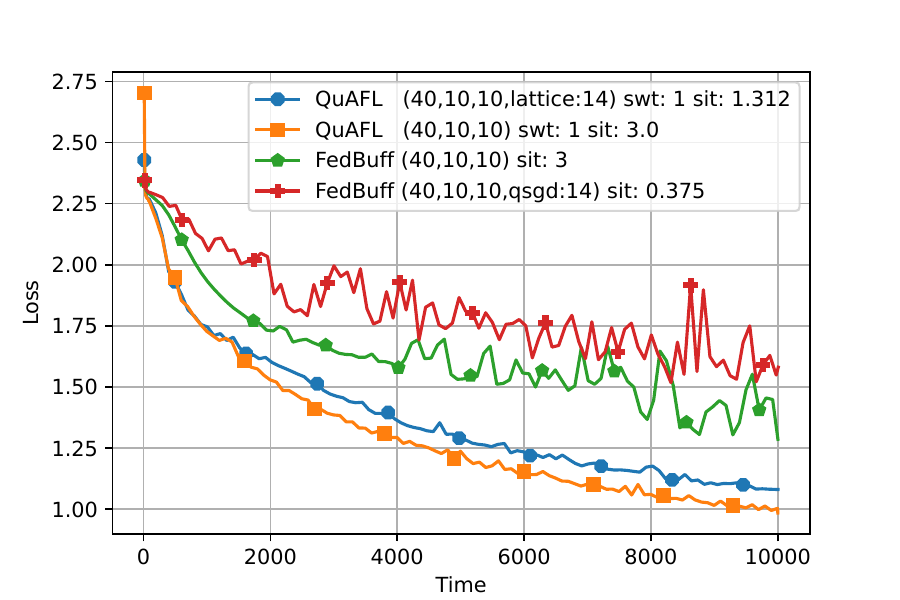}
        \vspace{-2em}
        \caption{QuAFL vs. SOTA asynchronous FL algorithm FedBuff with and without Quantization (ResNet20/CIFAR10).}
        \label{fig:fedbuff}
    \end{minipage}~~
    \begin{minipage}[c]{0.43\textwidth}
        \centering
        \includegraphics[width=\textwidth, clip]{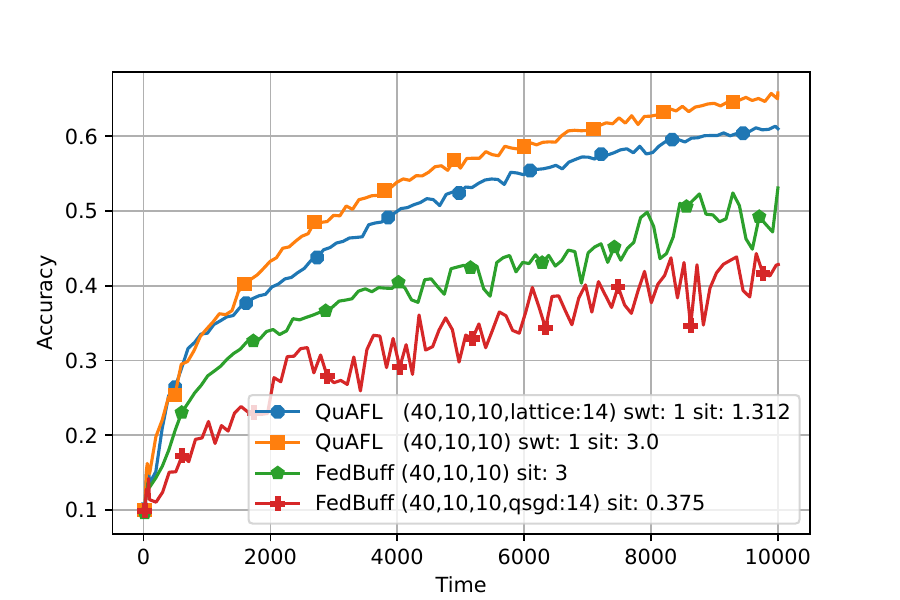}
        \vspace{-2em}
        \caption{Accuracy curves for Figure \ref{fig:fedbuff}. Convergence of QuAFL vs. SOTA asynchronous FL algorithm FedBuff with and without Quantization.}
        \label{fig:fedbuff_accuracy}
    \end{minipage}~~
    \\
    \begin{minipage}[c]{0.43\textwidth}
    \centering
    \includegraphics[width=\textwidth, clip]{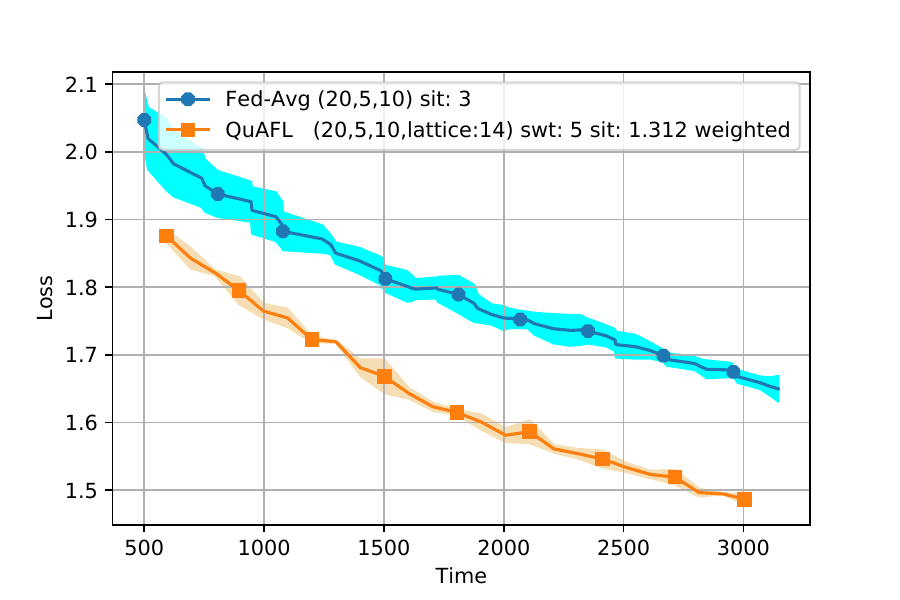}
    \vspace{-2em}
    \caption{Comparison of QuAFL vs FedAVG on ResNet20/CIFAR10, with 5 slow clients and 15 fast clients with error bars.}
    \label{fig:error_bar}
\end{minipage}
\end{figure}

In Figures~\ref{fig:accuracy-vs-time-cifar10} and \ref{fig:loss-vs-time-cifar10}, we examine the validation accuracy and loss, respectively, ensured by various algorithms versus the simulated execution time. (As in the F-MNIST experiments, we assumed the \emph{Baseline} to be a single slow node that performs an optimization step per round.) Again, the asynchronous nature of QuAFL provides a faster convergence rate than its synchronous counterparts; which can be clearly seen in the mentioned figures. 

As discussed earlier, using standard quantization techniques such as QSGD, is not theoretically justified, because in that case, the error would be proportional to the norm of the models, which can be unbounded. Therefore, there is no guarantee that these quantization techniques will always work. To show this experimentally, we ran an experiment on CIFAR10 with lr = 0.07. As shown in Figure \ref{fig:qsgd_loss} and \ref{fig:qsgd_accuracy}, QuAFL with lattice quantization works well in this setting, while QSGD quantization hurts the convergence. This experiment verifies our choice of lattice quantization.

In Figure~\ref{fig:fedbuff}, we compare QuAFL convergence (with and without quantization) relative to FedBuff, 
in terms of execution time. We have tuned each variant independently, and have also added QSGD quantization to FedBuff. 
(FedBuff is incompatible with lattice quantization, since nodes do not have a decoding key.)  
We observe that QuAFL converges faster, even with quantization: this is because QuAFL takes into account \emph{partial progress by slow clients,} whereas in FedBuff slower clients constantly contribute less significantly to the server updates. Moreover, we observe that quantization significantly increases the variance of FedBuff.

Figure \ref{fig:fedbuff_accuracy} shows the accuracy comparison of the experiment of Figure \ref{fig:fedbuff}. This figure shows that the accuracy follows the same pattern as the loss. Finally, as we mentioned before we omitted error bars in our experiments as the variance is low. We show one experiment in Figure \ref{fig:error_bar} with error bars to confirm this.

\section{The Complete Analysis}
\subsection{Overview and Notation}
Recall that $X_t$ denotes the model of the server at step $t$, and $X^i$ is the local model of client $i$ after its last interaction with the server. Also, $\tih_i$ is the sum of local gradient steps for model $X^i$ since its last interaction with the server.\\
For the convergence analysis, local steps of the clients that are not selected by the server don't have any effect on the server or other clients. Therefore we do not need to assume that clients are doing their local steps asynchronous,  and we can assume that all clients run their local gradient steps after the server contacts them. The only thing that we should consider is the randomness of the server selecting the clients, and the fact that the server can contact nodes before they have finished their $K$ steps. For this purpose, we assume that their number of steps is a random number $H_t^i$ with mean $H_i$.\\ 
To show the analysis in this setting, we introduce new notations that consider the server round. To this end, we use $X_t^i$ as the value of $X^i$ when the server is running its $t$th iteration, And $\tih_{i, t}$ for the sum of local steps at this time. We show each local step $q$ with a superscript. Formally, we have 
\begin{equation*}
\tih_{i, t}^{0}=0.
\end{equation*}
and for $1 \le q \le H_t^i$ let:
\begin{equation*}
\tih_{i, t}^q=\tg_i(X_t^i-\sum_{s=0}^{q-1}\eta \tih_{i, t}^{s}),
\end{equation*}
and 
\begin{equation*}
\tih_{i, t} = \sum_{q=0}^{H_t^i} \tih_{i, t}^{q}
\end{equation*}
Further , for $1\le q \le H_t^i$, let 
\begin{equation*}
h_{i, t}^q=\E[\tg_i(X_t^i-\sum_{s=0}^{q-1}\eta \tih_{i, t}^{s})]=\nabla f(X_t^i-\sum_{s=0}^{q-1}\eta \tih_{i, t}^{s})
\end{equation*}
be the expected value of $\tih_{i , t}^q$ taken over the randomness
of the stochastic gradient $\tg_i$. Also, we have:
\begin{equation*}
h_{i, t} = \sum_{q=0}^{H_t^i} h_{i, t}^{q}
\end{equation*}

\subsection{Properties of Local Steps}

\begin{lemma} \label{lem:sumofstochasticFixed}
For any agent $i$ and step $t$
\begin{equation*}
\E\|h_{i, t}^{q}\|^2 \le \frac{\sigma^2}{K^2}+8L^2 \E\|X_t^i-\mu_t\|^2 
+4\E\|\nabla f_i(\mu_t)\|^2.
\end{equation*}
\end{lemma}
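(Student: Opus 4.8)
Fix an agent $i$ and a server round $t$, write $D := \E\|X_t^i-\mu_t\|^2$, and introduce the running iterate $y_q := X_t^i - \eta\sum_{s=0}^{q-1}\tih_{i,t}^s$, so that $\tih_{i,t}^q = \tg_i(y_q)$, $\,h_{i,t}^q = \nabla f_i(y_q)$, $\,y_0 = X_t^i$, and $y_{q+1} = y_q - \eta\tih_{i,t}^q$; the quantity to bound is $\E\|\nabla f_i(y_q)\|^2$ for $q\le H_t^i\le K$. The plan is to reduce the bound to controlling the \emph{local drift} $d_q := \E\|y_q - X_t^i\|^2$, and then to bound this drift through a linear recursion. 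For the reduction, $L$-smoothness (\ref{eqn:lipschitz_assumption_f_i}) together with $\|a+b\|^2\le 2\|a\|^2+2\|b\|^2$, applied first to split $\nabla f_i(y_q) = \nabla f_i(\mu_t) + (\nabla f_i(y_q)-\nabla f_i(\mu_t))$ and then to split $y_q-\mu_t = (y_q-X_t^i)+(X_t^i-\mu_t)$, gives
\begin{equation*}
\E\|h_{i,t}^q\|^2 \;\le\; 4L^2 d_q \;+\; 4L^2 D \;+\; 2\,\E\|\nabla f_i(\mu_t)\|^2 .
\end{equation*}
Routing the first split directly through $\mu_t$, rather than via $X_t^i$, is what keeps the coefficient of $\E\|\nabla f_i(\mu_t)\|^2$ down to $2$ here, leaving exactly the slack needed for the drift term; with $d_q$ negligible this display already implies the claim.

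Next I would bound the drift. From $y_{q+1}-X_t^i = (y_q-X_t^i) - \eta\tih_{i,t}^q$ and Young's inequality with a parameter $\delta>0$, and using the conditional form of the variance bound (\ref{eqn:variancebound_i}), namely $\E\|\tih_{i,t}^q\|^2\le \sigma^2 + \E\|h_{i,t}^q\|^2$ (since $\E[\tg_i(y_q)\mid y_q]=\nabla f_i(y_q)$),
\begin{equation*}
d_{q+1} \;\le\; (1+\delta)\,d_q + \big(1+\tfrac1\delta\big)\,\eta^2\big(\sigma^2 + \E\|h_{i,t}^q\|^2\big) \;\le\; \alpha\, d_q + \beta ,
\end{equation*}
where, after substituting the reduction display, $\alpha = (1+\delta) + 4\big(1+\tfrac1\delta\big)L^2\eta^2$ and $\beta = \big(1+\tfrac1\delta\big)\,\eta^2\big(\sigma^2 + 4L^2 D + 2\,\E\|\nabla f_i(\mu_t)\|^2\big)$.

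Choosing $\delta = \tfrac1{2K}$ and using that $\eta$ is small enough that $4\big(1+\tfrac1\delta\big)L^2\eta^2\le \delta$ — which holds under the parameter settings of Theorem~\ref{thm:quantized}, where $\eta = (n+1)/\sqrt T$ with $T$ large — yields $\alpha\le 1+\tfrac1K$, so $d_0=0$ unrolls to
\begin{equation*}
d_q \;\le\; \beta\sum_{j=0}^{q-1}\alpha^j \;\le\; \beta\, q\, e \;\le\; 3e\,K^2\eta^2\big(\sigma^2 + 4L^2 D + 2\,\E\|\nabla f_i(\mu_t)\|^2\big).
\end{equation*}
Plugging this back into the reduction display, and using once more that $\eta$ is small enough (here $12e\,K^2L^2\eta^2\le \tfrac1{K^2}$, i.e.\ $T\gtrsim K^4L^2n^2$, again implied by the theorem's regime), the $4L^2 d_q$ term is at most $\tfrac{\sigma^2}{K^2} + 4L^2 D + 2\,\E\|\nabla f_i(\mu_t)\|^2$, so
\begin{equation*}
\E\|h_{i,t}^q\|^2 \;\le\; \tfrac{\sigma^2}{K^2} + 8L^2 D + 4\,\E\|\nabla f_i(\mu_t)\|^2,
\end{equation*}
which is the claim.

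The main obstacle is the circularity of the argument: the magnitude of the gradient at the drifted point $y_q$ depends on how far $y_q$ has moved from $X_t^i$, which in turn is driven by the magnitudes of the earlier gradients. The resolution is the linear recursion $d_{q+1}\le\alpha d_q+\beta$ with a geometric-series unrolling; this works precisely because $\eta = O(1/(K^2L))$ — implied by $T$ being large — makes the self-referential contribution $4\big(1+\tfrac1\delta\big)L^2\eta^2$ a genuine contraction, and one must also be mildly careful with the constants in the two Young splits to land at the coefficients $8L^2$ and $4$ exactly rather than something larger. (The same bound could be obtained by inducting on $q$ directly on $\E\|h_{i,t}^q\|^2$, but the drift formulation makes the bookkeeping cleaner.)
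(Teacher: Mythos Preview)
Your proposal is correct and takes essentially the same approach as the paper: split $\nabla f_i(y_q)$ through $\nabla f_i(\mu_t)$, invoke $L$-smoothness, and close a recursion over $q$ under the step-size condition $\eta = O(1/(LK^2))$. The only cosmetic difference is that you track the drift $d_q=\E\|y_q-X_t^i\|^2$ via a Young-type linear recursion and then plug back, whereas the paper bounds $\E\|h_{i,t}^q\|^2 \le 4L^2D + 4\eta^2L^2 q\sum_{s<q}(\E\|h_{i,t}^s\|^2+\sigma^2) + 2\E\|\nabla f_i(\mu_t)\|^2$ and inducts directly on $\E\|h_{i,t}^q\|^2$ --- exactly the alternative you mention in your last sentence.
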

\begin{proof}

\begin{align*}
\E\|h_{i, t}^{q}\|^2 &{\le}
\E\Bigg \|\Big(\nabla f_i(X_t^i-\sum_{s=0}^{q-1} \eta\tih_{i, t}^{s})-\nabla f_i(\mu_t)\Big)
+\nabla f_i(\mu_t)
\Bigg\|^2 
\\ &{\le} 
2\E\Bigg \|\nabla f_i(X_t^i-\sum_{s=0}^{q-1} \eta\tih_{i, t}^{s})-\nabla f_i(\mu_t)\Bigg \|^2
+2\E\|\nabla f_i(\mu_t)\|^2
\\
&{\le} 4L^2 \E\|X_t^i-\mu_t\|^2 + 4\eta^2L^2q \sum_{s=0}^{q-1} \E\| \tih_{i, t}^{s}\|^2
+2 \E \|\nabla f_i(\mu_t)\|^2
\\
&{\le} 4L^2 \E\|X_t^i-\mu_t\|^2 + 4\eta^2L^2q \sum_{s=0}^{q-1} (\E\| h_{i, t}^{s}\|^2 + \sigma^2)
+2 \E \|\nabla f_i(\mu_t)\|^2
\end{align*}

the rest of the proof is done by induction, and assuming $\eta < \frac{1}{4LK^2}$.
\end{proof}
\begin{lemma} \label{lem:sumofstochasticG}
For any  step $t$, we have that 
\begin{align*}
\sum_{i=1}^n \eta_i^2 \E\|\tih_{i, t}\|^2 \le 
2nK((\frac{1}{n} \sum_{i=1}^n \eta_i^2) \sigma^2 +2KG^2)+8L^2K^2 \E[\Phi_t]
+4nK^2B^2 \E \|\nabla f(\mu_t)\|^2.
\end{align*}
\end{lemma}
\begin{proof}
    Using lemma \ref{lem:sumofstochasticFixed}
\begin{align*}
\sum_{i=1}^n \eta_i^2\E\|\tih_{i, t}\|^2 &=
\sum_{i=1}^n \eta_i^2\sum_{h=0}^{K} Pr[H_{t}^i=h] \E\|\sum _{q=1}^h \tih_{i, t}^{q}\|^2 \\&
\le \sum_{i=1}^n \eta_i^2 \sum_{h=1}^{K} Pr[H_{t}^i=h] h \sum _{q=1}^h \E\| \tih_{i, t}^{q}\|^2 \\&
\le nK (\frac{1}{n} \sum_{i=1}^n \eta_i^2) \sigma^2 + \sum_{i=1}^n \eta_i^2 \sum_{h=1}^{K} Pr[H_{t}^i=h] h \sum _{q=1}^h \E\| h_{i, t}^{q}\|^2 \\&
\le nK (\frac{1}{n} \sum_{i=1}^n \eta_i^2)\sigma^2 + \sum_{i=1}^n \eta_i^2 K^2\Bigg(
\frac{\sigma^2}{K^2} + 8L^2 \E\|X_t^i-\mu_t\|^2 
+4\E\|\nabla f_i(\mu_t)\|^2
\Bigg)\\&
\le 2nK (\frac{1}{n} \sum_{i=1}^n \eta_i^2) \sigma^2 + \sum_{i=1}^n  \eta_i^2 K^2\Bigg(
8L^2 \E\|X_t^i-\mu_t\|^2 
+4\E\|\nabla f_i(\mu_t)\|^2
\Bigg)\\&
\le
2nK (\frac{1}{n} \sum_{i=1}^n \eta_i^2)\sigma^2+8L^2K^2 \E[\Phi_t]
+4nK^2G^2
+4nK^2B^2 \E \|\nabla f(\mu_t)\|^2.
\end{align*}
\end{proof}

\begin{lemma} \label{lem:gradientdifferencefixed}
For any local step $1 \le q$, and agent $1 \le i \le n$ and step $t$
\begin{align*}
&\E \|\nabla f_i(\mu_t)-h_{i, t}^q\|^2 \le 4L^2\eta^2q^2\sigma^2  +  4L^2 \E\|X_t^i-\mu_t\|^2 
+8L^2\eta^2q^2\E\|\nabla f_i(\mu_t)\|^2.
\end{align*}
\end{lemma}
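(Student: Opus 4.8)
The plan is to compare $h_{i,t}^q = \nabla f_i\!\big(X_t^i - \eta\sum_{s=0}^{q-1}\tih_{i,t}^s\big)$ with $\nabla f_i(\mu_t)$ directly via $L$-Lipschitzness of $\nabla f_i$ (Assumption~\eqref{eqn:lipschitz_assumption_f_i}). First I would write
$$\E\|\nabla f_i(\mu_t) - h_{i,t}^q\|^2 \le L^2\,\E\Big\| \mu_t - X_t^i + \eta\sum_{s=0}^{q-1}\tih_{i,t}^s\Big\|^2 \le 2L^2\,\E\|X_t^i - \mu_t\|^2 + 2L^2\eta^2\,\E\Big\|\sum_{s=0}^{q-1}\tih_{i,t}^s\Big\|^2,$$
using $\|a+b\|^2 \le 2\|a\|^2 + 2\|b\|^2$. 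The first term is already of the desired form (with slack, since $2<4$); the remaining work is to control the accumulated-gradient term.

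For the second term I would apply the power-mean inequality $\big\|\sum_{s=0}^{q-1} v_s\big\|^2 \le q\sum_{s=0}^{q-1}\|v_s\|^2$ and then bound each summand by its mean square: since $h_{i,t}^s$ is the conditional expectation of $\tih_{i,t}^s$ and the stochastic-gradient variance is at most $\sigma^2$ (Assumption~\eqref{eqn:variancebound_i}), we have $\E\|\tih_{i,t}^s\|^2 \le \sigma^2 + \E\|h_{i,t}^s\|^2$. Plugging in Lemma~\ref{lem:sumofstochasticFixed}, namely $\E\|h_{i,t}^s\|^2 \le \sigma^2/K^2 + 8L^2\E\|X_t^i-\mu_t\|^2 + 4\E\|\nabla f_i(\mu_t)\|^2$, summing the at most $q$ terms, and absorbing $\sigma^2/K^2 \le \sigma^2$, I obtain
$$\E\Big\|\sum_{s=0}^{q-1}\tih_{i,t}^s\Big\|^2 \le q^2\Big(2\sigma^2 + 8L^2\,\E\|X_t^i-\mu_t\|^2 + 4\,\E\|\nabla f_i(\mu_t)\|^2\Big).$$

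Substituting this back and collecting terms yields a coefficient $4L^2\eta^2 q^2$ on $\sigma^2$ and $8L^2\eta^2 q^2$ on $\E\|\nabla f_i(\mu_t)\|^2$ — both already matching the statement — together with a coefficient $2L^2 + 16L^4\eta^2 q^2$ on $\E\|X_t^i-\mu_t\|^2$. To finish I would invoke the same stepsize restriction used in the proof of Lemma~\ref{lem:sumofstochasticFixed}, $\eta < \tfrac{1}{4LK^2}$, so that $\eta q \le \eta K \le \tfrac{1}{4L}$ for $q \le K$, giving $16L^4\eta^2 q^2 \le L^2$ and hence $2L^2 + 16L^4\eta^2 q^2 \le 4L^2$, as required. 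The only mild obstacle is the bookkeeping: making sure the variance-plus-mean split for $\E\|\tih_{i,t}^s\|^2$ is done with the correct conditioning (over the stochastic gradient at step $s$ given the earlier steps), and checking that the stepsize bound is exactly strong enough to fold the leftover $\|X_t^i-\mu_t\|^2$ contribution into the stated $4L^2$ coefficient; everything else is routine.
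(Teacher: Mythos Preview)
Your proposal is correct and follows essentially the same route as the paper: Lipschitzness of $\nabla f_i$, the split $\|a+b\|^2\le 2\|a\|^2+2\|b\|^2$, Cauchy--Schwarz on the sum of $q$ local gradients, the variance decomposition $\E\|\tih_{i,t}^s\|^2 \le \sigma^2 + \E\|h_{i,t}^s\|^2$, and then Lemma~\ref{lem:sumofstochasticFixed} followed by a stepsize bound to absorb the $16L^4\eta^2q^2$ term. The only cosmetic difference is that the paper invokes the slightly weaker restriction $\eta < \tfrac{1}{4LK}$ at the last step (which already suffices for $16L^4\eta^2q^2 \le L^2$), whereas you quote the stronger $\eta < \tfrac{1}{4LK^2}$ from the previous lemma; either works.
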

\begin{proof}

\begin{align*}
\E \|\nabla f_i(\mu_t)-h_{i, t}^q\|^2&= \E\|\nabla f_i(\mu_t)-\nabla f_i(X_t^i-\sum_{s=0}^{q-1}\eta \tih_{i, t}^{s})\|^2 \\ &{\le}
L^2 \E \|\mu_t-X_t^i+\sum_{s=0}^{q-1} \eta \tih_{i, t}^{s}\|^2 \\&
{\le}  2L^2 \E \|X_t^i-\mu_t \|^2+2L^2\eta^2 \E\|\sum_{s=0}^{q-1}\tih_{i, t}^{s}\|^2
\\&
{\le}  2L^2 \E \|X_t^i-\mu_t \|^2+2L^2\eta^2q \sum_{s=0}^{q-1}\E\|\tih_{i, t}^{s}\|^2
\\&
\overset{\text{Lemma (\ref{lem:sumofstochasticFixed}})}{\le}  2L^2 \E \|X_t^i-\mu_t \|^2+2L^2\eta^2q^2 \bigg(2\sigma^2+8L^2 \E\|X_t^i-\mu_t\|^2 
 \\ &\quad \quad \quad \quad \quad \quad \quad \quad \quad \quad
+4\E\|\nabla f_i(\mu_t)\|^2 \bigg)
 \\ &\quad \quad \quad \quad  
= 4L^2\eta^2q^2\sigma^2  +   (2L^2 + 16L^4\eta^2q^2) \E\|X_t^i-\mu_t\|^2 
  \\ &\quad \quad \quad \quad \quad \quad 
+8L^2\eta^2q^2\E\|\nabla f_i(\mu_t)\|^2
 \\ &\quad \quad \quad \quad  
\le 4L^2\eta^2q^2\sigma^2  +  4L^2 \E\|X_t^i-\mu_t\|^2 
+8L^2\eta^2q^2\E\|\nabla f_i(\mu_t)\|^2
\end{align*}
and the last inequality comes from $\eta < \frac{1}{4LK}$.
\end{proof}

\begin{lemma} \label{lem:gradientdifference}
For any time step $t$
\begin{align*}
\sum_{i=1}^n \E \langle \nabla f(\mu_t), -\eta_i h_{i, t} \rangle \le 4KL^2\E[\Phi_t] &+ (-\frac{3H_{\min}n}{4} + 8B^2L^2\eta^2K^3n)\E\|\nabla f(\mu_t)\|^2 \\&+ 4nL^2\eta^2K^3(\sigma^2 + 2G^2).
\end{align*}

\end{lemma}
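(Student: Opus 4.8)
The plan is to decompose the inner product term-by-term over local steps and agents, and then control each piece using the earlier lemmas on local-step behavior. First I would write $h_{i,t} = \sum_{q=1}^{H_t^i} h_{i,t}^q$ and expand $\sum_{i=1}^n \E\langle \nabla f(\mu_t), -h_{i,t}\rangle = \sum_{i=1}^n \sum_q \Pr[H_t^i \ge q]\, \E\langle \nabla f(\mu_t), -h_{i,t}^q\rangle$. For each summand, add and subtract $\nabla f_i(\mu_t)$: write $-h_{i,t}^q = -\nabla f_i(\mu_t) + (\nabla f_i(\mu_t) - h_{i,t}^q)$. The "main" piece $\sum_i \sum_q \Pr[H_t^i \ge q]\,\E\langle \nabla f(\mu_t), -\nabla f_i(\mu_t)\rangle$, after summing the inner sum over $i$ and using $\sum_i \nabla f_i(\mu_t) = n\nabla f(\mu_t)$ together with $\E[\,\text{number of steps}\,]=H$, yields roughly $-Hn\,\E\|\nabla f(\mu_t)\|^2$; the factor $3/4$ in the statement presumably comes from absorbing a small slack later. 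The "error" piece is $\sum_i \sum_q \Pr[H_t^i\ge q]\,\E\langle \nabla f(\mu_t), \nabla f_i(\mu_t) - h_{i,t}^q\rangle$, which I would bound via Young's inequality $\langle a,b\rangle \le \frac{1}{2}\alpha\|a\|^2 + \frac{1}{2\alpha}\|b\|^2$ with a suitably chosen $\alpha$, turning it into a multiple of $\E\|\nabla f(\mu_t)\|^2$ plus a multiple of $\sum_{i,q}\Pr[H_t^i\ge q]\,\E\|\nabla f_i(\mu_t) - h_{i,t}^q\|^2$.

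Next I would invoke Lemma~\ref{lem:gradientdifferencefixed} to bound $\E\|\nabla f_i(\mu_t) - h_{i,t}^q\|^2$ by $4L^2\eta^2 q^2\sigma^2 + 4L^2\E\|X_t^i - \mu_t\|^2 + 8L^2\eta^2 q^2 \E\|\nabla f_i(\mu_t)\|^2$. Summing over $q \le K$ gives factors of $K$ (or $K^3$ where $q^2$ appears), summing over $i$ turns $\sum_i \E\|X_t^i-\mu_t\|^2$ into (a part of) $\E[\Phi_t]$ — in fact $\sum_i \E\|X_t^i - \mu_t\|^2 \le \E[\Phi_t]$ directly from the definition of $\Phi_t$ — and $\sum_i \E\|\nabla f_i(\mu_t)\|^2$ is handled by the Bounded Gradient Dissimilarity assumption \eqref{eqn:varsigmabound}, producing $n G^2 + n B^2 \E\|\nabla f(\mu_t)\|^2$. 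Collecting: the $\Phi_t$ terms give the $4KL^2\E[\Phi_t]$; the $\sigma^2$ and $G^2$ pieces combine into $4nL^2\eta^2 K^3(\sigma^2 + 2G^2)$; and the $B^2$-weighted gradient-norm piece, together with the residual from Young's inequality, gives the $(-\tfrac{3Hn}{4} + 8B^2L^2\eta^2K^3 n)\E\|\nabla f(\mu_t)\|^2$ coefficient. Throughout, the constraint $\eta < \tfrac{1}{4LK}$ (already used in Lemma~\ref{lem:gradientdifferencefixed}) lets me discard higher-order-in-$\eta$ terms and fix the numerical constants.

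The main obstacle I anticipate is bookkeeping the constants so that the cross-term from Young's inequality is small enough to leave a clean coefficient of $-\tfrac{3Hn}{4}$ on $\E\|\nabla f(\mu_t)\|^2$: one must choose the splitting parameter $\alpha$ so that the positive contribution from the error term (of order $L^2\eta^2K^2 \cdot nB^2 \cdot (\text{something})$) is dominated by the slack $\tfrac{1}{4}Hn$ left over from the main term, which requires using $\eta K \lesssim 1/L$ and $H \ge 1$ carefully, and also treating the step-count randomness correctly (the identity $\sum_q \Pr[H_t^i \ge q] = \E[H_t^i]$ must be applied to the right sub-sums, noting $h_{i,t}^q$ depends on $q$ only through $X_t^i$ and the accumulated steps, not on the event $\{H_t^i \ge q\}$ beyond indexing). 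The remaining algebra — summing geometric-like $\sum_{q\le K} q^2 \le K^3$ bounds and merging the $\sigma^2$ and $G^2$ coefficients — is routine.
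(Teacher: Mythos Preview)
Your proposal is correct and follows essentially the same route as the paper's proof: expand over local steps, add and subtract $\nabla f_i(\mu_t)$, apply Young's inequality to the error piece, and invoke Lemma~\ref{lem:gradientdifferencefixed} together with the dissimilarity assumption~\eqref{eqn:varsigmabound}. The only cosmetic differences are that the paper expands via the point mass $\Pr[H_t^i=h]$ rather than the tail $\Pr[H_t^i\ge q]$, and it fixes the Young parameter once and for all as $\langle a,b\rangle \le \tfrac{1}{4}\|a\|^2 + \|b\|^2$; this directly produces a $+\tfrac{Hn}{4}\E\|\nabla f(\mu_t)\|^2$ cross term which, combined with the main $-Hn\,\E\|\nabla f(\mu_t)\|^2$, gives the $-\tfrac{3Hn}{4}$ coefficient without any further tuning of~$\alpha$.
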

\begin{proof}
\begin{align*}
\sum_{i=1}^n &\E \langle \nabla f(\mu_t), -\eta_ih_{i, t} \rangle
=
\sum_{i=1}^n  \sum_{h=1}^{K} Pr[H_{t}^i=h] \E \langle \nabla f(\mu_t), -\eta_i \sum_{q=1}^h  h_{i, t}^q \rangle +  \sum_{i=1}^n  Pr[H_{t}^i=0] \E \langle \nabla f(\mu_t), 0\rangle \\ &= \sum_{i=1}^n  \eta_i \sum_{h=1}^{K} Pr[H_{t}^i=h] \sum_{q=1}^h 
\Big(
\E \langle \nabla f(\mu_t),\nabla f_i(\mu_t) -h_{i, t}^q \rangle - \E\langle\nabla f(\mu_t),\nabla f_i(\mu_t) \rangle \Big)
\end{align*}
Using Young's inequality we can upper bound 
$\E \langle \nabla f(\mu_t),\nabla f_i(\mu_t) -h_{i, t}^q\rangle$ by $$\frac{\E\|\nabla f(\mu_t)\|^2}{4}+\E\Big\|\nabla f_i(\mu_t)  -h_{i, t}^q\Big\|^2.$$
Plugging this in the above inequality we get:

\begin{align*}
\sum_{i=1}^n &\E \langle \nabla f(\mu_t), -\eta_ih_{i, t} \rangle \\
&\le \sum_{i=1}^n \eta_i \sum_{h=1}^{K} Pr[H_{t}^i=h] \sum_{q=1}^h 
\Big(
\E\|\nabla f(\mu_t) -h_{i, t}^q\|^2 + \frac{\E\|\nabla f(\mu_t)\|^2}{4} - 
\E\langle\nabla f(\mu_t),\nabla f_i(\mu_t) \rangle\Big)
\\&\overset{\text{Lemma \ref{lem:gradientdifferencefixed}}}{\le} \sum_{i=1}^n \eta_i  \sum_{h=1}^{K} Pr[H_{t}^i=h] \sum_{q=1}^h  
\Big(
4L^2\eta^2q^2\sigma^2  +  4L^2 \E\|X_t^i-\mu_t\|^2 
+8L^2\eta^2q^2\E\|\nabla f_i(\mu_t)\|^2
\\&\quad\quad\quad\quad\quad\quad\quad\quad\quad\quad
+ \frac{\E\|\nabla f(\mu_t)\|^2}{4} - 
\E\langle\nabla f(\mu_t),\nabla f_i(\mu_t) \rangle\Big)
\\&\le
\sum_{i=1}^n \eta_i \sum_{h=1}^{K} Pr[H_{t}^i=h] h \Big(
4L^2\eta^2h^2\sigma^2  +  4L^2 \E\|X_t^i-\mu_t\|^2 
+8L^2\eta^2h^2\E\|\nabla f_i(\mu_t)\|^2
\\&\quad\quad\quad\quad\quad\quad\quad\quad\quad\quad
 + \frac{\E\|\nabla f(\mu_t)\|^2}{4} - 
\E\langle\nabla f(\mu_t),\nabla f_i(\mu_t) \rangle \Big)
\\&\le
\sum_{i=1}^n \eta_i H_i \Big(
4L^2\eta^2K^2\sigma^2  +  4L^2 \E\|X_t^i-\mu_t\|^2 
+8L^2\eta^2K^2\E\|\nabla f_i(\mu_t)\|^2
\\&\quad\quad\quad\quad\quad\quad\quad\quad\quad\quad
 + \frac{\E\|\nabla f(\mu_t)\|^2}{4} - 
\E\langle\nabla f(\mu_t),\nabla f_i(\mu_t) \rangle \Big)
\\&\le
4KL^2 \E[\Phi_t] + 4nL^2\eta^2K^3((\frac{1}{n} \sum_{i=1}^n \eta_i^2)\sigma^2 + 2G^2)
+(8B^2nL^2\eta^2K^3 + \frac{H_{\min}n}{4} - H_{\min}n) \E \|\nabla f(\mu_t)\|^2\Big)
\end{align*}
Where in the last step we used that for each $i$, $\eta_i H_i = H_{\min}$, and $\sum_{i=1}^n \frac{f_i(x)}{n}=f(x)$, for any vector $x$.
\end{proof}

\paragraph{Discussion.} In the analysis, especially in this lemma, the value of $\eta_i H_i$s should be the same for all clients. However, we don't need them to be fixed in different rounds. Therefore, it is possible to extend the result to the case that clients' speeds would change, and they have different expected numbers of local steps in different rounds. Accordingly, the algorithm sets different values for $\eta_i^t$s. The final result would depend on the average number of local steps of each client during the whole training.

\subsection{Upper Bounding Potential Functions}
We proceed by proving the lemma \ref{lem:PhiBoundPerStepAsyncT} which upper bounds the expected change in potential:

\begin{replemma} {lem:PhiBoundPerStepAsyncT}
For any time step $t$ we have:
\begin{equation*} \label{lem:potentialperstep}
\E[\Phi_{t+1}] \le \left(1 - \frac{1}{4n}\right)\E[\Phi_t] +
8s\eta^2\sum_{i = 1}^n \E\|\tih_{i, t}\|^2 + 16n({R}^2+7)^2\gamma^2. 
\end{equation*}
\end{replemma}

\begin{proof}
First we bound change in potential $\Delta_t=\Phi_{t+1}-\Phi_t$ for some fixed time step $t>0$.

For this, let $\Delta_t^{S}$ be the change in potential when set $S$ of agents wake up.
for $i \in S$ define $S_t^i$ and $S_t$ as follows:
\begin{align*}
&S_t^i=-\frac{s}{s+1}\eta \eta_i \tih_{i, t}+\frac{Q(X_t)-X_t}{s+1} \\&
S_t=-\frac{1}{s+1}\eta \eta_i \sum_{i \in S}\tih_{i, t}+\frac{1}{s+1} \sum_{i \in S} (Q(X_t^i-\eta \eta_i \tih_{i, t}) - (X_t^i-\eta \eta_i \tih_{i, t}))
\end{align*}
We have that:
\begin{align*}
X_{t+1}^i&=\frac{s X_t^i+X_t}{s+1}+S_t^i \\
X_{t+1}&=\frac{\sum_{i \in S} X_t^i+X_t}{s+1}+S_t \\
\mu_{t+1}&=\mu_t+\frac{\sum_{j \in S} S_t^j+S_t}{n+1}
\end{align*}
This gives us that for $i \in S$:
\begin{align*}
X_{t+1}^i-\mu_{t+1}&=\frac{s X_t^i+X_t}{s+1}+S_t^i - \frac{\sum_{j \in S} S_t^j+S_t}{n+1}-\mu_t \\
X_{t+1}-\mu_{t+1}&=\frac{\sum_{i \in S} X_t^i+X_t}{s+1}+S_t -\frac{\sum_{j \in S} S_t^j+S_t}{n+1} -\mu_t \\
\end{align*}
For $k \not \in S$ we get that
\begin{equation*}
X_{t+1}^k-\mu_{t+1}=X_{t}^k-\frac{\sum_{j \in S} S_t^j+S_t}{n+1}-\mu_t.
\end{equation*}

Hence:
\begin{align*} \label{eq:PhiBound2V2}
\Delta_t^{S}&=\sum_{i \in S} \Big(\Big \|\frac{s X_t^i+X_t}{s+1}+S_t^i - \frac{\sum_{j \in S} S_t^j+S_t}{n+1}-\mu_t \Big\|^2 -\Big\|X_t^i-\mu_t\Big\|^2\Big) \\
&\quad +\Big \|\frac{\sum_{i \in S} X_t^i+X_t}{s+1}+S_t -\frac{\sum_{j \in S} S_t^j+S_t}{n+1} -\mu_t\Big \|^2-\Big\|X_t-\mu_t\Big\|^2 \\
&\quad +\sum_{k \not \in S}\Big(\Big\|X_{t}^k-\frac{\sum_{j \in S} S_t^j+S_t}{n+1}-\mu_t \Big \|^2-\Big\|X_t^k-\mu_t\Big\|^2 \Big) \\
&=\sum_{i \in S} \Big(\Big \|\frac{s X_t^i+X_t}{s+1} - \mu_t\Big \|^2 + \Big \| S_t^i + \frac{\sum_{j \in S} S_t^j+S_t}{n+1}\Big \|^2 \\
&\quad \quad +2\Big\langle \frac{s X_t^i+X_t}{s+1} - \mu_t,S_t^i - \frac{\sum_{j \in S} S_t^j+S_t}{n+1}\Big\rangle -\Big\|X_t^i-\mu_t\Big\|^2\Big) \\
&+\Big(\Big \|\frac{\sum_{i \in S} X_t^i+X_t}{s+1} - \mu_t\Big \|^2 + \Big \| S_t -\frac{\sum_{j \in S} S_t^j+S_t}{n+1}\Big \|^2 \\
&\quad \quad +2\Big\langle \frac{\sum_{i \in S} X_t^i+X_t}{s+1} - \mu_t, S_t -\frac{\sum_{j \in S} S_t^j+S_t}{n+1}\Big\rangle -\Big\|X_t-\mu_t\Big\|^2\Big) \\
& +\sum_{k \not \in S} 2\Big\langle X_{t}^k-\mu_t, -\frac{\sum_{j \in S} S_t^j+S_t}{n+1}\Big\rangle + \sum_{k \not \in S}\Big\|\frac{\sum_{j \in S} S_t^j+S_t}{n+1} \Big \|^2
\end{align*}

Observe that:
\begin{equation*}
\sum_{k=0}^n \Big\langle X_t^k-\mu_t, -\frac{\sum_{j \in S} S_t^j+S_t}{n+1} \Big\rangle=0. \end{equation*}

After combining the above  two equations, we get that:
\begin{align*} \label{eq:PhiBound2V2}
\Delta_t^{S}&=\sum_{i \in S} \Big(\Big \|\frac{s (X_t^i - \mu_t) + (X_t - \mu_t)}{s+1} \Big \|^2 - \frac{s}{s+1} \Big\|X_t^i-\mu_t\Big\|^2 - \frac{1}{s+1} \Big\|X_t-\mu_t\Big\|^2 \Big) \\
&+\Big(\Big \|\frac{\sum_{i \in S} (X_t^i - \mu_t)+(X_t-\mu_t)}{s+1}\Big \|^2 -\sum_{i \in S} \frac{1}{s+1} \Big\|X_t^i-\mu_t\Big\|^2 - \frac{1}{s+1} \Big\|X_t-\mu_t\Big\|^2 \Big)\\
&+\sum_{i \in S} \Big( \Big \| S_t^i - \frac{\sum_{j \in S} S_t^j+S_t}{n+1}\Big \|^2 +2\Big\langle \frac{s X_t^i+X_t}{s+1} - \mu_t,S_t^i\Big\rangle \Big) \\
&+\Big \| S_t -\frac{\sum_{j \in S} S_t^j+S_t}{n+1}\Big \|^2 +2\Big\langle \frac{\sum_{i \in S} X_t^i+X_t}{s+1} - \mu_t, S_t\Big\rangle \\
& + \sum_{k \not \in S}\Big\|\frac{\sum_{j \in S} S_t^j+S_t}{n+1} \Big \|^2
\end{align*}

By simplifying the above, we get:

\begin{align*} 
\Delta_t^{S}&=\frac{-s}{(s+1)^2}\sum_{i \in S}  \|X_t^i - X_t \|^2 -\frac{1}{(s+1)^2} \sum_{i \in S}  \|X_t^i - X_t \|^2 -\frac{1}{(s+1)^2} \sum_{i, j \in S}  \|X_t^i - X_t^j \|^2)  \\
&+\sum_{i \in S} \Big \| S_t^i - \frac{\sum_{j \in S} S_t^j+S_t}{n+1}\Big \|^2 + \frac{2s}{s+1}\sum_{i \in S}\Big\langle X_t^i - \mu_t,S_t^i\Big\rangle + \frac{2}{s+1}\sum_{i \in S}\Big\langle X_t - \mu_t,S_t^i\Big\rangle \\
&+\Big \| S_t -\frac{\sum_{j \in S} S_t^j+S_t}{n+1}\Big \|^2 +\frac{2}{s+1} \sum_{i \in S} \Big\langle X_t^i - \mu_t, S_t\Big\rangle
 +\frac{2}{s+1} \Big\langle X_t - \mu_t, S_t\Big\rangle \\
& + \sum_{k \not \in S}\Big\|\frac{\sum_{j \in S} S_t^j+S_t}{n+1} \Big \|^2
\end{align*}

Let $\alpha$ be a parameter we will fix later:
\begin{align*}
\Big\langle X_t^i-\mu_t, S_t^i\Big\rangle
\overset{\text{Young}}{\le} \alpha \Big\| X_t^i-\mu_t\Big\|^2+ \frac{\Big\|S_t^i\Big\|^2}{4\alpha}
\end{align*}
Finally, we get that
\begin{align*}
\Delta_t^{S}&\le\frac{-1}{s+1}\sum_{i \in S}  \|X_t^i - X_t \|^2 + 2\sum_{i \in S} \Big \| S_t^i \Big \|^2 + \frac{2s(s+1)}{(n+1)^2}\sum_{j \in S} \Big \| S_t^j \Big \|^2 + \frac{2s(s+1)}{(n+1)^2} \Big \| S_t \Big \|^2  \\&+ 
\sum_{i \in S} \frac{2s\alpha}{s+1} \Big\| X_t^i-\mu_t\Big\|^2+  \sum_{i \in S} \frac{s\Big\|S_t^i\Big\|^2}{2\alpha(s+1)} + \sum_{i \in S} \frac{2\alpha}{s+1} \Big\| X_t-\mu_t\|^2+  \sum_{i \in S} \frac{\Big\|S_t^i\Big\|^2}{2\alpha(s+1)} \\
&+2\Big \| S_t \Big \|^2 + \frac{2(s+1)}{(n+1)^2}\sum_{j \in S} \Big \| S_t^j \Big \|^2 + \frac{2(s+1)}{(n+1)^2} \Big \| S_t \Big \|^2 +\sum_{i \in S} \frac{2\alpha}{s+1} \Big\| X_t^i-\mu_t\|^2+  \sum_{i \in S} \frac{\Big\|S_t\Big\|^2}{2\alpha(s+1)}\\&
 +\frac{2\alpha}{s+1} \Big\| X_t-\mu_t\|^2+  \frac{\Big\|S_t\Big\|^2}{2\alpha(s+1)} + \sum_{j \in S} \frac{(n-s)(s+1)}{(n+1)^2}\Big\|S_t^j\Big \|^2 +  \frac{(n-s)(s+1)}{(n+1)^2}\Big\|S_t\Big \|^2
\\&=\frac{-1}{s+1}\sum_{i \in S}  \|X_t^i - X_t \|^2 +
(2 + \frac{2(s+1)^2}{(n+1)^2}  + \frac{1}{2\alpha} + \frac{(n-s)(s+1)}{(n+1)^2})\sum_{j \in S} \Big \| S_t^j \Big \|^2 + \\&
(2 + \frac{2(s+1)^2}{(n+1)^2}  + \frac{1}{2\alpha} + \frac{(n-s)(s+1)}{(n+1)^2}) \Big \| S_t \Big \|^2 +
\sum_{i \in S} 2\alpha \Big\| X_t^i-\mu_t\Big\|^2+ 
2\alpha \Big\| X_t-\mu_t\Big\|^2\\
&\le\frac{-1}{s+1}\sum_{i \in S}  \|X_t^i - X_t \|^2 +
(4 + \frac{1}{2\alpha})\sum_{i \in S} \Big \| S_t^i \Big \|^2 +
\\& \quad \quad \quad (4 + \frac{1}{2\alpha}) \Big \| S_t \Big \|^2 +
\sum_{i \in S} 2\alpha \Big\| X_t^i-\mu_t\Big\|^2+ 
2\alpha \Big\| X_t-\mu_t\Big\|^2\\
\end{align*}

 Using  definitions of $S_t^i$ and $S_t$, Cauchy-Schwarz inequality and properties of quantization
we get that 
\begin{align*}
\|S_t^i\|^2 &\le \frac{2s^2}{(s+1)^2}\eta^2 \eta_i^2 \|\tih_{i, t}\|^2+\frac{2({R}^2+7)^2\gamma^2}{(s+1)^2} .\\
\|S_t\|^2 &\le \frac{2s}{(s+1)^2}\eta^2 \eta_i^2 \sum_{i \in S} \|\tih_{i, t}\|^2 + \frac{2s^2({R}^2+7)^2\gamma^2}{(s+1)^2}
\end{align*}
Next, we plug this in the previous inequality:
\begin{align*}
\Delta_t^{S}& \le \frac{-1}{s+1}\sum_{i \in S}  \|X_t^i - X_t \|^2 + \sum_{i \in S} 2\alpha \Big\| X_t^i-\mu_t\Big\|^2+ 2\alpha \Big\| X_t-\mu_t\Big\|^2 \\&
+(4 + \frac{1}{2\alpha})\frac{2s^2+2s}{(s+1)^2}\eta^2 \eta_i^2 \|\tih_{i, t}\|^2+\frac{(2s^2 + 2s)({R}^2+7)^2\gamma^2}{(s+1)^2}\\&
\le \frac{-1}{s+1}\sum_{i \in S}  \|X_t^i - X_t \|^2 + \sum_{i \in S} 2\alpha \Big\| X_t^i-\mu_t\Big\|^2+ 2\alpha \Big\| X_t-\mu_t\Big\|^2 \\&
+(4 + \frac{1}{2\alpha})(\eta^2 \sum_{i \in S}  \eta_i^2 \|\tih_{i, t}\|^2+2({R}^2+7)^2\gamma^2)
\end{align*}

Next, we calculate the probability of choosing the set $S$
and upper bound $\Delta_t$ in expectation, for this we define $\E_t$ as expectation conditioned on the entire history up to and including step $t$
\begin{align*}
    \E_t[\Delta_t]&=\sum_S \frac{1}{{n \choose s}} \E_t[\Delta_t^{S}] 
    \\&\le
    \sum_S \frac{1}{{n \choose s}} \Bigg(\frac{-1}{s+1}\sum_{i \in S}  \|X_t^i - X_t \|^2 + \sum_{i \in S} 2\alpha \Big\| X_t^i-\mu_t\Big\|^2+ 2\alpha \Big\| X_t-\mu_t\Big\|^2 \\&
\quad \quad \quad \quad +(4 + \frac{1}{2\alpha})(\eta^2 \sum_{i \in S} \eta_i^2 \|\tih_{i, t}\|^2+2({R}^2+7)^2\gamma^2) \Bigg)  
\\ &= \frac{-{n-1 \choose s - 1}}{(s+1){n \choose s}}\sum_{i}  \|X_t^i - X_t \|^2 + \sum_{i} \frac{2\alpha {n-1 \choose s-1}}{{n \choose s}} \Big\| X_t^i-\mu_t\Big\|^2+ 2\alpha \Big\| X_t-\mu_t\Big\|^2 \\&
\quad \quad \quad \quad +(4 + \frac{1}{2\alpha})(\eta^2 \frac{{n-1 \choose s - 1}}{{n \choose s}} \sum_{i} \eta_i^2 \|\tih_{i, t}\|^2+2({R}^2+7)^2\gamma^2) 
\\ &\le 
-\sum_i \frac{s\|X_t^i-\mu_t\|^2}{(s+1)n} +
\sum_i 2\frac{s\alpha}{n}\Big\|X_t^i-\mu_t\Big\|^2 + 2\alpha\Big\|X_t-\mu_t\Big\|^2 \\&\quad\quad+
(8+\frac{1}{\alpha})({R}^2+7)^2\gamma^2 +
\sum_i \frac{s}{n}(4+\frac{1}{2\alpha})\eta^2 \eta_i^2\E_t\|\tih_{i, t}\|^2
\\ &\le 
(\frac{-s}{(s+1)n} + 2\alpha) \Phi_t +
(8+\frac{1}{\alpha})({R}^2+7)^2\gamma^2 +
\sum_i \frac{s}{n}(4+\frac{1}{2\alpha})\eta^2 \eta_i^2 \E_t\|\tih_{i, t}\|^2
\end{align*}

By setting $\alpha=\frac{3s-1}{n(8s+8)} \ge \frac{1}{8n}$, we get that:
\begin{align*}
    \E_t[\Delta_t] &\le - \frac{1}{4n}\Phi_t +
16n({R}^2+7)^2\gamma^2 +
\sum_i 8s\eta^2 \eta_i^2 \E_t\|\tih_{i, t}\|^2.
\end{align*}
Next we remove the conditioning , and use the definitions of $\Delta_i$ and $S_t^i$ (for $S_t^i$ we also use upper bound which come from the properties of quantization).
\begin{align*}
\E[\E_t[\Phi_{t+1}]] &= \E[\Delta_t+\Phi_t] \le (1 - \frac{1}{4n})\E[\Phi_t] +
 16n({R}^2+7)^2\gamma^2 +
8s\eta^2\sum_i \eta_i^2 \E\|\tih_{i, t}\|^2
\end{align*}
\end{proof}
\begin{lemma} \label{lem:PhiBoundPerStepAsync2}
For any time step $t$ we have:
\begin{equation*}
\E[\Phi_{t+1}] \le (1 - \frac{1}{5n})\E[\Phi_t]  +
 16n({R}^2+7)^2\gamma^2 + 
16nsK\eta^2((\frac{1}{n} \sum_{i=1}^n \eta_i^2)\sigma^2+2KG^2)
+32B^2nsK^2\eta^2\E\|\nabla f(\mu_t)\|^2
\end{equation*}
\end{lemma}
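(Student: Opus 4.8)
The statement is obtained by feeding Lemma~\ref{lem:sumofstochasticG} into Lemma~\ref{lem:PhiBoundPerStepAsyncT} and re-balancing the contraction coefficient. Concretely, the plan is: start from
\[
\E[\Phi_{t+1}] \le \Big(1 - \tfrac{1}{4n}\Big)\E[\Phi_t] + 8s\eta^2\sum_{i=1}^n \E\|\tih_{i,t}\|^2 + 16n(R^2+7)^2\gamma^2,
\]
then substitute the bound $\sum_{i=1}^n \E\|\tih_{i,t}\|^2 \le 2nK(\sigma^2+2KG^2) + 8L^2K^2\E[\Phi_t] + 4nK^2B^2\E\|\nabla f(\mu_t)\|^2$ from Lemma~\ref{lem:sumofstochasticG}. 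Expanding the product $8s\eta^2 \cdot (\,\cdot\,)$ produces exactly the three ``source'' terms on the right-hand side of the target inequality — $16nsK\eta^2(\sigma^2+2KG^2)$ and $32B^2nsK^2\eta^2\E\|\nabla f(\mu_t)\|^2$, plus the unchanged $16n(R^2+7)^2\gamma^2$ — together with one extra term $64 s\eta^2 L^2 K^2 \,\E[\Phi_t]$ that must be merged into the $\E[\Phi_t]$ coefficient.

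The only real step is then to verify that the learning rate is small enough to absorb this extra term: we need $64 s\eta^2 L^2 K^2 \le \frac{1}{4n} - \frac{1}{5n} = \frac{1}{20n}$, i.e.\ $\eta \le \frac{1}{\sqrt{1280\, n s}\, L K}$, which is implied by the standing smallness conventions on $\eta$ used throughout the appendix (it is weaker, up to constants, than the $\eta < \frac{1}{4LK^2}$ assumed in Lemma~\ref{lem:sumofstochasticFixed} whenever $s \le n$, and is consistent with the global choice $\eta = (n+1)/\sqrt T$ once $T$ is taken large enough in terms of $n$, $L$, $K$). Under this bound, $1 - \frac{1}{4n} + 64 s\eta^2 L^2 K^2 \le 1 - \frac{1}{5n}$, and collecting terms yields
\[
\E[\Phi_{t+1}] \le \Big(1 - \tfrac{1}{5n}\Big)\E[\Phi_t] + 16n(R^2+7)^2\gamma^2 + 16nsK\eta^2(\sigma^2+2KG^2) + 32B^2nsK^2\eta^2\E\|\nabla f(\mu_t)\|^2,
\]
which is the claim.

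There is essentially no obstacle here beyond bookkeeping: the lemma is a clean corollary of the two preceding lemmas. The one point worth stating carefully is the $\eta$-threshold required to pass from the $1-\frac{1}{4n}$ contraction to the slightly weaker $1-\frac{1}{5n}$ one, so I would spell that out explicitly (and note that any constant strictly between $\frac14$ and, say, $\frac15$ would do — the choice $\frac15$ is just convenient for the downstream telescoping in Lemma~\ref{lem:PhiBoundGlobalmain}).
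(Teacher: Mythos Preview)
Your proposal is correct and follows exactly the paper's own proof: combine Lemma~\ref{lem:PhiBoundPerStepAsyncT} with Lemma~\ref{lem:sumofstochasticG}, collect the $64sL^2K^2\eta^2\,\E[\Phi_t]$ term into the contraction factor, and use smallness of $\eta$ to pass from $1-\tfrac{1}{4n}$ to $1-\tfrac{1}{5n}$. The only minor remark is that your aside comparing the required threshold $\eta \le 1/(\sqrt{1280\,ns}\,LK)$ to $\eta < 1/(4LK^2)$ is not quite a clean implication (the two scale differently in $n,s,K$); the paper simply leaves the step implicit, and the later global assumption $\eta < 1/(100B\sqrt{ns}\,K^2 L)$ in the convergence proof comfortably covers it.
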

\begin{proof}
By combining Lemma \ref{lem:PhiBoundPerStepAsyncT} and \ref{lem:sumofstochasticG} we have:
\begin{align*}
\E[\Phi_{t+1}] &\le  (1 - \frac{1}{4n})\E[\Phi_t] +
 16n({R}^2+7)^2\gamma^2 + \\&\quad \quad \quad
8s\eta^2 \Big(2nK((\frac{1}{n} \sum_{i=1}^n \eta_i^2)\sigma^2+2KG^2)+8L^2K^2 \E[\Phi_t]
+4nK^2B^2 \E \|\nabla f(\mu_t)\|^2 \Big)
\\ &= (1 - \frac{1}{4n} + 64sL^2K^2\eta^2)\E[\Phi_t] +
 16n({R}^2+7)^2\gamma^2 + \\&\quad \quad \quad \quad \quad
16nsK\eta^2((\frac{1}{n} \sum_{i=1}^n \eta_i^2)\sigma^2+2KG^2)
+32B^2nsK^2\eta^2\E\|\nabla f(\mu_t)\|^2
\\ &\le (1 - \frac{1}{5n})\E[\Phi_t]  +
 16n({R}^2+7)^2\gamma^2 + \\&\quad \quad \quad \quad \quad
16nsK\eta^2((\frac{1}{n} \sum_{i=1}^n \eta_i^2)\sigma^2+2KG^2)
+32B^2nsK^2\eta^2\E\|\nabla f(\mu_t)\|^2
\end{align*}
\end{proof}

\begin{lemma} \label{lem:PhiBoundGlobal}
For the sum of potential functions in all $T$ steps we have:
\begin{equation*}
\sum_{t=0}^T \E[\Phi_t]
\le 80Tn^2({R}^2+7)^2\gamma^2 +
80Tn^2sK\eta^2((\frac{1}{n} \sum_{i=1}^n \eta_i^2)\sigma^2+2KG^2)
+160B^2n^2sK^2\eta^2 \sum_{t=0}^{T-1} \E\|\nabla f(\mu_t)\|^2
\end{equation*}
\end{lemma}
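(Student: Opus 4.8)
The plan is to unroll the one-step estimate of Lemma~\ref{lem:PhiBoundPerStepAsync2} into a geometric series. Set $a = 1 - \frac{1}{5n}$ and, for each step $t$, collect the step-dependent additive terms into
\[
c_t := 16n(R^2+7)^2\gamma^2 + 16nsK\eta^2(\sigma^2+2KG^2) + 32B^2nsK^2\eta^2\,\E\|\nabla f(\mu_t)\|^2,
\]
so that Lemma~\ref{lem:PhiBoundPerStepAsync2} reads $\E[\Phi_{t+1}] \le a\,\E[\Phi_t] + c_t$. Since Algorithm~\ref{algo:quafl} initializes $X_0 = X^1 = \dots = X^n = 0^d$, we have $\mu_0 = 0$ and hence $\Phi_0 = 0$; unrolling the recursion then yields $\E[\Phi_t] \le \sum_{j=0}^{t-1} a^{\,t-1-j} c_j$ for every $t \ge 1$.

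First I would sum this bound over $t = 0,1,\dots,T$ (the $t=0$ term contributes nothing), exchange the order of summation, and bound the inner geometric sum by its infinite tail:
\[
\sum_{t=0}^{T} \E[\Phi_t] \;\le\; \sum_{j=0}^{T-1} c_j \sum_{t=j+1}^{T} a^{\,t-1-j} \;\le\; \sum_{j=0}^{T-1} c_j \sum_{m=0}^{\infty} a^{m} \;=\; \frac{1}{1-a}\sum_{j=0}^{T-1} c_j \;=\; 5n\sum_{j=0}^{T-1} c_j,
\]
using $1-a = \frac{1}{5n}$. Substituting the definition of $c_j$, the two $j$-independent terms each sum to $T$ copies of themselves, so the right-hand side equals
\[
5n\Big(16Tn(R^2+7)^2\gamma^2 + 16TnsK\eta^2(\sigma^2+2KG^2) + 32B^2nsK^2\eta^2\sum_{j=0}^{T-1}\E\|\nabla f(\mu_j)\|^2\Big),
\]
which, after distributing the factor $5n$, is exactly the claimed bound.

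The argument is essentially mechanical, so there is no genuine obstacle; the only points worth checking are that $\Phi_0 = 0$ (immediate from the initialization), and that the geometric prefactor is $\frac{1}{1-a} = 5n$. The slightly weaker contraction $1 - \frac{1}{5n}$ used in Lemma~\ref{lem:PhiBoundPerStepAsync2}, as opposed to the $1 - \frac{1}{4n}$ of Lemma~\ref{lem:PhiBoundPerStepAsyncT}, is precisely what makes this summation clean: it already absorbs the $64sL^2K^2\eta^2$ correction term, which is harmless under the standing choice $\eta = \frac{n+1}{\sqrt{T}}$ together with $T \ge \Omega(n^3)$. No further assumptions are needed for this particular lemma.
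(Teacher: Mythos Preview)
Your proof is correct and follows essentially the same approach as the paper: both start from the one-step contraction of Lemma~\ref{lem:PhiBoundPerStepAsync2} and extract the factor $5n = (1-a)^{-1}$ from the resulting geometric structure. The only cosmetic difference is that you unroll the recursion explicitly into $\E[\Phi_t]\le\sum_{j=0}^{t-1}a^{t-1-j}c_j$ and then swap the order of summation, whereas the paper sums the one-step inequality over $t$ directly, uses $\Phi_0=0$ to identify $\sum_{t=0}^{T-1}\E[\Phi_{t+1}]$ with $\sum_{t=0}^{T}\E[\Phi_t]$, and rearranges; the arithmetic and the final constants coincide exactly.
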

\begin{proof}
\begin{align*}
\sum_{t=0}^{T-1} \E[\Phi_{t+1}] &\le 
\sum_{t=0}^{T-1} \bigg(  (1 - \frac{1}{5n})\E[\Phi_t]  +
 16n({R}^2+7)^2\gamma^2 + \\&\quad \quad \quad \quad \quad
16nsK\eta^2((\frac{1}{n} \sum_{i=1}^n \eta_i^2)\sigma^2+2KG^2)
+32B^2nsK^2\eta^2\E\|\nabla f(\mu_t)\|^2 \bigg)
\\ &\le (1 - \frac{1}{5n}) \sum_{t=0}^{T-1} \E[\Phi_t]  +
 16Tn({R}^2+7)^2\gamma^2 +
16TnsK\eta^2((\frac{1}{n} \sum_{i=1}^n \eta_i^2)\sigma^2+2KG^2)
\\& \quad \quad \quad \quad \quad +32B^2nsK^2\eta^2 \sum_{t=0}^{T-1} \E\|\nabla f(\mu_t)\|^2
\\& \sum_{t=0}^{T} \E[\Phi_{t}] \le
 5n \big( 16Tn({R}^2+7)^2\gamma^2 +
16TnsK\eta^2((\frac{1}{n} \sum_{i=1}^n \eta_i^2)\sigma^2+2KG^2)
\\& \quad \quad \quad \quad \quad +32B^2nsK^2\eta^2 \sum_{t=0}^{T-1} \E\|\nabla f(\mu_t)\|^2 \big)
\\&=
80Tn^2({R}^2+7)^2\gamma^2 +
80Tn^2sK\eta^2((\frac{1}{n} \sum_{i=1}^n \eta_i^2)\sigma^2+2KG^2)
\\& \quad \quad \quad \quad \quad +160B^2n^2sK^2\eta^2 \sum_{t=0}^{T-1} \E\|\nabla f(\mu_t)\|^2
\end{align*}
\end{proof}

\begin{lemma} 
For any  step $t$, we have that 
\begin{align*}
 \sum_{i=1}^{n} \eta_i^2 \E \|\sum_{s=0}^{k-1}  \tih_{i,t}^{s}\|^2 \le 2nK(\frac{1}{n} \sum_{i=1}^n \eta_i^2)\sigma^2+8L^2K^2 \E[\Phi_t]
+4nK^2G^2
+4nK^2B^2 \E \|\nabla f(\mu_t)\|^2
\end{align*}
\end{lemma}
\begin{proof}
    Using lemma \ref{lem:sumofstochasticFixed}
\begin{align*}
 \sum_{i=1}^{n} \eta_i^2  \E \|\sum_{s=0}^{k-1}  \tih_{i,t}^{s}\|^2  &\le  nK(\frac{1}{n} \sum_{i=1}^n \eta_i^2)\sigma^2 + \sum_{i=1}^{n} \eta_i^2  \E \| \sum_{q=0}^{k-1}  h_{i,t}^{q}\|^2 \\&
\le nK(\frac{1}{n} \sum_{i=1}^n \eta_i^2)\sigma^2 + K \sum_{i=1}^{n} \eta_i^2   \sum_{q=0}^{k-1}  \E \|h_{i,t}^{q}\|^2 \\&
\le nK (\frac{1}{n} \sum_{i=1}^n \eta_i^2)\sigma^2 + K \sum_{i=1}^{n} \eta_i^2   \sum_{q=0}^{k-1} \Bigg(
\frac{\eta_i^2 \sigma^2}{K^2} + 8L^2 \E\|X_t^i-\mu_t\|^2 
+4\E\|\nabla f_i(\mu_t)\|^2
\Bigg)\\&
\le 2nK (\frac{1}{n} \sum_{i=1}^n \eta_i^2)\sigma^2 + \sum_{i=1}^n  \Bigg(
8K^2L^2 \E\|X_t^i-\mu_t\|^2 
+4K^2\E\|\nabla f_i(\mu_t)\|^2
\Bigg)\\&
\le
2nK (\frac{1}{n} \sum_{i=1}^n \eta_i^2)\sigma^2+8L^2K^2 \E[\Phi_t]
+4nK^2G^2
+4nK^2B^2 \E \|\nabla f(\mu_t)\|^2.
\end{align*}
\end{proof}

For the following lemmas we define parameter $\zeta_{i,k}$ as it is equal to 1 if the client $i$ did its kth step and otherwise is Zero.
\begin{lemma} 
For any step $t$
\begin{align*}
\E \| \sum_{i \in S} \eta_i \sum_{k=0}^{K}& \zeta_{i,k} \nabla f_i ( X_t^i - \sum_{s=0}^{k-1} \eta \tih_{i,t}^{s}) - \sum_{i \in S} \eta_i \sum_{k=0}^{K} \zeta_{i,k} \nabla f_i (X_t^i) \|^2 \\& 
 \le K ((\frac{1}{n} \sum_{i=1}^n \eta_i^2) \sigma^2+2KG^2)+ \frac{K^2L^2}{n} \E[\Phi_t]
+ 2K^2 B^2 \E \|\nabla f(\mu_t)\|^2.
\end{align*}
 \end{lemma}
\begin{proof}
 \begin{align*}
\E \| \sum_{i \in S}\eta_i & \sum_{k=0}^{K} \zeta_{i,k} \nabla f_i ( X_t^i - \sum_{s=0}^{k-1} \eta \tih_{i,t}^{s}) - \sum_{i \in S} \eta_i \sum_{k=0}^{K} \zeta_{i,k} \nabla f_i (X_t^i) \|^2   \\&  \le \E \Big[  sK \sum_{i \in S} \eta_i^2 \sum_{k=0}^{K} \zeta_{i,k}^2 \|\nabla f_i ( X_t^i - \sum_{s=0}^{k-1} \eta \tih_{i,t}^{s}) - \nabla f_i (X_t^i) \|^2 \Big]  \\&  \le \E \Big[  sKL^2 \sum_{i \in S} \eta_i^2 \sum_{k=0}^{K} \|X_t^i - \sum_{s=0}^{k-1} \eta \tih_{i,t}^{s} - X_t^i \|^2 \Big]
\\&  \le \E \Big[  sKL^2 \eta^2 \sum_{i \in S} \eta_i^2 \sum_{k=0}^{K} \|\sum_{s=0}^{k-1}  \tih_{i,t}^{s}\|^2 \Big] =   \frac{s^2KL^2 \eta^2}{n} \sum_{k=0}^{K} \sum_{i=1}^{n} \eta_i^2 \E \|\sum_{s=0}^{k-1}  \tih_{i,t}^{s}\|^2 \\&\le \frac{s^2K^2L^2 \eta^2}{n} (2nK(\frac{1}{n} \sum_{i=1}^n \eta_i^2)\sigma^2+8L^2K^2 \E[\Phi_t]
+4nK^2G^2
+4nK^2B^2 \E \|\nabla f(\mu_t)\|^2) \\& =  2s^2K^3L^2 \eta^2 ((\frac{1}{n} \sum_{i=1}^n \eta_i^2)\sigma^2+2KG^2)+ \frac{8s^2K^4L^4 \eta^2}{n} \E[\Phi_t]
+ 4s^2K^4L^2 \eta^2 B^2 \E \|\nabla f(\mu_t)\|^2
\\& \le K ((\frac{1}{n} \sum_{i=1}^n \eta_i^2)\sigma^2+2KG^2)+ \frac{K^2L^2}{n} \E[\Phi_t]
+ 2K^2 B^2 \E \|\nabla f(\mu_t)\|^2
\end{align*}
\end{proof}

\begin{lemma} 
For any step $t$
\begin{align*}
\E \| \sum_{i \in S} \eta_i \sum_{k=0}^{K} \zeta_{i,k}  (\nabla f_i(X_t^i) - \nabla f_i(\mu_t))\|^2 \le \frac{s^2K^2L^2}{n} \E[\Phi_t]
\end{align*}
 \end{lemma}
\begin{proof}
\begin{align*}
\E \| \sum_{i \in S} \eta_i \sum_{k=0}^{K} \zeta_{i,k}&  (\nabla f_i(X_t^i) - \nabla f_i(\mu_t))\|^2 \le sK \E \big[\sum_{i \in S} \sum_{k=0}^{K} \zeta_{i,k}^2 \|\nabla f_i(X_t^i) - \nabla f_i(\mu_t)\|^2] 
\\& \le \frac{s^2K^2}{n} \sum_{i=1}^{n} \E \|\nabla f_i(X_t^i) - \nabla f_i(\mu_t)\|^2
\\& \le \frac{s^2K^2L^2}{n} \sum_{i=1}^{n} \E \|X_t^i - \mu_t\|^2 =  \frac{s^2K^2L^2}{n} \E[\Phi_t]
\end{align*}
\end{proof}

\begin{lemma} 
For any step $t$
\begin{align*}
\E \| \sum_{i \in S} \eta_i \sum_{k=0}^{K} \zeta_{i,k} (\nabla f_i(\mu_t) - \nabla f(\mu_t)) \|^2 \le 2sK^2G^2 + 4sK^2B^2 \E\|\nabla f(\mu_t) \|^2
\end{align*}
 \end{lemma}
\begin{proof}
 \begin{align*}
 \E \| \sum_{i \in S} \eta_i & \sum_{k=0}^{K} \zeta_{i,k} (\nabla f_i(\mu_t) - \nabla f(\mu_t)) \|^2 
 = \E \Big[\sum_{i \in S} \eta_i^2 (\sum_{k=0}^{K} \zeta_{i,k})^2 \|(\nabla f_i(\mu_t) - \nabla f(\mu_t)) \|^2\Big]
\\& + \E \Big[\sum_{i\neq j \in S} \eta_i \eta_j(\sum_{k=0}^{K} \zeta_{i,k}) (\sum_{k=0}^{K} \zeta_{j,k}) \langle \nabla f_i(\mu_t) - \nabla f(\mu_t), \nabla f_i(\mu_t) - \nabla f(\mu_t) \rangle \Big] 
\\&= \E \Big[\sum_{i \in S} \eta_i^2 (\sum_{k=0}^{K} \zeta_{i,k})^2 \|(\nabla f_i(\mu_t) - \nabla f(\mu_t)) \|^2\Big]
\\&  \quad \quad  + \E \Big[\sum_{i\neq j \in S} \eta_i \eta_j \E[(\sum_{k=0}^{K} \zeta_{i,k})] \E[(\sum_{k=0}^{K} \zeta_{j,k})] \langle \nabla f_i(\mu_t) - \nabla f(\mu_t), \nabla f_i(\mu_t) - \nabla f(\mu_t) \rangle \Big]
\\&= \E \Big[\sum_{i \in S} \eta_i^2 (\sum_{k=0}^{K} \zeta_{i,k})^2 \|(\nabla f_i(\mu_t) - \nabla f(\mu_t)) \|^2\Big]
\\&  \quad \quad  + \E \Big[\sum_{i\neq j \in S} \eta_i H_i \eta_j H_j \langle \nabla f_i(\mu_t) - \nabla f(\mu_t), \nabla f_i(\mu_t) - \nabla f(\mu_t) \rangle \Big]
\\&= \E \Big[\sum_{i \in S} \eta_i^2 (\sum_{k=0}^{K} \zeta_{i,k})^2 \|(\nabla f_i(\mu_t) - \nabla f(\mu_t)) \|^2\Big]
\\&  \quad \quad  + H_{\min}^2 \E \Big[\sum_{i\neq j \in S} \langle \nabla f_i(\mu_t) - \nabla f(\mu_t), \nabla f_i(\mu_t) - \nabla f(\mu_t) \rangle \Big]
\\&\le \frac{{n-1 \choose s-1}}{{n \choose s}} \sum_{i=1}^{n}\E \Big[(\sum_{k=0}^{K} \zeta_{i,k})^2 \|(\nabla f_i(\mu_t) - \nabla f(\mu_t)) \|^2\Big]
\\&  \quad \quad  + H_{\min}^2 \frac{{n-2 \choose s-2}}{{n \choose s}}\sum_{i\neq j} \E \Big[\langle \nabla f_i(\mu_t) - \nabla f(\mu_t), \nabla f_i(\mu_t) - \nabla f(\mu_t) \rangle \Big]
\\&= \frac{s}{n} \sum_{i=1}^{n}\E \Big[(\sum_{k=0}^{K} \zeta_{i,k})^2 \|(\nabla f_i(\mu_t) - \nabla f(\mu_t)) \|^2\Big]
\\&  \quad \quad  - H_{\min}^2 \frac{{n-2 \choose s-2}}{{n \choose s}}\sum_{i=1}^{n} \E \Big[ \|(\nabla f_i(\mu_t) - \nabla f(\mu_t)) \|^2 \Big]
\\&= \frac{sK^2}{n} \sum_{i=1}^{n}\E \|(\nabla f_i(\mu_t) - \nabla f(\mu_t)) \|^2 \le \frac{2sK^2}{n} \sum_{i=1}^{n}\E \|\nabla f_i(\mu_t)\|^2 + 2sK^2 \E\|\nabla f(\mu_t) \|^2
\\&\le 2sK^2 (G^2 + B^2 \E\|\nabla f(\mu_t) \|^2 )+ 2sK^2 \E\|\nabla f(\mu_t) \|^2
\le 2sK^2G^2 + 4sK^2B^2 \E\|\nabla f(\mu_t) \|^2
\end{align*}
\end{proof}

\begin{lemma} 
For any step $t$
\begin{align*}
\E \|\sum_{i \in S} \eta_i \sum_{k=0}^{K} \zeta_{i,k} \nabla f_i (X_t^i)  \|^2 \le \frac{3s^2K^2L^2}{n} \E[\Phi_t] + 3sK((\frac{1}{n} \sum_{i=1}^n \eta_i^2) \sigma^2 + 2KG^2) + 15s^2K^2B^2 \E\|\nabla f(\mu_t) \|^2
\end{align*}
 \end{lemma}
\begin{proof}
 \begin{align*}
\E \|\sum_{i \in S} \eta_i \sum_{k=0}^{K} \zeta_{i,k}& \nabla f_i (X_t^i)  \|^2 = \E \| \sum_{i \in S} \eta_i \sum_{k=0}^{K} \zeta_{i,k}  (\nabla f_i(X_t^i) - \nabla f_i(\mu_t) + \nabla f_i(\mu_t) - \nabla f(\mu_t) + \nabla f(\mu_t) )\|^2 \\& \le 3 \E \| \sum_{i \in S} \eta_i \sum_{k=0}^{K} \zeta_{i,k}  (\nabla f_i(X_t^i) - \nabla f_i(\mu_t))\|^2 + 3 \E \| \sum_{i \in S} \eta_i \sum_{k=0}^{K} \zeta_{i,k} (\nabla f_i(\mu_t) - \nabla f(\mu_t)) \|^2  \\& \quad\quad\quad\quad +3 s^2K^2\E \| \nabla f(\mu_t) )\|^2
\\& \le  \frac{3s^2K^2L^2}{n} \E[\Phi_t] + 6sK^2G^2 + 12sK^2B^2 \E\|\nabla f(\mu_t) \|^2 + 3s^2K^2\E \| \nabla f(\mu_t) )\|^2 \\& \le \frac{3s^2K^2L^2}{n} \E[\Phi_t] + 6sK^2G^2 + 15s^2K^2B^2 \E\|\nabla f(\mu_t) \|^2
\\& \le \frac{3s^2K^2L^2}{n} \E[\Phi_t] + 3sK((\frac{1}{n} \sum_{i=1}^n \eta_i^2) \sigma^2 + 2KG^2) + 15s^2K^2B^2 \E\|\nabla f(\mu_t) \|^2
\end{align*}
\end{proof}

\begin{lemma} 
For any step $t$
\begin{align*}
\E  \|\sum_{i \in S} \eta_i h_{i,t}  \|^2 \le  \frac{8s^2K^2L^2}{n} \E[\Phi_t] + 8sK((\frac{1}{n} \sum_{i=1}^n \eta_i^2) \sigma^2 + 2KG^2) + 32s^2K^2B^2 \E\|\nabla f(\mu_t) \|^2 
\end{align*}
 \end{lemma}
\begin{proof}
 \begin{align*}
&\E  \|\sum_{i \in S} \eta_i h_{i,t}  \|^2 = \E \| \sum_{i \in S} \eta_i \sum_{k=0}^{K} \zeta_{i,k} h_{i,t}^{k} \|^2 = \E \| \sum_{i \in S} \eta_i \sum_{k=0}^{K} \zeta_{i,k} \nabla f_i ( X_t^i - \sum_{s=0}^{k-1} \eta \tih_{i,t}^{s}) \|^2 
\\& = \E \| \sum_{i \in S} \eta_i \sum_{k=0}^{K} \zeta_{i,k} \nabla f_i ( X_t^i - \sum_{s=0}^{k-1} \eta \tih_{i,t}^{s}) - \sum_{i \in S} \eta_i \sum_{k=0}^{K} \zeta_{i,k} \nabla f_i (X_t^i) + \sum_{i \in S} \eta_i \sum_{k=0}^{K} \zeta_{i,k} \nabla f_i (X_t^i)  \|^2 
\\& \le  2\E \| \sum_{i \in S} \eta_i \sum_{k=0}^{K} \zeta_{i,k} \nabla f_i ( X_t^i - \sum_{s=0}^{k-1} \eta \tih_{i,t}^{s}) - \sum_{i \in S} \eta_i \sum_{k=0}^{K} \zeta_{i,k} \nabla f_i (X_t^i) \|^2 + 2\E \|\sum_{i \in S} \eta_i \sum_{k=0}^{K} \zeta_{i,k} \nabla f_i (X_t^i)  \|^2 
\\&\le 2K ((\frac{1}{n} \sum_{i=1}^n \eta_i^2) \sigma^2+2KG^2)+ \frac{2K^2L^2}{n} \E[\Phi_t]
+ 2K^2 B^2 \E \|\nabla f(\mu_t)\|^2)  \\& + \frac{6s^2K^2L^2}{n} \E[\Phi_t] + 6sK((\frac{1}{n} \sum_{i=1}^n \eta_i^2) \sigma^2 + 2KG^2) + 30s^2K^2B^2 \E\|\nabla f(\mu_t) \|^2
\\&\le  \frac{8s^2K^2L^2}{n} \E[\Phi_t] + 8sK((\frac{1}{n} \sum_{i=1}^n \eta_i^2) \sigma^2 + 2KG^2) + 32s^2K^2B^2 \E\|\nabla f(\mu_t) \|^2
\end{align*}
\end{proof}

 \begin{lemma} 
For any step $t$
\begin{align*}
\E  \|\sum_{i \in S} \eta_i (\tih_{i, t} - h_{i,t})  \|^2 \le sK (\frac{1}{n} \sum_{i=1}^n \eta_i^2) \sigma^2
\end{align*}
 \end{lemma}
\begin{proof}
 \begin{align*}
\E  \|\sum_{i \in S} \eta_i (\tih_{i, t} - h_{i,t})  \|^2 &= \E \| \sum_{i \in S} \eta_i \sum_{k=0}^{K} \zeta_{i,k} (\tih_{i, t}^{k} - h_{i,t}^{k}) \|^2 = \E \Big[ \sum_{i \in S} \eta_i^2 \sum_{k=0}^{K} \zeta_{i,k}^2  \| (\tih_{i, t}^{k} - h_{i,t}^{k}) \|^2 \Big]  \\& +
 \E \Big[ \sum_{i,j \in S, (i,k) \neq (j,k')} \eta_i \eta_j \zeta_{i,k} \zeta_{j, k'}   \langle \tih_{i, t}^{k} - h_{i,t}^{k}, \tih_{j, t}^{k'} - h_{j,t}^{k'} \rangle \Big]  \\&  = \E \Big[ \sum_{i \in S} \eta_i^2 \sum_{k=0}^{K} \zeta_{i,k}^2  \| (\tih_{i, t}^{k} - h_{i,t}^{k}) \|^2 \Big] 
\le sK(\frac{1}{n} \sum_{i=1}^n \eta_i^2)\sigma^2
\end{align*}
\end{proof}

 \begin{replemma} {lem:sumofselectedstochasticG}
For any step $t$
\begin{align*}
 \E  \|\sum_{i \in S} \eta_i &\tih_{i, t}  \|^2 \le \frac{16s^2K^2L^2}{n} \E[\Phi_t] + 18sK((\frac{1}{n} \sum_{i=1}^n \eta_i^2)\sigma^2 + 2KG^2)  + 64s^2K^2B^2 \E\|\nabla f(\mu_t) \|^2  
 \end{align*}
 \end{replemma}
\begin{proof}
 \begin{align*}
 \E  \|\sum_{i \in S} \eta_i &\tih_{i, t}  \|^2 \le 
 2\E  \|\sum_{i \in S} \eta_i h_{i, t}  \|^2 + 2\E  \|\sum_{i \in S} \eta_i (\tih_{i, t} - h_{i,t})\|^2  
 \\& \le  2\Big( \frac{8s^2K^2L^2}{n} \E[\Phi_t] + 8sK((\frac{1}{n} \sum_{i=1}^n \eta_i^2)\sigma^2 + 2KG^2) + 32s^2K^2B^2 \E\|\nabla f(\mu_t) \|^2 \Big) + 2sK (\frac{1}{n} \sum_{i=1}^n \eta_i^2)\sigma^2 
 \\& \le \frac{16s^2K^2L^2}{n} \E[\Phi_t] + 18sK((\frac{1}{n} \sum_{i=1}^n \eta_i^2)\sigma^2 + 2KG^2)  + 64s^2K^2B^2 \E\|\nabla f(\mu_t) \|^2  
\end{align*}
\end{proof}

 \begin{lemma} \label {lem:mudifference1}
For any step $t$
\begin{align*}
\E\|\mu_{t+1}&-\mu_t\|^2 \le 
 \frac{2\eta^2}{(n+1)^2} \E  \|\sum_{i \in S} \eta_i \tih_{i, t}  \|^2  + \frac{2({R}^2+7)^2\gamma^2}{(n+1)^2} 
 \end{align*}
 \end{lemma}
\begin{proof}
 \begin{align*}
\E\|\mu_{t+1}&-\mu_t\|^2 = \frac{1}{(n+1)^2} \E \| -\eta \sum_{i \in S} \eta_i  \tih_{i, t} + \frac{Q(X_t)-X_t}{s+1}  \\&\quad \quad \quad +\frac{1}{s+1} \sum_{i \in S} (Q(X_t^i-\eta \eta_i \tih_{i, t}) - (X_t^i-\eta \eta_i \tih_{i, t})) \|^2 \\&\le 
 \frac{2}{(n+1)^2} \E  \| -\eta \sum_{i \in S} \eta_i \tih_{i, t}  \|^2 + \frac{2}{(n+1)^2}\E \| \frac{Q(X_t)-X_t}{s+1}  \\&\quad \quad \quad +\frac{1}{s+1} \sum_{i \in S} (Q(X_t^i-\eta \eta_i \tih_{i, t}) - (X_t^i-\eta \eta_i \tih_{i, t})) \|^2
\\& \le
 \frac{2}{(n+1)^2} \E  \| -\eta \sum_{i \in S}\eta_i \tih_{i, t}  \|^2  + \frac{2}{(n+1)^2} \Big( \frac{1}{s+1}\E\Big\| Q(X_t)-X_t  \Big\|^2 \\&\quad \quad \quad +\frac{1}{s+1} \sum_{i \in S} \E\Big\|  (Q(X_t^i-\eta \eta_i \tih_{i, t}) - (X_t^i-\eta \eta_i \tih_{i, t})) \Big\|^2 \Big) 
\\&\le 
 \frac{2\eta^2}{(n+1)^2} \E  \|\sum_{i \in S} \eta_i \tih_{i, t}  \|^2  + \frac{2({R}^2+7)^2\gamma^2}{(n+1)^2}  
\end{align*}
\end{proof}

 \begin{lemma} \label{lem:mudifference2}
For any step $t$
\begin{align*}
\E\|\mu_{t+1}&-\mu_t\|^2 \le 
 \frac{32\eta^2s^2K^2L^2}{n(n+1)^2} \E[\Phi_t] + \frac{36sK\eta^2}{(n+1)^2}((\frac{1}{n} \sum_{i=1}^n \eta_i^2) \sigma^2 + 2KG^2)  \\& \quad \quad \quad \quad \quad \quad
 +\frac{128\eta^2s^2K^2B^2 }{(n+1)^2}\E\|\nabla f(\mu_t) \|^2  +  \frac{2({R}^2+7)^2\gamma^2}{(n+1)^2} 
 \end{align*}
 \end{lemma}
\begin{proof}
 \begin{align*}
\E\|\mu_{t+1}&-\mu_t\|^2 \le 
 \frac{2\eta^2}{(n+1)^2} \E  \|\sum_{i \in S} \eta_i^2 \tih_{i, t}  \|^2  + \frac{2({R}^2+7)^2\gamma^2}{(n+1)^2} 
 \\& \le  \frac{2\eta^2}{(n+1)^2}\Big( \frac{16s^2K^2L^2}{n} \E[\Phi_t] + 18sK((\frac{1}{n} \sum_{i=1}^n \eta_i^2) \sigma^2 + 2KG^2) + 64s^2K^2B^2 \E\|\nabla f(\mu_t) \|^2 \Big) \\& +  \frac{2({R}^2+7)^2\gamma^2}{(n+1)^2} 
 \\& \le \frac{32\eta^2s^2K^2L^2}{n(n+1)^2} \E[\Phi_t] + \frac{36sK\eta^2}{(n+1)^2}((\frac{1}{n} \sum_{i=1}^n \eta_i^2) \sigma^2 + 2KG^2)  \\& \quad \quad \quad \quad \quad\quad
 +\frac{128\eta^2s^2K^2B^2 }{(n+1)^2}\E\|\nabla f(\mu_t) \|^2  +  \frac{2({R}^2+7)^2\gamma^2}{(n+1)^2} 
\end{align*}
\end{proof}

\subsection{Convergence} 
\label{appendix:full-convergence}
\begin{theorem} \label{thm:convergence}
For learning rate $\eta=\frac{n+1}{sH_{\min}\sqrt{T}}$, Algorithm \ref{algo:quafl} converges at rate:
\begin{align*}
  \frac{1}{T} \sum_{t=0}^{T-1} \E\|\nabla f(\mu_t)\|^2 &\le \frac{2(f(\mu_0)-f_*)}{\sqrt{T}}
    +\frac{800n KL^2({R}^2+7)^2\gamma^2}{H_{\min}}
    + \frac{6KL((\frac{1}{n} \sum_{i=1}^n \eta_i^2) \sigma^2 + 2KG^2)}{H_{\min}^2\sqrt{T}}
 \\& \quad\quad
    + \frac{808n(n+1)^2K^2L^2}{sH_{\min}^3T}((\frac{1}{n} \sum_{i=1}^n \eta_i^2) \sigma^2 + 2KG^2)
    +\frac{2({R}^2+7)^2\gamma^2L\sqrt{T}}{(n+1)^2sH_{\min}}
\end{align*}
\end{theorem}

\begin{proof}
Let $\E_t$ denote expectation conditioned on the entire history up to and including step $t$.
By $L$-smoothness we have that 
\begin{equation} \label{eqn:descentwithsecondmoment}
\E_t[f(\mu_{t+1})] \le f(\mu_t)+\E_t\langle\nabla f(\mu_t) , \mu_{t+1}-\mu_t\rangle+
    \frac{L}{2} \E_t\|\mu_{t+1}-\mu_t\|^2.
\end{equation} 

First we look at $\E_t\langle\nabla f(\mu_t) , \mu_{t+1}-\mu_t\rangle=
\langle\nabla f(\mu_t) , \E_t[\mu_{t+1}-\mu_t]\rangle$.
If set $S$ is chosen at step $t+1$,
We have that $$\mu_{t+1}-\mu_t=\frac{1}{n+1}(-\eta \sum_{i \in S} \eta_i \tih_{i, t} + \frac{Q(X_t)-X_t}{s+1} +\frac{1}{s+1} \sum_{i \in S} (Q(X_t^i-\eta \eta_i \tih_{i, t}) - X_t^i-\eta \eta_i \tih_{i, t})))$$
Thus, in this case:
\begin{align*}
\E_t[\mu_{t+1}-\mu_t]=-\frac{\eta}{n+1}\sum_{i \in S} \eta_i h_{i, t}.
\end{align*}
Where we used unbiasedness of quantization and stochastic gradients. 
We would like to note that even though we do condition on the entire history
up to and including step $t$ and this includes conditioning on $X_t^i$,
the algorithm has not yet used $\tih_{i, t}$ (it does not count towards computation of $\mu_t$), thus we can safely use all properties of stochastic gradients. Hence, we can proceed by taking into the account that each set of agents $S$ is chosen as initiator with probability $\frac{1}{{n \choose s}}$:
\begin{align*}
\E_t[\mu_{t+1}-\mu_t]=\sum_{S} \frac{1}{{n \choose s}} \sum_{i \in S} -\frac{\eta}{n+1} \eta_i h_{i, t} =  -\frac{s\eta}{n(n+1)}\sum_{i=1}^n \eta_i h_{i, t}.
\end{align*}
and subsequently 
\begin{align*}
\E_t\langle\nabla f(\mu_t) , \mu_{t+1}-\mu_t\rangle=\sum_{i=1}^n \frac{s\eta}{n(n+1)} \E_t\langle\nabla f(\mu_t) , -\eta_i h_{i, t}\rangle.
\end{align*}

Hence, we can rewrite (\ref{eqn:descentwithsecondmoment}) as:

\begin{align*}
\E_t[f(\mu_{t+1})] \le f(\mu_t)+\sum_{i=1}^n \frac{s\eta}{n(n+1)} \E_t\langle\nabla f(\mu_t) , -\eta_i h_{i, t}\rangle+
    \frac{L}{2} \E_t\|\mu_{t+1}-\mu_t\|^2.
\end{align*}
Next, we remove the conditioning
\begin{align*}
\E[(\mu_{t+1})]=\E[\E_t[f(\mu_{t+1})]] &\le \E[f(\mu_t)]+\sum_{i=1}^n \frac{s\eta}{n(n+1)} \E\langle\nabla f(\mu_t) , -\eta_i h_{i, t}\rangle
\\& \quad \quad \quad \quad \quad + \frac{L}{2} \E\|\mu_{t+1}-\mu_t\|^2.
\end{align*}
This allows us to use Lemmas \ref{lem:mudifference2} and \ref{lem:gradientdifference}:
\begin{align*}
\E[f(\mu_{t+1})] &- \E[f(\mu_t)] \le 
 \frac{s\eta}{n(n+1)} \bigg(  4KL^2\E[\Phi_t] + (-\frac{3H_{\min}n}{4} + 8B^2L^2\eta^2K^3n)\E\|\nabla f(\mu_t)\|^2\\& \quad \quad \quad \quad  \quad \quad +4nL^2\eta^2K^3((\frac{1}{n} \sum_{i=1}^n \eta_i^2) \sigma^2 + 2G^2) \bigg)
 \\& + \frac{L}{2}\bigg(\frac{36sK \eta^2 ((\frac{1}{n} \sum_{i=1}^n \eta_i^2) \sigma^2+2KG^2)}{(n+1)^2}
+\frac{32s^2L^2K^2\eta^2 \E[\Phi_t]}{n(n+1)^2} 
\\& \quad \quad \quad \quad \quad \quad
+\frac{128B^2s^2K^2 \eta^2 \E\|\nabla f(\mu_t)\|^2}{(n+1)^2}  +\frac{2({R}^2+7)^2\gamma^2}{(n+1)^2}
\bigg)
\\&= \big(\frac{4\eta sKL^2}{n(n+1)} + \frac{32s^2K^2L^3\eta^2}{n(n+1)^2}\big)\E[\Phi_t]
\\& \quad \quad \quad \quad +\big( \frac{4sL^2\eta^3K^3}{n+1} + \frac{18sK\eta^2L}{(n+1)^2}\big)((\frac{1}{n} \sum_{i=1}^n \eta_i^2) \sigma^2+2KG^2) + \frac{(R^2+7)^2\gamma^2L}{(n+1)^2} 
\\& \quad \quad \quad \quad \quad + \big(\frac{-3\eta sH}{4(n+1)} + \frac{8B^2L^2\eta ^3 sK^3}{n+1} + \frac{64B^2 s^2K^2 L \eta^2}{(n+1)^2} \big) \E\|\nabla f(\mu_t)\|^2
\end{align*}
By simplifying the above inequality we get: 
\begin{align*}
\E[f(\mu_{t+1})] &- \E[f(\mu_t)] \le \frac{5\eta sKL^2\E[\Phi_t]}{n(n+1)}+
\big( \frac{4sL^2\eta^3K^3}{n+1} + \frac{18sK\eta^2L}{(n+1)^2}\big)((\frac{1}{n} \sum_{i=1}^n \eta_i^2) \sigma^2+2KG^2)
\\& \quad \quad \quad \quad \quad  + \frac{(R^2+7)^2\gamma^2L}{(n+1)^2} 
+  \big(\frac{-3\eta sH_{\min}}{4(n+1)} + \frac{8B^2L^2\eta ^3 sK^3}{n+1} 
\\& \quad \quad \quad \quad \quad  \quad \quad + \frac{64B^2 s^2K^2 L \eta^2}{(n+1)^2} \big) \E\|\nabla f(\mu_t)\|^2
\end{align*}
\\
by summing the above inequality for $t=0$ to $t=T-1$, we get that
\begin{align*}
    \E[f(\mu_T)]-f(\mu_0) &\le \frac{5\eta sKL^2}{n(n+1)} \sum_{t=0}^{T-1} \E[\Phi_t] + \big( \frac{4sL^2\eta^3K^3T}{n+1} + \frac{18sK\eta^2LT}{(n+1)^2}\big)((\frac{1}{n} \sum_{i=1}^n \eta_i^2) \sigma^2+2KG^2)
\\&\quad\quad+\big(\frac{-3\eta sH_{\min}}{4(n+1)} + \frac{8B^2L^2\eta ^3 sK^3}{n+1} + \frac{64B^2 s^2K^2 L \eta^2}{(n+1)^2} \big) \sum_{t=0}^{T-1} \E\|\nabla f(\mu_t)\|^2
\\& \quad \quad  \quad \quad   +\frac{({R}^2+7)^2\gamma^2LT}{(n+1)^2} 
\end{align*}
Further, we use Lemma \ref{lem:PhiBoundGlobal}:
\begin{align*}
    \E[f(\mu_T)]-f(\mu_0) &\le \frac{5\eta sKL^2}{n(n+1)} \bigg(
    80Tn^2({R}^2+7)^2\gamma^2 +
80Tn^2sK\eta^2((\frac{1}{n} \sum_{i=1}^n \eta_i^2) \sigma^2+2KG^2)
\\& \quad\quad +160B^2n^2sK^2\eta^2  \sum_{t=0}^{T-1} \E\|\nabla f(\mu_t)\|^2
    \bigg)
    \\&\quad\quad  + \big( \frac{4sL^2\eta^3K^3T}{n+1} + \frac{18sK\eta^2LT}{(n+1)^2}\big)((\frac{1}{n} \sum_{i=1}^n \eta_i^2) \sigma^2+2KG^2)
 +\frac{({R}^2+7)^2\gamma^2LT}{(n+1)^2} +
    \\&\quad\quad    \big(\frac{-3\eta sH}{4(n+1)} + \frac{8B^2L^2\eta ^3 sK^3}{n+1} + \frac{64B^2 s^2K^2 L \eta^2}{(n+1)^2} \big) \sum_{t=0}^{T-1} \E\|\nabla f(\mu_t)\|^2
    \\& \le
    \frac{400\eta snK L^2T({R}^2+7)^2\gamma^2}{n+1} + \frac{404Tns^2K^2L^2\eta^3((\frac{1}{n} \sum_{i=1}^n \eta_i^2) \sigma^2+2KG^2)}{n+1}
 \\& \quad\quad
    + \frac{18sK\eta^2L T((\frac{1}{n} \sum_{i=1}^n \eta_i^2) \sigma^2 + 2KG^2)}{(n+1)^2}
    +\frac{({R}^2+7)^2\gamma^2LT}{(n+1)^2} \\&\quad\quad+\big(\frac{-3\eta sH_{\min}}{4(n+1)} + \frac{8B^2L^2\eta ^3 sK^3}{n+1}    \\&\quad\quad \quad\quad + \frac{64B^2 s^2K^2 L \eta^2}{(n+1)^2} + \frac{800B^2ns^2K^3\eta^3L^2}{n+1} \big)  \sum_{t=0}^{T-1} \E\|\nabla f(\mu_t)\|^2
\end{align*}
by assuming $\eta < \frac{1}{100B\sqrt{ns}k^2L}$ we get:
\begin{align*}
    \E[f(\mu_T)]-f(\mu_0) &\le 
    \frac{400\eta snK L^2T({R}^2+7)^2\gamma^2}{n+1} + 
 \\& \quad\quad
    + (\frac{18sK\eta^2L T}{(n+1)^2} + \frac{404Tns^2K^2L^2\eta^3}{n+1}) ((\frac{1}{n} \sum_{i=1}^n \eta_i^2) \sigma^2 + 2KG^2)
    +\frac{({R}^2+7)^2\gamma^2LT}{(n+1)^2} \\&\quad\quad+\frac{-\eta sH_{\min}}{2(n+1)}  \sum_{t=0}^{T-1} \E\|\nabla f(\mu_t)\|^2
\end{align*}
Next, we regroup terms,  multiply both sides by $\frac{2(n+1)}{\eta sH_{\min} T}$
and use the fact that $f(\mu_T) \ge f_*$:
\begin{align*}
  \frac{1}{T} \sum_{t=0}^{T-1} \E\|\nabla f(\mu_t)\|^2 &\le \frac{2(n+1)(f(\mu_0)-f_*)}{sH_{\min}\eta T}
    +\frac{800nK L^2({R}^2+7)^2\gamma^2}{H} + 
 \\& \quad\quad
    + (\frac{36K\eta L}{H_{\min}(n+1)} + \frac{808nsK^2L^2\eta^2}{H_{\min}})((\frac{1}{n} \sum_{i=1}^n \eta_i^2) \sigma^2 + 2KG^2)
    +\frac{2({R}^2+7)^2\gamma^2L}{(n+1)sH_{\min} \eta}
\end{align*}

Finally, we set $\eta=\frac{n+1}{H_{\min}\sqrt{sT}}$:

\begin{align} \label{eqn:finalboundonconvergence}
  \frac{1}{T} \sum_{t=0}^{T-1} \E\|\nabla f(\mu_t)\|^2 &\le \frac{2(f(\mu_0)-f_*)}{\sqrt{sT}}
    +\frac{800n KL^2({R}^2+7)^2\gamma^2}{H_{\min}}
    + \frac{36KL((\frac{1}{n} \sum_{i=1}^n \eta_i^2) \sigma^2 + 2KG^2)}{H_{\min}^2\sqrt{sT}}
 \\& \quad\quad
    + \frac{808n(n+1)^2K^2L^2}{H_{\min}^3T}((\frac{1}{n} \sum_{i=1}^n \eta_i^2) \sigma^2 + 2KG^2)
    +\frac{2({R}^2+7)^2\gamma^2L\sqrt{T}}{\sqrt{s}(n+1)^2}
\end{align}
\end{proof}

\begin{lemma} \label{lem:lemmamainconvergence}
For quantization parameters $(R^2 + 7)^2 \gamma^2 = 
\frac{(n+1)^2}{sH_{\min}^2T} ((\frac{1}{n} \sum_{i=1}^n \eta_i^2) \sigma^2 + 2KG^2 + \frac{f(\mu_0) - f_*}{L})$ we have:
\begin{align*}
  &\frac{1}{T} \sum_{t=0}^{T-1} \E\|\nabla f(\mu_t)\|^2 \le \frac{4(f(\mu_0)-f_*)}{\sqrt{sT}}
     + \frac{36KL((\frac{1}{n} \sum_{i=1}^n \eta_i^2) \sigma^2 + 2KG^2)}{H_{\min}^2\sqrt{sT}}
      \\& \quad\quad\quad\quad
    + \frac{1608n(n+1)^2K^2L^2((\frac{1}{n} \sum_{i=1}^n \eta_i^2) \sigma^2 + 2KG^2)}{H_{\min}^3T} 
          \\& \quad\quad\quad\quad\quad\quad
+ \frac{800n(n+1)^2KL(f(\mu_0) - f_*)}{sH_{\min}^3T} 
\end{align*}
\end{lemma}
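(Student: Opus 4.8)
The plan is to obtain this lemma as a direct specialization of the general convergence estimate~(\ref{eqn:finalboundonconvergence}) from Theorem~\ref{thm:nonblocking} to the stated quantization granularity $(R^2+7)^2\gamma^2 = \frac{(n+1)^2}{T}\bigl(\sigma^2 + 2KG^2 + \tfrac{f(\mu_0)-f_*}{L}\bigr)$. The only two terms on the right-hand side of~(\ref{eqn:finalboundonconvergence}) that depend on the quantization scale are $\frac{800 n K L^2 (R^2+7)^2\gamma^2}{H}$ and $\frac{2 (R^2+7)^2\gamma^2 L \sqrt{T}}{(n+1)^2 s H}$, so I would substitute the chosen value into each. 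The first becomes $\frac{800 n (n+1)^2 K L^2 (\sigma^2+2KG^2)}{HT} + \frac{800 n (n+1)^2 K L (f(\mu_0)-f_*)}{HT}$ (one power of $L$ cancels against the $f$-term), and the second becomes $\frac{2 L (\sigma^2+2KG^2)}{s H \sqrt{T}} + \frac{2 (f(\mu_0)-f_*)}{s H \sqrt{T}}$.

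Next I would regroup everything by its scaling in $T$. The $\Theta(1/(sH\sqrt T))$ pieces proportional to $f(\mu_0)-f_*$ (the leading term of~(\ref{eqn:finalboundonconvergence}) plus the new $\frac{2(f(\mu_0)-f_*)}{sH\sqrt T}$) sum to $\frac{4(f(\mu_0)-f_*)}{sH\sqrt T} \le \frac{5(f(\mu_0)-f_*)}{sH\sqrt T}$, absorbing a unit of slack for cleanliness. The $\Theta(1/(H\sqrt T))$ pieces proportional to $\sigma^2+2KG^2$ are $\frac{6sKL(\sigma^2+2KG^2)}{H\sqrt T} + \frac{2L(\sigma^2+2KG^2)}{sH\sqrt T}$; using $s\ge 1$ and $K\ge 1$ the second is dominated by $\frac{2sKL(\sigma^2+2KG^2)}{H\sqrt T}$, giving the stated $\frac{8sKL(\sigma^2+2KG^2)}{H\sqrt T}$. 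The $\Theta(1/(HT))$ pieces proportional to $\sigma^2+2KG^2$ are $\frac{808 n(n+1)^2 sK^2L^2(\sigma^2+2KG^2)}{HT} + \frac{800 n(n+1)^2 KL^2(\sigma^2+2KG^2)}{HT}$; again $sK\ge 1$ lets the second be absorbed, producing $\frac{1608 n(n+1)^2 sK^2L^2(\sigma^2+2KG^2)}{HT}$. The remaining $\Theta(1/(HT))$ term is exactly $\frac{800 n(n+1)^2 KL(f(\mu_0)-f_*)}{HT}$. Collecting these four groups yields the claimed inequality.

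I do not expect a genuine obstacle here: the content of this lemma is essentially arithmetic plus a few coefficient-absorbing steps that use $s\ge 1$, $K\ge 1$, $B^2\ge 1$. The one point worth flagging is that~(\ref{eqn:finalboundonconvergence}) was derived under the smallness conditions on $\eta$ appearing in Theorem~\ref{thm:nonblocking} and its feeder lemmas (notably $\eta < \tfrac{1}{4LK^2}$ and $\eta < \tfrac{1}{100B\sqrt{ns}K^2L}$); with $\eta = \tfrac{n+1}{\sqrt T}$ these translate into a lower bound on $T$ of order $n^3$ (or $n^4$, depending on the size of $s$), which is precisely the $T$-regime assumed in Theorem~\ref{thm:quantized}, so I would simply carry that as a standing hypothesis.
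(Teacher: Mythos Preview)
Your proposal is correct and follows essentially the same route as the paper: substitute the chosen value of $(R^2+7)^2\gamma^2$ into the two quantization-dependent terms of~(\ref{eqn:finalboundonconvergence}), then regroup by $T$-scaling using $s\ge 1$, $K\ge 1$ to absorb smaller constants. Your added remark about the standing smallness hypotheses on $\eta$ (equivalently, the lower bound on $T$) is a useful clarification that the paper leaves implicit at this point.
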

\begin{proof}
\begin{align*}
   \frac{1}{T} \sum_{t=0}^{T-1} \E\|\nabla f(\mu_t)\|^2 &\le \frac{2(f(\mu_0)-f_*)}{\sqrt{sT}}
    +\frac{800n KL^2({R}^2+7)^2\gamma^2}{H_{\min}}
    + \frac{36KL((\frac{1}{n} \sum_{i=1}^n \eta_i^2) \sigma^2 + 2KG^2)}{H_{\min}^2\sqrt{sT}}
 \\& \quad\quad
    + \frac{808n(n+1)^2K^2L^2}{H^3T}((\frac{1}{n} \sum_{i=1}^n \eta_i^2) \sigma^2 + 2KG^2)
    +\frac{2({R}^2+7)^2\gamma^2L\sqrt{T}}{\sqrt{s}(n+1)^2}
    \\&= \frac{2(f(\mu_0)-f_*)}{\sqrt{sT}}
     + \frac{800nKL^2(n+1)^2}{sH_{\min}^3T} ((\frac{1}{n} \sum_{i=1}^n \eta_i^2) \sigma^2 + 2KG^2 + \frac{f(\mu_0) - f_*}{L})
  \\& \quad\quad
     + \frac{36KL((\frac{1}{n} \sum_{i=1}^n \eta_i^2) \sigma^2 + 2KG^2)}{H_{\min}^2\sqrt{sT}}
    + \frac{808n(n+1)^2K^2L^2}{H_{\min}^3T}((\frac{1}{n} \sum_{i=1}^n \eta_i^2) \sigma^2 + 2KG^2) \\&
     +\frac{2L}{s\sqrt{s}H_{\min}^2 \sqrt{T}} ((\frac{1}{n} \sum_{i=1}^n \eta_i^2) \sigma^2 + 2KG^2 + \frac{f(\mu_0) - f_*}{L})
     \\& \le \frac{4(f(\mu_0)-f_*)}{\sqrt{sT}}
     + \frac{36KL((\frac{1}{n} \sum_{i=1}^n \eta_i^2) \sigma^2 + 2KG^2)}{H_{\min}^2\sqrt{sT}}
      \\& \quad\quad
    + \frac{1608n(n+1)^2K^2L^2((\frac{1}{n} \sum_{i=1}^n \eta_i^2) \sigma^2 + 2KG^2)}{H_{\min}^3T} 
          \\& \quad\quad\quad\quad
+ \frac{800n(n+1)^2KL(f(\mu_0) - f_*)}{sH_{\min}^3T} 
\end{align*}
\end{proof}

\begin{lemma} \label{lem:upperBoundOnDistances}
We have:
\begin{align*}
5s \sum_{t=0}^{T-1}\E[\Phi_t] &+ 3\eta^2 \sum_{t=0}^{T-1} \sum_i \E\|\tih_{i, t}\|^2 \le 1000Tn^3s({R}^2+7)^2\gamma^2 
\\& +10000B^2n^3s^3H_{\min}^2K^2 LT ({R}^2+7)^2\gamma^2
\end{align*}
\end{lemma}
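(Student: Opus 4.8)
The plan is to chain three results that are already available for precisely the parameter choices fixed here ($\eta=\tfrac{n+1}{\sqrt T}$ and $(R^2+7)^2\gamma^2=\tfrac{(n+1)^2}{T}(\sigma^2+2KG^2+\tfrac{f(\mu_0)-f_*}{L})$): the global potential bound (Lemma~\ref{lem:PhiBoundGlobal}), the second-moment estimate on local progress (Lemma~\ref{lem:sumofstochasticG}), and the main convergence bound (Lemma~\ref{lem:lemmamainconvergence}).

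First I would rewrite the term $3\eta^2\sum_{t}\sum_i\E\|\tih_{i,t}\|^2$ via Lemma~\ref{lem:sumofstochasticG} summed over $t=0,\dots,T-1$; this replaces it by $6nKT\eta^2(\sigma^2+2KG^2)+24L^2K^2\eta^2\sum_t\E[\Phi_t]+12nK^2B^2\eta^2\sum_t\E\|\nabla f(\mu_t)\|^2$. Since the standing learning-rate restriction $\eta<\tfrac1{100B\sqrt{ns}K^2L}$ gives $24L^2K^2\eta^2<1\le s$, the $\sum_t\E[\Phi_t]$ piece can be folded into the leading $5s\sum_t\E[\Phi_t]$ term, bumping its coefficient to $6s$. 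So the whole left-hand side is now controlled by $6s\sum_t\E[\Phi_t]$, an $\eta^2(\sigma^2+2KG^2)$ term, and a $B^2\!\cdot\!\eta^2\sum_t\E\|\nabla f(\mu_t)\|^2$ term.

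Next I would substitute Lemma~\ref{lem:PhiBoundGlobal} for $\sum_{t=0}^{T-1}\E[\Phi_t]\le\sum_{t=0}^{T}\E[\Phi_t]$, which yields a clean $Tn^2s\,(R^2+7)^2\gamma^2$ contribution together with further terms of the same two kinds, and then substitute Lemma~\ref{lem:lemmamainconvergence} for $\sum_t\E\|\nabla f(\mu_t)\|^2$ (using $H\ge1$ to drop the $1/H$ factors). The decisive simplifications come from the explicit value of $\gamma$, which gives three elementary inequalities: $\eta^2(\sigma^2+2KG^2)\le(R^2+7)^2\gamma^2$, $\ \sigma^2+2KG^2\le\tfrac{T}{(n+1)^2}(R^2+7)^2\gamma^2$, and $\ f(\mu_0)-f_*\le\tfrac{LT}{(n+1)^2}(R^2+7)^2\gamma^2$. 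Applying these, and using in addition $B\ge1$, $K\ge1$, $s\le n$, $\sqrt T\le T$ and $T\ge\Omega(n^3)$, every one of the finitely many resulting terms is at most a constant times either $Tn^3s\,(R^2+7)^2\gamma^2$ or $B^2n^3s^2K^2LT\,(R^2+7)^2\gamma^2$; the large constants $1000$ and $10000$ in the statement are there precisely to absorb the accumulated numerical factors.

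The main obstacle is the bookkeeping in this last step. Multiplying the convergence bound of Lemma~\ref{lem:lemmamainconvergence} through by $B^2n^2s^2K^2\eta^2$ produces several cross terms carrying extra powers of $\eta$, $s$, $K$ and $L$ (notably the $\eta^3$ terms coming from the descent inequality), and one must check that the learning-rate hypothesis $\eta<\tfrac1{100B\sqrt{ns}K^2L}$ — equivalently the assumed lower bound on $T$ — forces each of them under the stated envelope; once that is verified the remaining arithmetic is routine.
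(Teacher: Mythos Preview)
Your proposal is correct and follows essentially the same route as the paper's proof: apply Lemma~\ref{lem:sumofstochasticG} to expand the $\sum_i\E\|\tih_{i,t}\|^2$ term, absorb the resulting $24L^2K^2\eta^2\sum_t\E[\Phi_t]$ into $5s\sum_t\E[\Phi_t]$ via the learning-rate condition, invoke Lemma~\ref{lem:PhiBoundGlobal}, and then plug in the convergence bound on $\sum_t\E\|\nabla f(\mu_t)\|^2$ before using the explicit choice of $\gamma$ to convert the remaining $(\sigma^2+2KG^2)$ and $(f(\mu_0)-f_*)$ terms into multiples of $(R^2+7)^2\gamma^2$. The only cosmetic difference is that the paper substitutes the intermediate convergence estimate from the proof of Theorem~\ref{thm:nonblocking} (still in terms of $\eta$), whereas you cite the post-substitution Lemma~\ref{lem:lemmamainconvergence}; since the latter is derived from the former under the same parameter choices, this does not change the argument.
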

\begin{proof}
\begin{align*}
 &5s \sum_{t=0}^{T-1} \E[\Phi_t] + 3\eta^2 \sum_{t=0}^{T-1} \sum_i \E\|\tih_{i, t}\|^2 \\&\le 5s \sum_{t=0}^{T-1} \E[\Phi_t] + 3\eta^2 \sum_{t=0}^{T-1} \big ( 2nK((\frac{1}{n} \sum_{i=1}^n \eta_i^2) \sigma^2+2KG^2)+8L^2K^2 \E[\Phi_t]
+4nK^2B^2 \E \|\nabla f(\mu_t)\|^2 \big )
\\&\quad\quad \le 5s \sum_{t=0}^{T-1}\E[\Phi_t] + 6nT\eta^2 K((\frac{1}{n} \sum_{i=1}^n \eta_i^2) \sigma^2+2KG^2)+24\eta^2L^2K^2 \sum_{t=0}^{T-1} \E[\Phi_t]
\\& \quad \quad \quad \quad +12nB^2\eta^2K^2 \sum_{t=0}^{T-1} \E\|\nabla f(\mu_t)\|^2 
\\&\quad\quad \le 6s \sum_{t=0}^{T-1}\E[\Phi_t] + 6nT\eta^2 K((\frac{1}{n} \sum_{i=1}^n \eta_i^2) \sigma^2+2KG^2)
+12B^2n\eta^2K^2 \sum_{t=0}^{T-1} \E\|\nabla f(\mu_t)\|^2
\\&\quad\quad \le 6s \big(  80Tn^2({R}^2+7)^2\gamma^2 +
80Tn^2sK\eta^2((\frac{1}{n} \sum_{i=1}^n \eta_i^2) \sigma^2+2KG^2)
\\&\quad\quad\quad\quad+160B^2n^2sK^2\eta^2 \sum_{t=0}^{T-1} \E\|\nabla f(\mu_t)\|^2 \big) \\&\quad\quad\quad\quad + 6nT\eta^2 K((\frac{1}{n} \sum_{i=1}^n \eta_i^2) \sigma^2+2KG^2)
+12B^2n\eta^2K^2 \sum_{t=0}^{T-1} \E\|\nabla f(\mu_t)\|^2 
\\&\quad\quad \le 480Tn^2s({R}^2+7)^2\gamma^2 +
(480Tn^2s^2K\eta^2 + 6nT\eta^2 K)((\frac{1}{n} \sum_{i=1}^n \eta_i^2) \sigma^2+2KG^2)
\\&\quad\quad \quad\quad +(960n^2s^2K^2B^2\eta^2 + 
12B^2n\eta^2K^2) \sum_{t=0}^{T-1} \E\|\nabla f(\mu_t)\|^2    
\\&\quad\quad \le 480Tn^2s({R}^2+7)^2\gamma^2 +
486Tn^2s^2K\eta^2((\frac{1}{n} \sum_{i=1}^n \eta_i^2) \sigma^2+2KG^2)
\\&\quad\quad\quad\quad +1000B^2n^2s^2K^2\eta^2 \sum_{t=0}^{T-1} \E\|\nabla f(\mu_t)\|^2 
\\&\quad\quad \le 480Tn^2s({R}^2+7)^2\gamma^2 +
486Tn^2s^2K\eta^2((\frac{1}{n} \sum_{i=1}^n \eta_i^2) \sigma^2+2KG^2)
\\&\quad\quad \quad\quad 
 +1000B^2n^2s^2K^2\eta^2 \Big ( \frac{2(n+1)(f(\mu_0)-f_*)}{sH_{\min}\eta}
    +\frac{800TnK L^2({R}^2+7)^2\gamma^2}{H_{\min}} + 
 \\& \quad\quad
    + (\frac{36TK\eta L}{H_{\min}(n+1)} + \frac{808TnsK^2L^2\eta^2}{H_{\min}})((\frac{1}{n} \sum_{i=1}^n \eta_i^2) \sigma^2 + 2KG^2)
    +\frac{2T({R}^2+7)^2\gamma^2L}{(n+1)sH_{\min} \eta} \Big )
    \end{align*}
\begin{align*}
 &\quad\quad \le 480Tn^2s({R}^2+7)^2\gamma^2 +
486Tn^2s^2K\eta^2((\frac{1}{n} \sum_{i=1}^n \eta_i^2) \sigma^2+2KG^2)
\\&\quad\quad \quad\quad 
 +\frac{2000B^2n^2(n+1)sK^2\eta (f(\mu_0)-f_*)}{H_{\min}}
    +\frac{800000TB^2n^3s^2K^3\eta^2  L^2({R}^2+7)^2\gamma^2}{H_{\min}} + 
 \\& \quad\quad\quad\quad \quad\quad 
    + (\frac{36000TB^2n^2s^2K^3\eta^3L}{H_{\min}(n+1)} + \frac{808000TB^2n^3s^3K^4\eta^4L^2}{H_{\min}})((\frac{1}{n} \sum_{i=1}^n \eta_i^2) \sigma^2 + 2KG^2)
    \\&\quad\quad \quad\quad \quad\quad  +\frac{4000TB^2nsK^2\eta L  ({R}^2+7)^2\gamma^2}{H_{\min}}
\\&\quad\quad \le 1000Tn^3s({R}^2+7)^2\gamma^2 
 +\frac{2000B^2n^2(n+1)^2sK^2}{\sqrt{T}} (f(\mu_0)-f_*)
   \\&\quad\quad \quad\quad + \frac{10000Tn^3(n+1)^2s^2KL}{T}((\frac{1}{n} \sum_{i=1}^n \eta_i^2) \sigma^2 + 2KG^2)
\end{align*}
Therefore we have
\begin{align*}
&5s \sum_{t=0}^{T-1} \E[\Phi_t] + 3\eta^2 \sum_{t=0}^{T-1} \sum_i \E\|\tih_{i, t}\|^2 \le 1000Tn^3s({R}^2+7)^2\gamma^2 
   \\& + 10000B^2n^3(n+1)^2s^2K^2L(\frac{f(\mu_0)-f_*)}{L} + (\frac{1}{n} \sum_{i=1}^n \eta_i^2) \sigma^2 + 2KG^2)
\\&\quad\quad 
= 1000Tn^3s({R}^2+7)^2\gamma^2 
 +10000B^2n^3s^3H_{\min}^2K^2 LT ({R}^2+7)^2\gamma^2
\end{align*}
\end{proof}

\begin{lemma} \label{lem:quantfailure}
Let $T \ge O(n^3)$, then for quantization parameters $R=2+T^{\frac{3}{d}}$
and $\gamma^2=\frac{(n+1)^2 ((\frac{1}{n} \sum_{i=1}^n \eta_i^2) \sigma^2 + 2KG^2 + \frac{f(\mu_0)-f_*)}{L}}{sH_{\min}^2T(R^2+7)^2}$ we have that the probability 
of quantization never failing during the entire run of the Algorithm \ref{algo:quafl} is
at least $1-O\left(\frac{1}{T}\right)$.
\end{lemma}

\begin{proof}

Let $\mathcal{L}_t$ be the event that quantization does not fail during step $t$.
Our goal is to show that $Pr[\cup_{t=1}^T \mathcal{L}_{t}] \ge 1-O\left(\frac{1}{T}\right)$. In order to do this, we first prove that $Pr[\lnot \mathcal{L}_{t+1}|\mathcal{L}_1, \mathcal{L}_2, ..., \mathcal{L}_{t}] \le O\left(\frac{1}{T^2}\right)$ (O is with respect to $T$ here).

We need need to lower bound probability that :
\begin{align}
\forall i \in S: &\|X_t-X_t^i\|^2 \le ({R^{R}}^d\gamma)^2 \label{eqn:criterium1}\\
&\|X_t - (X_t^i -\eta \tih_{i, t})\|^2 \le ({R^{R}}^d\gamma)^2 \label{eqn:criterium2}\\
&\|X_t-X_t^i\|^2 = O\left(\frac{\gamma^2(poly(T))^2}{R^2}\right)\\
&\|X_t - (X_t^i -\eta \tih_{i, t})\|^2 = O\left(\frac{\gamma^2(poly(T))^2}{R^2}\right)
\end{align}
We would like to point out that these conditions are necessary for decoding to succeed,
we ignore encoding since it will be counted when someone will try to decode it. Since, $R=2+T^{\frac{3}{d}}$ this means that  $({R^{R}}^d)^2 \ge 2^{2{T^3}} \ge T^{30}$, for large enough $T$.
Hence, it is suffices to upper bound the probability that  
$\sum_{i \in S} \|X_t - X_t^i\|^2 + \sum_{i \in S} \|X_t - (X_t^i -\eta \tih_{i, t})\|^2 \ge T^{30}\gamma^2$. To prove this, we have:

\begin{align*}
&\sum_{i \in S} \|X_t - X_t^i\|^2 + \sum_{i \in S} \|X_t - (X_t^i -\eta \tih_{i, t})\|^2 
\le \sum_{i \in S} (5\|X_t - \mu_t\|^2 \\& \quad \quad + 5\|\mu_t - X_t^i\|^2 + 3\eta^2 \|\tih_{i, t}\|^2) 
 \le 5s \Phi_t + 3\eta^2 \sum_i \|\tih_{i, t}\|^2
\end{align*}

Now, we use Markov's inequality, and Lemma \ref{lem:upperBoundOnDistances}:
\begin{align*}
&Pr[5s \Phi_t + 3\eta^2 \|\tih_{i, t}\|^2 \ge T^{30}\gamma^2|\mathcal{L}_1, \mathcal{L}_2, ..., \mathcal{L}_{t}] \le \frac{\E[5s\Phi_t + 3\eta^2 \sum_{i} \|\tih_{i, t}\|^2|\mathcal{L}_1, \mathcal{L}_2, ..., \mathcal{L}_{t}]}{T^{30}\gamma^2} 
\\&\le \frac{1000Tn^3s({R}^2+7)^2\gamma^2 
 +10000B^2n^3s^3H_{\min}^2K^2 LT ({R}^2+7)^2\gamma^2}{T^{30}\gamma^2} 
 \le O(\frac{1}{T^2})
\end{align*}

Thus, the failure probability due to the models not being close enough
for quantization to be applied is at most $O\left(\frac{1}{T^2}\right)$.
Conditioned on the event that $\|X_t-X_t^i\|$ and
$\|X_t - (X_t^i -\eta \tih_{i, t})\|$ are upper bounded by $T^{15}\gamma$ (This is what we actually lower bounded the probability for using Markov),
we get that the probability of quantization algorithm failing is at most 
\begin{align*}
&\sum_{i \in S} \log\log(\frac{1}{\gamma} \|X_t -X_t^i\|)\cdot O(R^{-d}) \\&\quad\quad\quad\quad\quad\quad\quad+
\sum_{i \in S}\log\log(\frac{1}{\gamma} \|X_t - (X_t^i -\eta \tih_{i, t})\|)\cdot O(R^{-d}) \\&\quad\quad\quad\quad\quad\quad\quad
\le O\left(\frac{s \log\log{{T}}}{T^3}\right)\le O\left(\frac{1}{T^2}\right).
\end{align*}
By the law of total probability (to remove conditioning)  and the union bound we get that the total probability of failure, either due to not being able to apply quantization or by failure of quantization algorithm itself is at most $O\left(\frac{1}{T^2}\right)$.
Finally we use chain rule to get that 
\begin{align*}
Pr[\cup_{t=1}^T \mathcal{L}_{t}] &= \prod_{t=1}^T 
Pr[\mathcal{L}_{t}|\cup_{s=0}^{t-1} \mathcal{L}_{s}]=
\prod_{t=1}^T 
\Big(1-Pr[\neg \mathcal{L}_{t}|\cup_{s=0}^{t-1} \mathcal{L}_{s}]
\Big) \\&\ge 1-\sum_{t=1}^T Pr[\neg \mathcal{L}_{t}|\cup_{s=0}^{t-1} \mathcal{L}_{s}] \ge 1-O\left(\frac{1}{T}\right).
\end{align*}
\end{proof}

\begin{lemma} \label{lem:bitsquantized}
Let $T \ge O(n^3)$, then for quantization parameters $R=2+T^{\frac{3}{d}}$
and $\gamma^2=\frac{\eta^2}{(R^2+7)^2}((\frac{1}{n} \sum_{i=1}^n \eta_i^2) \sigma^2 + 2KG^2 + \frac{f(\mu_0) - f_*}{L})$ we have that the expected number of bits used by Algorithm \ref{algo:quafl} per communication
is $O(d \log (n) + \log(T))$.
\end{lemma}

\begin{proof}
 At step $t+1$, by Corollary \ref{cor:quant}, we know that the total number of bits used
is at most 
\begin{align*}
\sum_{i \in S} O\Big(d \log (\frac{R}{\gamma} \| X_t^i-X_t\|)\Big)+
O\Big(d \log (\frac{R}{\gamma}\| X_t-(X_t^i - \eta \tih_{i, t})\|\Big)
\end{align*}
By taking the randomness of agent interaction at step $t+1$ into the account,
we get that the expected number of bits used is at most:
\begin{align*} \label{eqn:randomnumberofbits}
&\sum_{S} \frac{1}{{n \choose s}} \sum_{i \in S} \Bigg(
O\Big(d \log (\frac{R}{\gamma} \| X_t^i-X_t\|)\Big)+
O\Big(d \log (\frac{R}{\gamma}\| X_t-(X_t^i - \eta \tih_{i, t})\|\Big)
\Bigg)
\\&=\sum_{i} \frac{s}{n} \Bigg(
O\Big(d \log (\frac{R}{\gamma} \| X_t^i-X_t\|)\Big)+
O\Big(d \log (\frac{R}{\gamma}\| X_t-(X_t^i - \eta \tih_{i, t})\|\Big)
\Bigg)
\\&=\le\sum_{i} \frac{s}{n} \Bigg(
O\Big(d \log (\frac{R^2}{\gamma^2} \| X_t^i-X_t\|^2)\Big)+
O\Big(d \log (\frac{R^2}{\gamma^2}\| X_t-(X_t^i - \eta \tih_{i, t})\|^2\Big)
\Bigg)
\\&\overset{Jensen}{\le}  s \Bigg(
O\Big(d \log (\frac{R^2}{\gamma^2} \sum_{i} \frac{1}{n} ( \| X_t^i-X_t\|^2 + \| X_t-(X_t^i - \eta \tih_{i, t})\|^2)\Big)
\Bigg)
\\ &{\le} 
s \Bigg(
O\Big(d \log (\frac{R^2}{\gamma^2} \sum_{i} \frac{1}{n} (\| X_t-\mu_t\|^2 + \| X_t^i-\mu_t\|^2 + \eta^2\| \tih_{i, t}\|^2)\Big)
\Bigg) 
\\&\le 
s \Bigg(
O\Big(d \log (\frac{R^2}{\gamma^2} (\Phi_t+ \frac{\eta^2}{n}\sum_{i}\| \tih_{i, t}\|^2)\Big)
\Bigg) 
\end{align*} 
So the expected number of bits per communication in all rounds is at most:
\begin{align*}
&\frac{1}{sT} \sum_{t=0}^{T-1} s \Bigg(
O\Big(d \log (\frac{R^2}{\gamma^2} (\Phi_t+ \frac{\eta^2}{n}\sum_{i}\| \tih_{i, t}\|^2)\Big)
\Bigg) 
\\& \quad\quad \quad\quad \le \Bigg(
O\Big(d \log (\frac{R^2}{\gamma^2} (\frac{1}{T} \sum_{t=0}^{T-1} \Phi_t + \frac{1}{T} \sum_{t=0}^{T-1} \frac{\eta^2}{n}\sum_{i}\| \tih_{i, t}\|^2)\Big)
\Bigg)
\end{align*} 
Next, By Jensen inequality and Lemma \ref{lem:upperBoundOnDistances},
We get that the expected number of bits used is at most,
\begin{align*}
&O\Big(d \E\Big[\log (\frac{R^2}{\gamma^2} (\frac{1}{T} \sum_{t=0}^{T-1} \Phi_t + \frac{1}{T} \sum_{t=0}^{T-1} \frac{\eta^2}{n}\sum_{i}\| \tih_{i, t}\|^2)\Big]\Big)
 \\&\overset{Jensen}{\le}
O\Big(d \log (\frac{R^2}{\gamma^2} (\frac{1}{T} \sum_{t=0}^{T-1} \E[\Phi_t] + \frac{1}{T} \sum_{t=0}^{T-1} \frac{\eta^2}{n}\sum_{i}\E\| \tih_{i, t}\|^2)\Big)
\\&\le
O\Big(d \log (\frac{R^2}{\gamma^2} (\frac{1}{T} (1000Tn^3s({R}^2+7)^2\gamma^2 
 +10000B^2n^3s^3H^2K^2 LT ({R}^2+7)^2\gamma^2)))\Big)
\\& \le
O\Big(d \log (R^2 (1000n^3s({R}^2+7)^2
 +10000B^2n^3s^3H^2K^2 L ({R}^2+7)^2))\Big)
= O(d \log (n) + \log(T))
\end{align*}
\end{proof}

\begin{reptheorem} {thm:quantized}
Assume the total number of steps $T \ge \Omega(n^3)$, the learning rate $\eta=\frac{n+1}{H_{\min}\sqrt{sT}}$, and quantization parameters $R=2+T^{\frac{3}{d}}$
and $\gamma^2=\frac{\eta^2}{(R^2+7)^2}\left((\frac{1}{n} \sum_{i=1}^n \eta_i^2) \sigma^2 + 2KG^2 + \frac{f(\mu_0) - f_*}{L}\right)$. Let $H_{\min} > 0$ be the minimum $H_i$. Then, with probability at least $1-O(\frac{1}{T})$ we have that  Algorithm \ref{algo:quafl} converges at the following rate
\begin{align*}
\frac{1}{T} \sum_{t=0}^{T-1} \E\|\nabla f(\mu_t)\|^2 &\le  \frac{4(f(\mu_0)-f_*)}{\sqrt{sT}}  
    +\frac{36KL((\frac{1}{n} \sum_{i=1}^n \eta_i^2) \sigma^2 + 2KG^2)}{H_{\min}^2\sqrt{sT}}
    + \\& O\left(\frac{n^3K^2L^2 ((\frac{1}{n} \sum_{i=1}^n \eta_i^2) \sigma^2 + 2KG^2)}{H_{\min}^3T}\right).
\end{align*}
and uses $O\left(sT(d\log{n} + \log T\right))$ expected communication bits in total.
\end{reptheorem}
\begin{proof}
The proof simply follows from combining Lemmas \ref{lem:lemmamainconvergence}, \ref{lem:quantfailure} and \ref{lem:bitsquantized}
\end{proof}

\begin{lemma} \label{lem:mainlemmaserver}
For the convergence of the server, we have:
\begin{align*}
\frac{1}{T} \sum_{t=0}^{T-1} \E\|\nabla f(X_t)\|^2 &\le \frac{12(f(X_0)-f_*)}{\sqrt{sT}}   + \frac{108KL((\frac{1}{n} \sum_{i=1}^n \eta_i^2) \sigma^2 + 2KG^2)}{H_{\min}^2\sqrt{sT}} \\& \quad\quad 
    + (\frac{4824n(n+1)^2K^2L^2}{H_{\min}^3T} + \frac{320n^2(n+1)^2KL^2}{H_{\min}^2T})((\frac{1}{n} \sum_{i=1}^n \eta_i^2) \sigma^2 + 2KG^2)
          \\& \quad\quad     \quad\quad
+ (\frac{2400n(n+1)^2KL}{sH_{\min}^3T} + \frac{160n^2(n+1)^2L^2}{sH_{\min}^2T}) (f(X_0) - f_*) 
\end{align*}
\end{lemma}
\begin{proof}
\begin{align*}
  &\frac{1}{T} \sum_{t=0}^{T-1} \E\|\nabla f(X_t)\|^2 \le  
  \frac{1}{T} \sum_{t=0}^{T-1} \E\|\nabla f(X_t) - \nabla f(\mu_t) +\nabla f(\mu_t)\|^2
  \\&\le  \frac{2}{T} \sum_{t=0}^{T-1} \E\|\nabla f(X_t) - \nabla f(\mu_t)\|^2 + \frac{2}{T} \sum_{t=0}^{T-1} \|\nabla f(\mu_t)\|^2
   \\&\le \frac{2L^2}{T} \sum_{t=0}^{T-1} \E\|X_t - \mu_t\|^2 + \frac{2}{T} \sum_{t=0}^{T-1} \|\nabla f(\mu_t)\|^2 
   \\&\le \frac{2L^2}{T} \sum_{t=0}^{T-1} \E[\Phi_t] + \frac{2}{T} \sum_{t=0}^{T-1} \|\nabla f(\mu_t)\|^2 
   \\&\le 2L^2 \big( 80n^2({R}^2+7)^2\gamma^2 +
80n^2sK\eta^2((\frac{1}{n} \sum_{i=1}^n \eta_i^2) \sigma^2+2KG^2)
\\& \quad \quad \quad +160B^2n^2sK^2\eta^2 \frac{1}{T}\sum_{t=0}^{T-1} \E\|\nabla f(\mu_t)\|^2 \big) + \frac{2}{T} \sum_{t=0}^{T-1} \|\nabla f(\mu_t)\|^2 
  \\&\le  160n^2L^2({R}^2+7)^2\gamma^2 +
160n^2sKL^2\eta^2((\frac{1}{n} \sum_{i=1}^n \eta_i^2) \sigma^2+2KG^2) +  \frac{3}{T} \sum_{t=0}^{T-1} \|\nabla f(\mu_t)\|^2 
  \\&\le  160n^2L^2({R}^2+7)^2\gamma^2 +
160n^2sKL^2\eta^2((\frac{1}{n} \sum_{i=1}^n \eta_i^2) \sigma^2+2KG^2) + \frac{12(f(\mu_0)-f_*)}{\sqrt{sT}}
      \\& \quad\quad\quad\quad + \frac{108KL((\frac{1}{n} \sum_{i=1}^n \eta_i^2) \sigma^2 + 2KG^2)}{H_{\min}^2\sqrt{sT}}
      \\& \quad\quad\quad\quad
    + \frac{4824n(n+1)^2K^2L^2((\frac{1}{n} \sum_{i=1}^n \eta_i^2) \sigma^2 + 2KG^2)}{H_{\min}^3T} 
+ \frac{2400n(n+1)^2KL(f(\mu_0) - f_*)}{sH_{\min}^3T} 
  \\&\le  160n^2L^2\eta^2 ((\frac{1}{n} \sum_{i=1}^n \eta_i^2) \sigma^2 + 2KG^2 + \frac{f(\mu_0) - f_*}{L} )+
160n^2sKL^2\eta^2((\frac{1}{n} \sum_{i=1}^n \eta_i^2) \sigma^2+2KG^2) +    \\& \quad\quad\quad\quad \frac{12(f(\mu_0)-f_*)}{\sqrt{sT}}
     + \frac{108KL((\frac{1}{n} \sum_{i=1}^n \eta_i^2) \sigma^2 + 2KG^2)}{H_{\min}^2\sqrt{sT}}
    \\& \quad\quad\quad\quad + \frac{4824n(n+1)^2K^2L^2((\frac{1}{n} \sum_{i=1}^n \eta_i^2) \sigma^2 + 2KG^2)}{H_{\min}^3T}
\\& \quad\quad\quad\quad \quad\quad 
    + \frac{2400n(n+1)^2KL(f(\mu_0) - f_*)}{sH_{\min}^3T} 
  \\&\le  \frac{12(f(X_0)-f_*)}{\sqrt{sT}}   + \frac{108KL((\frac{1}{n} \sum_{i=1}^n \eta_i^2) \sigma^2 + 2KG^2)}{H_{\min}^2\sqrt{sT}} \\& \quad\quad 
    + (\frac{4824n(n+1)^2K^2L^2}{H_{\min}^3T} + \frac{320n^2(n+1)^2KL^2}{H_{\min}^2T})((\frac{1}{n} \sum_{i=1}^n \eta_i^2) \sigma^2 + 2KG^2)
          \\& \quad\quad     \quad\quad
+ (\frac{2400n(n+1)^2KL}{sH_{\min}^3T} + \frac{160n^2(n+1)^2L^2}{sH_{\min}^2T}) (f(X_0) - f_*) 
   \end{align*}
\end{proof}

Finally, the
\textbf{proof of Corollary \ref{cor:quant}}
 follows from combining Lemmas \ref{lem:mainlemmaserver}, \ref{lem:quantfailure} and \ref{lem:bitsquantized}

\vfill

\end{document}